%% file: main.tex
\documentclass[11pt]{article}

\usepackage[margin=1in]{geometry}
\usepackage{times}
\usepackage{natbib}
\usepackage{hyperref}
\usepackage{url}
\usepackage{amsmath,amssymb,amsfonts}
\usepackage{graphicx}
\usepackage{booktabs}

\input{macro}

\title{Geometry-Aware Discretization Error of \\ Diffusion Models}
\author{
Samuel Hurault \\
Univ. Gustave Eiffel, CNRS, LIGM \\
\texttt{samuel.hurault@univ-eiffel.fr} 
\and
Thomas Moreau \\
Univ. Paris-Saclay, Inria, CEA \\
\texttt{thomas.moreau@inria.fr} 
\and 
Gabriel Peyré \\
CNRS, ENS, PSL Université \\
\texttt{gabriel.peyre@ens.fr} 
}

\date{}

\begin{document}

\maketitle

\begin{abstract}
Practical diffusion sampling is a numerical approximation problem: under a fixed inference budget, one must simulate a reverse-time ODE or SDE using only a limited number of denoising steps, so discretization error is often the dominant source of error. Existing non-asymptotic analyses provide convergence guarantees, but are typically too loose and too insensitive to diffusion parameters to guide practical design: broad families of schedules receive the same rates, which depend on coarse worst-case quantities such as the dimension or the drift Lipschitz constant. We take a less ambitious but more informative route. In the exact-score setting, we derive first-order asymptotic expansions of the Euler-Maruyama weak and Fréchet discretization errors. These formulas hold for general smooth reverse diffusions and become fully explicit under Gaussian data. They show how discretization error adapts to the geometry of the data through the covariance spectrum, and how this geometry interacts with key diffusion parameters, including the diffusion schedules and the diffusion-term coefficient. This yields tractable objectives for geometry-aware parameter optimization. Finally, we show that the qualitative predictions of the Gaussian formulas remain robust across diffusion sampling problems with different geometries, including image generation on different datasets and image posterior sampling.
\end{abstract}

\section{Introduction}
Diffusion and score-based generative models have become a central paradigm for high-dimensional generation~\citep{ho2020denoising, song2020score, dhariwal2021diffusion, lipman2022flow}. Their main practical limit, however, remains sampling time: generating one sample requires integrating a reverse-time ODE or SDE through many sequential denoising steps, each involving an expensive neural network evaluation. Under a fixed inference budget, one is therefore forced to use a coarse time-discretization of the dynamics, and the resulting discretization bias quickly becomes a dominant source of sampling error. The necessity to reduce the number of denoising steps also motivates recent families of fast-generative models~\citep{song2023consistency, liu2022flow, deng2026generative}.

Coarse discretization can favor different parameterizations of the sampling dynamics, which should therefore be adapted to reduce discretization error. This practical question is complicated by the size of the parameter space. A diffusion sampler is specified by many choices: the shape of the noise schedule (e.g. polynomial, exponential), the rescaling schedule (e.g., variance preserving, variance exploding), the level of stochasticity or the stepsize schedule~\citep{karras2022elucidating, song2020score, nichol2021improved}. In practice, guidance on how to tune these parameters remains largely empirical, relying on optimization for specific models and datasets rather than on predictive theory. Existing approaches for coarse-budget adaptation rely on hand-designed grids~\citep{lu2022dpm, lu2023dpm}, adaptive stepsize heuristics~\citep{jolicoeur2021gotta, gao2023fast, xue2024accelerating}, or data-driven optimization~\citep{watson2022learning, wang2023learning, li2023autodiffusion, xia2023towards}. %Although effective in practice, these strategies remain primarily dataset-specific and provide limited guidance for predicting optimal parameter choices a priori.

On the theoretical side, most existing analyses of discrete diffusion sampling provide non-asymptotic convergence guarantees~\citep{de2022convergence, lee2023convergence, li2024towards, benton2023nearly, conforti2025kl, beyler2025convergence}. This typically yields bounds that describe how the discretization error scales with the number of steps, the ambient dimension, or regularity parameters of the score. These results are important for consistency and complexity, but they are much less informative for tuning diffusion parameters:  different parameterizations often share the same rates, while the constants are driven by coarse worst-case quantities such as the dimension or the drift Lipschitz constant~\citep{chen2025lipschitz}. As a consequence, current theory does not sharply explain which schedules or stochasticity levels should be preferred under small sampling budget, and in particular how these choices depend on the specific geometrical properties of the target distribution.

In this paper, we study the Euler--Maruyama discretization error of diffusion sampling in the exact-score setting, with formulas that make the dependence on diffusion parameters explicit. Rather than seeking non-asymptotic control of the full sampling error, we derive small-stepsize expansions for the errors on the first two moments and the resulting Fr\'echet distance. This restricted objective yields tractable formulas for analyzing how diffusion parameters interact with target geometry.

We next specialize these expansions to Gaussian data, for which the discretization error decouples along covariance eigenspaces. Although the Gaussian case is not our final objective, it yields explicit parameter-design criteria and qualitative predictions that we later compare against non-Gaussian experiments. It is a natural and well-motivated theoretical testbed: recent empirical and theoretical works~\citep{wang2024unreasonable, wang2023hidden, li2024understanding} suggest that Gaussian or nearly linear score models can provide insight into more general regimes. \citet{pierret2024diffusion} already observed this eigenspace decoupling, but do not derive error formulas suitable for parameter optimization. Related scheduler-design approaches also use Gaussian simplifications, either through alternative criteria such as the drift Lipschitz constant~\citep{chen2025lipschitz}, which depends only on the smallest covariance eigenvalue, or under isotropic covariance assumptions~\citep{sabour2024align}. In contrast, our analysis characterizes how discretization error depends on a full anisotropic covariance spectrum.

Our contributions are the following: 
\begin{itemize}[leftmargin=2.em]
  \item For general data distributions, we establish a first-order small-stepsize weak-error expansion for Euler-Maruyama discretization. Specializing this expansion yields first-order expressions for the mean and covariance errors, which can be combined into a corresponding Fréchet error.

  \item For Gaussian data, we make these formulas fully explicit, obtaining closed-form expressions that expose how discretization error depends on the sampling parameters -- namely, the diffusion-term coefficient and the noise and rescaling schedules -- as well as on the covariance spectrum of the target distribution.
  
  \item We use these formulas to derive geometry-aware design principles for the sampling parameters. In particular, we show that the optimal diffusion-term coefficient decreases linearly with the stepsize, with a spectrum-dependent slope; characterize how the optimal rescaling schedule depends on the scale and decay of the covariance power spectrum; and explain why polynomial noise schedules are naturally adapted to anisotropic spectra. We validate these predictions in real image-sampling experiments across different datasets and posterior distributions, showing that the optimal parameters vary as theory predicts from the corresponding covariance spectra.
\end{itemize}

The paper is organized as follows. Section~\ref{sec:background} recalls the diffusion framework, Section~\ref{sec:general_error} derives the general first-order expansions, and Section~\ref{sec:gaussian} gives their explicit Gaussian instantiation. Section~\ref{sec:opt_params} then uses these formulas for parameter design and compares the predictions with image sampling.

 \vspace{-0.1cm}
\section{Preliminaries on diffusion sampling}
\label{sec:background}
\vspace{-0.1cm}
Diffusion models generate samples from a target distribution $p_\data$ by reversing a forward noising process. We use a unified parameterization with a scale schedule $\eta_t$ and a noise schedule $\sigma_t$, and define the forward process by \vspace{-0.15cm}
\begin{equation} \label{eq:forward_SDE}
  X_0 \sim p_{\mathrm{data}}, \quad \text{for $t<T$},\quad
  dX_t = \frac{\dot \eta_t}{\eta_t} X_t\, dt
  + \eta_t \sqrt{2 \sigma_t \dot \sigma_t}\, dW_t\enspace.
\end{equation}
The marginal distribution of this linear SDE is~\citep{davis1977linear} $
p_t \propto p_{\mathrm{data}}\!\left(\frac{1}{\eta_t}\cdot\right)
* \mathcal{N}\left(0,(\eta_t\sigma_t)^2\id\right)
$. Hence, if $\mu_\data$ and $\Sigma_\data$ denote the mean and covariance of $p_\data$,
\begin{equation*}
  \mu_t \eqdef \EE[X_t] = \eta_t \mu_\data,
  \qquad
  \Sigma_t \eqdef \cov[X_t] = \eta_t^2(\Sigma_\data+\sigma_t^2\id).
\end{equation*}
This unified notation recovers several standard scheduler choices. Variance Preserving diffusion corresponds to $\eta_t = (1+\sigma_t^2)^{-1/2}$, for which the forward process converges to \( \mathcal{N}(0, \mathrm{Id}) \) as ${T \to \infty}$. VE diffusion sets \( \eta_t = 1 \), for which the variance grows with time. The noise schedule~$\sigma_t$ typically starts at~$\sigma_0 = 0$ (or close to $0$) and increases sharply as $t$ approaches $T$, reaching a large terminal value $\sigma_T$, with geometric~\citep{song2020score} or polynomial~\citep{karras2022elucidating} parametrization.  Flow matching and stochastic interpolants~\citep{lipman2022flow, albergo2023stochastic} also fall within this formulation, with $\eta_t = T-t$ and $\sigma_t = t/(T-t)$ for linear interpolation. More generally, we use the following assumptions on the schedules $(\sigma_t,\eta_t)$:
\begin{ass}\label{ass:regularity_sched} 
We assume, denoting $\beta_t \eqdef \frac{\dot \eta_t}{\eta_t}$ and $\xi_t \eqdef \eta_t^2 \dot \sigma_t \sigma_t$: \vspace{-0.15cm}
\begin{enumerate}
\item[(i)] $\eta,\sigma \in \mathcal C([0,T]) \cap \mathcal C^\infty((0,T])$, $\eta_t>0$, $\sigma_t\ge0$, $\eta_0=1$, $\sigma_0=0$ and $\beta_t, \xi_t \in \mathcal{C}^\infty[0,T]$.
\item[(ii)] \textbf{(Optional)} Assume for $t \to 0$: $\beta_0 = 0$ and $\xi_0 = 0$, and for $t = T$: \vspace{-0.2cm}
\[
\text{As $\sigma_T=\sigma_{\max}\to\infty$,} \qquad \sigma_T^{-(1+\alpha)}\frac{\dot\eta_T}{\eta_T}\to0,
\qquad
\sigma_T^{-2\alpha}\left(\frac{\dot\eta_T}{\eta_T}+\frac{\dot\sigma_T}{\sigma_T}\right)\to0.
\]
\end{enumerate}
\end{ass} \vspace{-0.25cm}
Assumption~\ref{ass:regularity_sched}(i) ensures integrability of~\eqref{eq:forward_SDE}. Assumption~\ref{ass:regularity_sched}(ii) is only used to simplify the Gaussian error formulas in Proposition~\ref{prop:mean_cov_Gaussian}. In this assumption, the time horizon \(T<\infty\) is kept fixed, and the limit is taken with respect to the terminal noise level \(\sigma_T=\sigma_{\max}\to\infty\). It is satisfied, for example, by VP schedules and by VE schedules when \(\alpha>0\). We keep it optional because it may fail in some settings, such as flow matching with linear interpolation.

\paragraph{Continuous diffusion sampling}

Sampling is performed by reversing the forward noising process~\eqref{eq:forward_SDE}. Following~\mbox{\citet{song2020score}}, we consider the reverse-time dynamics started from $p_T$, with marginals \( q_t \eqdef p_{T-t} \). As detailed in Appendix~\ref{app:reverse_SDE}, this backward process follows the SDE:
\begin{equation} \label{eq:backward_SDE}
  Y_0 \sim p_T, \quad  \text{for $t \in [0,T]$,} \quad dY_t = v_t(Y_t) dt  + \sqrt{2a_t} \, dW_t.
\end{equation}
where the drift term $v_t$ and the diffusion term $a_t$ satisfy: %\vspace{-0.2cm}
\begin{equation*} 
\left \{
\begin{aligned}
  v_t(x) &=  - \frac{\dot \eta_{T-t}}{\eta_{T-t} } x  + (1 + \alpha_t) \eta_{T-t}^2 \sigma_{T-t} \dot \sigma_{T-t}\nabla \log p_{T-t}(x) \\
  a_t &= \alpha_t \, \eta_{T-t}^2 \sigma_{T-t} \dot \sigma_{T-t} 
\end{aligned}
\right.
\end{equation*}
The parameter \( \alpha_t \geq 0 \) controls the magnitude of the diffusion term. When $\alpha_t = 0$, the SDE reduces to a non-stochastic ODE. In most of the diffusion sampling literature, $\alpha_t$ is fixed by default to $1$ or to~$0$~\citep{song2020score, ho2020denoising}. In the rest of the paper, we consider $\alpha_t = \alpha$ constant in time. 

\paragraph{Discretized diffusion sampling}
In practice, the reverse SDE cannot be simulated exactly and must be discretized. In this work, we study the Euler--Maruyama (EM) scheme, which is the standard discretization used in practical samplers, with $K$ iterations and constant step size ${\gamma = T / K}$:
\begin{align} \label{eq:disc}
 \! \! \! \! \hat Y_0 \sim q_0, \! \! \! \!\quad \text{for $k < K$} \text{ and } t_k \eqdef \gamma k, \quad \! \! \hat W_k \sim \mathcal{N}(0,\id), \! \! \quad \hat Y_{k+1} &= \hat Y_k + \gamma \, v_{t_k}(\hat Y_k) + \sqrt{2 \gamma a_{t_k}} \,  \hat W_k
\end{align}
We prove in Appendix~\ref{app:equivalence_stepsize_schedule} that using a time-dependent stepsize $\gamma_t$, instead of a constant $\gamma$, corresponds to a reparameterization of time in the schedules $\sigma_t$ and $\eta_t$. We therefore restrict ourselves to uniform-step discretizations and let the schedules absorb these time changes.

% Beyond discretization, sampling also incurs score-approximation and initialization errors. Since the score $\nabla \log p_t(x)$ is generally intractable, it is learned by a neural network; imperfect training induces drift errors that propagate along the reverse diffusion~\citep{hurault2025score, wu2025taking, chen2023score}. Here, we focus on the small-discretization-budget regime, where discretization error dominates this score-matching error. Moreover, initialization error is due to the fact that the reverse SDE is started before the forward process reaches its limiting distribution. In Appendix~\ref{app:init_error}, we show that, independently of the choice of the rescaling schedule $\eta_t$, this error vanishes as $\sigma_T \to \infty$. This large-terminal-noise regime is consistent with practical image-sampling diffusion models and is adopted in this paper (see Proposition~\ref{prop:mean_cov_Gaussian}). This contrasts with the discretization error studied here, which persists in that limit.

\paragraph{Other sources of error.}
Beyond discretization, sampling also suffers from score-approximation and initialization errors~\citep{hurault2025score, beyler2025convergence}. Our theoretical results are derived in the exact-score setting, and our experiments use small stepsizes, where discretization bias is expected to dominate. In Appendix~\ref{app:init_error}, we show that the initialization error vanishes as $\sigma_T\to\infty$, independently of $\eta_t$, whereas the discretization error studied here persists in this large-terminal-noise regime, which is adopted here (see Proposition~\ref{prop:mean_cov_Gaussian}).

%\section{Notations}

\section{First-order asymptotic control of the weak and Fréchet discretization errors}
\label{sec:general_error}

In this section, we derive a first-order expansion, for small stepsize $\gamma$, of the Euler--Maruyama discretization error associated with the SDE~\eqref{eq:backward_SDE}. We first give in Theorem~\ref{thm:general_weak} the expansion of the general weak error, defined for a test function $f$ by $\mathbb{E}[f(\hat Y_k)]-\mathbb{E}[f(Y_{t_k})]$.
We then discuss the ODE case, in which the error has a simple interpretation. Finally, in Corollary~\ref{cor:mean_cov}, we specialize the result to obtain first-order discretization errors for the mean and covariance.

The weak error in Theorem~\ref{thm:general_weak} is derived for a general drift $v_t$ and diffusion coefficient $a_t$. We do not aim to establish a minimal regularity framework here. Instead, we assume that $a_t$ and $v_t$ are smooth, with $a_t \geq 0$, that $v_t$ has at most linear growth, and that sufficiently many spatial and time derivatives of $v_t$ are bounded. These assumptions ensure, in particular, that the reverse SDE~\eqref{eq:backward_SDE} admits a unique strong solution and that the flow derivatives appearing in the theorem are well defined.

% We now introduce the stochastic calculus quantities appearing in the weak-error formula. Let $\Phi_{t,s}(x)$ be the value at time $t$ of the exact SDE~\eqref{eq:backward_SDE} started from $x$ at time $s$. We denote by $J_{t,s}(Y) \eqdef \nabla_x \Phi_{t,s}(x)|_{x=Y_s}$ its Jacobian along the trajectory, by $H_{t,s}(Y) \eqdef \nabla_x^2 \Phi_{t,s}(x)|_{x=Y_s}$ its Hessian, and by $\Delta_{t,s}(Y) \eqdef \left(\Delta \Phi_{t,s}(x)_i|_{x=Y_s}\right)_{i \in \llbracket 1,d \rrbracket}$ its componentwise Laplacian. The Jacobian transports first-order perturbations from time $s$ to time $t$. 
% while the Hessian and Laplacian encode the corresponding second-order sensitivity.
% For $s \in [0,T]$, we denote by $\mathcal{L}_s$ the infinitesimal generator of~\eqref{eq:backward_SDE} acting on $f \in \mathcal{C}^2(\RR^d,\RR)$ by: $\mathcal{L}_s f(y) \eqdef v_s(y) \cdot \nabla f(y) + a_s \Delta f(y)$. It describes the infinitesimal change of the observable $f$ along the diffusion.
% Using these quantities, the next theorem states the first-order weak error due to discretization, at first order in the step size $\gamma$.

We now introduce the stochastic calculus quantities that appear in the weak-error formula.

\begin{defn}[Flow derivatives and infinitesimal generator]
For \(0 \leq s \leq t \leq T\), let \(\Phi_{t,s}\) denote the stochastic flow map associated with SDE~\eqref{eq:backward_SDE}: for each \(x\in\RR^d\), \(\Phi_{t,s}(x)\) is the value at time \(t\) of the solution started from \(x\) at time \(s\). Along the trajectory $(Y_t)_{t \in [0,T]}$, we define the stochastic Jacobian, Hessian, and Laplacians of the flow by:
\[
J_{t,s}(Y)\eqdef \left.\nabla_x\Phi_{t,s}(x)\right|_{x=Y_s},\quad  \! \!  \! \! 
H_{t,s}(Y)\eqdef \left.\nabla_x^2\Phi_{t,s}(x)\right|_{x=Y_s},\quad \! \! \! \! 
\Delta_{t,s}(Y)\eqdef
\bigl(\left.\Delta\Phi_{t,s}(x)_i\right|_{x=Y_s}\bigr)_{i\in\llbracket 1,d\rrbracket}.
\] 
We also denote by $\mathcal{L}_s$ the infinitesimal generator of~\eqref{eq:backward_SDE}, acting on $f \in \mathcal{C}^2(\mathbb{R}^d,\mathbb{R})$ as
\[
\mathcal{L}_s f(y)
\eqdef
v_s(y) \cdot \nabla f(y)
+
a_s \Delta f(y).
\]
Finally, for a third-order tensor \(A\in\mathbb{R}^{d\times d\times d}\) and a matrix
\(M\in\mathbb{R}^{d\times d}\), we use the notation $[A,M] \in \RR^d$ for the contraction over the last two indices of \(A\).
\end{defn} \vspace{-0.25cm}
More heuristically, the Jacobian transports first-order perturbations from time $s$ to time $t$, while the Hessian and Laplacian encode the corresponding second-order sensitivity. $\mathcal{L}_s$ describes the infinitesimal change of the observable $f$ along the diffusion. Using these quantities, the next theorem gives the weak error induced by the time discretization, at first order in the stepsize $\gamma$.

\begin{thm}[First-order weak error, proof in Appendix~\ref{app:general_weak}]
  \label{thm:general_weak}
  For all $f \in \mathcal{C}^\infty(\RR^d, \RR)$ with polynomial growth, we have, for $k \in \llbracket 0,K \rrbracket$ and $t_k = \gamma k$:
  \begin{align*}
    \resizebox{.94\linewidth}{!}{$\displaystyle
    \EE\big[f(\hat Y_k)\big] - \EE\big[f(Y_{t_k})\big]
    =
    \gamma \int_0^{t_k}
    \EE\Big[
      e_{t_k,s}(Y) \cdot \nabla f(Y_{t_k})
      +
      \big\langle E_{t_k,s}(Y), \nabla^2 f(Y_{t_k}) \big\rangle
    \Big] ds
    + O(\gamma^2)$}
  \end{align*}
  \begin{equation*} 
  \text{with  } \left\{
  \begin{aligned}
    & e_{t,s}(Y)
    \eqdef
    - \frac12 \Big( J_{t,s}(Y)\big[(\partial_s+\mathcal{L}_s)v_s\big](Y_s)
    + \dot a_s\, \Delta_{t,s}(Y)
    + 2 a_s\, [H_{t,s}(Y),\nabla v_s(Y_s)] \Big)
    \in \mathbb{R}^d \nonumber
    \\
    &E_{t,s}(Y) \eqdef  -\frac12 J_{t,s}(Y) \Big( a_s\big(\nabla v_s(Y_s)  + \nabla v_s(Y_s)^\top \big) + \dot a_s \id \Big)  J_{t,s}(Y)^\top \in \RR^{d\times d} 
  \end{aligned} \right.
  \end{equation*}
\end{thm}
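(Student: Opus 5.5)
We follow the classical Talay--Tubaro route, using the backward Kolmogorov equation and a telescoping sum. Fix $k$ and set $u(s,y)\eqdef \EE[f(\Phi_{t_k,s}(y))]$ for $s\le t_k$. Under the smoothness and growth assumptions on $v_t$ and $a_t$, $u$ is smooth with polynomially growing derivatives. It solves $\partial_s u+\mathcal{L}_s u=0$ with $u(t_k,\cdot)=f$. Telescoping gives
\[
\EE[f(\hat Y_k)]-\EE[f(Y_{t_k})]=\sum_{j=0}^{k-1}\EE\big[u(t_{j+1},\hat Y_{j+1})-u(t_j,\hat Y_j)\big],
\]
since $\hat Y_0$ and $Y_0$ share the law $q_0$. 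We bound each term as a one-step local error.

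\textbf{Local error.} Take $\psi_j(y)\eqdef \EE[u(t_{j+1},y+\gamma v_{t_j}(y)+\sqrt{2\gamma a_{t_j}}\hat W)]-u(t_j,y)$. We Taylor-expand in $\gamma$ to second order, using $\EE\hat W=0$, $\EE\hat W\hat W^\top=\id$ and vanishing odd moments. We compare with the exact expansion $u(t_j,y)=u(t_{j+1},y)+\int_{t_j}^{t_{j+1}}\mathcal{L}_r u(r,y)\,dr$, expanded via It\^o--Dynkin to second order. The $O(\gamma)$ terms cancel by the Kolmogorov equation, leaving
\[
\psi_j(y)=\tfrac{\gamma^2}{2}\Big(\mathcal{L}^{EM}_{t_j}\mathcal{L}^{EM}_{t_j}-(\partial_s+\mathcal{L}_s)^2\Big)u\Big|_{s=t_j}(y)+O(\gamma^3).
\]
Here the operator $\mathcal{L}^{EM}$ has frozen coefficients. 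Equivalently, and more cleanly, $\psi_j=\tfrac{\gamma^2}{2}\,\mathcal{D}_{t_j}u(t_j,\cdot)+O(\gamma^3)$, where $\mathcal{D}_s$ collects the terms coming from the time-variation of the coefficients. These are $-(\partial_s v_s)\cdot\nabla$ and $-\dot a_s\Delta$, together with the commutator terms between the frozen drift/diffusion and the evolving ones. Concretely, after the Kolmogorov-equation cancellations one expects
\[
\mathcal{D}_s u=-\big[(\partial_s+\mathcal{L}_s)v_s\big]\cdot\nabla u-\dot a_s\Delta u-2a_s\langle \nabla v_s,\nabla^2 u\rangle .
\]
Doing this bookkeeping carefully is the main computational step. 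We would organise it by writing the EM step as the exact flow of the frozen-coefficient SDE $dZ_r=v_{t_j}(Z_{t_j})dr+\sqrt{2a_{t_j}}dW_r$ and comparing generators via It\^o on $r\mapsto \EE[u(r,Z_r)]$.

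\textbf{Summation.} Summing over $j$ turns $\sum_j\gamma^2(\cdot)$ into $\gamma\int_0^{t_k}(\cdot)\,ds+O(\gamma^2)$. Two inputs are needed here. The first is the standard fact that $\EE g(\hat Y_j)-\EE g(Y_{t_j})=O(\gamma)$ for smooth polynomially-growing $g$, which gives the first-order weak convergence (a corollary of the same local estimate with uniform moment bounds on $\hat Y$). The second is the Lipschitz regularity in $s$ of the integrand. This lets us replace $\hat Y_j$ by $Y_s$ inside the expectations.

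\textbf{Rewriting with flow derivatives.} We express $\nabla u$ and $\nabla^2 u$ through the flow. Differentiating $u(s,y)=\EE f(\Phi_{t,s}(y))$ gives $\nabla u(s,Y_s)=\EE[J_{t,s}^\top\nabla f(Y_t)\mid Y_s]$ and $\nabla^2u=\EE[J^\top\nabla^2 f(Y_t)J+\langle\nabla f(Y_t),H_{t,s}\rangle\mid Y_s]$, with $\Delta u$ as its trace. Substituting these, and using the tower property to remove the conditional expectations, produces exactly $e_{t,s}\cdot\nabla f(Y_{t})+\langle E_{t,s},\nabla^2 f(Y_t)\rangle$. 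Along the way, $\langle \nabla v_s+\nabla v_s^\top,\cdot\rangle$ arises by symmetrizing against the symmetric Hessian, and $\mathrm{tr}(H)=\Delta_{t,s}$.

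\textbf{Main obstacle.} Everything else is standard, provided we justify uniform-in-$\gamma$ moment bounds and polynomial-growth bounds on the derivatives of $u$ (via differentiability of stochastic flows). The main obstacle is the exact second-order local expansion: tracking all time-derivative and commutator terms correctly, including the sign and the factor $2a_s$ in front of $[H,\nabla v_s]$.
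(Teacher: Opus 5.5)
Your proposal is correct and follows essentially the same route as the paper. Both arguments use the backward Kolmogorov value function, telescope the one-step defects, and obtain a second-order local expansion equal to $\gamma^2\psi_s$ with $\psi_s=\tfrac12\mathcal{D}_s u_s$, exactly the coefficients you predict. Both then replace $\hat Y_\ell$ by $Y_{t_\ell}$ using first-order weak convergence, pass from the Riemann sum to the integral, and rewrite $\nabla u_s$, $\nabla^2 u_s$, $\Delta u_s$ through $J$, $H$, $\Delta$ of the flow. Your generator-comparison form $\tfrac{\gamma^2}{2}\big[(\partial_s+\bar{\mathcal{L}})^2-(\partial_s+\mathcal{L}_s)^2\big]u$, whose second term vanishes by the Kolmogorov equation, is only a compact packaging of the paper's direct Taylor expansion combined with $\partial_{ss}u_s=-(\partial_s\mathcal{L}_s)u_s-\mathcal{L}_s\partial_s u_s$. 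Carrying it out does give the stated terms, including $-2a_s\langle\nabla v_s,\nabla^2 u\rangle$.
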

The proof follows the expansion strategy from~\cite{talay1990expansion}: using the backward Kolmogorov value
function, we expand the one-step local Euler error, and sum them along the time grid. Note that the same weak-error approach could be extended to higher-order discretization schemes~\citep{karras2022elucidating, lu2022dpm} at the cost of involving higher derivatives of the flow.

\vspace{-0.15cm}
\paragraph{Interpretation of the terms.}
The leading error is the accumulation of local Euler defects, transported from time $s$ to time $t$ by the Jacobian $J_{t,s}(Y)$. The vector $e_{t,s}$ represents the local mean error and acts on $\nabla f(Y_t)$, while $E_{t,s}$ is the local covariance error and acts on $\nabla^2 f(Y_t)$. More precisely, $J_{t,s}\big[(\partial_s+\mathcal{L}_s)v_s\big](Y_s)$ is the drift-freezing error, since Euler keeps the drift fixed over each step, while the terms involving $\dot a_s$ come from freezing the diffusion term. Finally, the terms involving~$\nabla v_s$ describe the interactions between the discretized Brownian motion and the curvature of the flow.

\paragraph{ODE sampling ($\alpha = 0$)} This is an instructive special case of the general expansion. In this setting, the diffusion term $a_s = 0$, and thus $E_{t,s}(Y) = 0$. Note that if $Y$ solves $\dot Y_s = v_s(Y_s)$, then $\frac{d}{ds} v_s(Y_s) = (\partial_s + v_s \cdot \nabla) v_s(Y_s)$. The discretization weak error then only depends on the drift material derivative $D_s v_s \eqdef (\partial_s + v_s \cdot \nabla) v_s$ (local acceleration) transported by the Jacobian:
\begin{align}
\label{eq:weak_ODE}
\EE\big[f(\hat Y_k)\big] - \EE\big[f(Y_{t_k})\big]
=
-\frac{\gamma}{2} \int_0^{t_k}
\EE\Big[
    J_{t_k,s}(Y)  D_s v_s(Y_s) \cdot \nabla f(Y_{t_k})
\Big] ds
+ O(\gamma^2).
\end{align}
In particular, the discretization error vanishes when $D_s v_s = 0$ along the characteristics, i.e., when trajectories remain straight in time~\citep{liu2022flow, tong}. The relation between this first-order error and Lipschitz-based objectives~\citep{chen2025lipschitz} is discussed in Appendix~\ref{app:Lipschitz}.

\vspace{-0.25cm}
\paragraph{Mean and covariance errors} We now apply the above weak-error result to the reverse diffusion sampler introduced in Section~\ref{sec:background}. Since this process is initialized from $p_T$, its exact terminal law satisfies $\operatorname{Law}(Y_T)=p_{\mathrm{data}}$, so that $\mu_{\mathrm{data}}=\EE[Y_T]$ and $\Sigma_{\mathrm{data}}=\cov[Y_T]$. 
We can then derive the first-order error on the mean and covariance, propagated until final sampling step $K$: 
\begin{cor}[First-order mean and covariance errors, proof in Appendix~\ref{app:mean_cov}] With the notations from Theorem~\ref{thm:general_weak}, we have: \vspace{-0.1cm}
  \label{cor:mean_cov}
  \begin{align} \label{eq:mean_disc_error}
    &\EE[\hat Y_{K}]
    = \mu_\data + \gamma \int_0^{T}  \EE\big[e_{T,s}(Y)  \big] ds + O(\gamma^2)  \\[-0.1cm] 
   &\hspace{-0.4cm}  \cov[\hat Y_{K}]
    \!=\! \Sigma_\data \! + \! \gamma \! \int_0^{T}\! \! \!\Big( \! \cov\big[e_{T,s}(Y), Y_{T}\big] \! + \! \cov\big[e_{T,s}(Y), Y_{T}\big]^\top  \! \! \!+ \! 2 \EE\big[E_{T,s}(Y)  \big]\!  \Big) ds \!+ \!O(\gamma^2) \label{eq:cov_disc_error}
  \end{align}
\end{cor}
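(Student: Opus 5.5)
We apply Theorem~\ref{thm:general_weak} at the final index $k=K$, so $t_K=T$, with two families of polynomial-growth test functions. Since the reverse process starts from $p_T$, we have $\operatorname{Law}(Y_T)=p_\data$. Hence $\EE[Y_T]=\mu_\data$ and $\cov[Y_T]=\Sigma_\data$, and only the first-order corrections remain to be identified.

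\textbf{Mean.} We take the coordinate functions $f(y)=y_i$. For these, $\nabla f = e_i$ (the $i$-th basis vector) and $\nabla^2 f=0$. Theorem~\ref{thm:general_weak} then gives $\EE[\hat Y_K]_i-\EE[Y_T]_i=\gamma\int_0^T\EE[e_{T,s}(Y)_i]\,ds+O(\gamma^2)$, which is~\eqref{eq:mean_disc_error} componentwise.

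\textbf{Second moments.} We take $f(y)=y_iy_j$. Then $\nabla f(y)=y_j e_i+y_i e_j$ and $\nabla^2 f=e_ie_j^\top+e_je_i^\top$. Writing $e=e_{T,s}(Y)$ and $E=E_{T,s}(Y)$, Theorem~\ref{thm:general_weak} yields
\[
\EE[\hat Y_{K,i}\hat Y_{K,j}]-\EE[Y_{T,i}Y_{T,j}]=\gamma\int_0^T\EE\big[e_iY_{T,j}+e_jY_{T,i}+E_{ij}+E_{ji}\big]ds+O(\gamma^2).
\]
The matrix $E_{T,s}$ is symmetric by construction, so the last two terms give $2\EE[E_{ij}]$. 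In matrix form,
\[
\EE[\hat Y_K\hat Y_K^\top]=\EE[Y_TY_T^\top]+\gamma\int_0^T\big(\EE[eY_T^\top]+\EE[eY_T^\top]^\top+2\EE[E]\big)ds+O(\gamma^2).
\]

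\textbf{Covariance.} We write $\cov[\hat Y_K]=\EE[\hat Y_K\hat Y_K^\top]-\EE[\hat Y_K]\EE[\hat Y_K]^\top$. Set $m\eqdef\int_0^T\EE[e_{T,s}(Y)]ds$, so that $\EE[\hat Y_K]=\mu_\data+\gamma m+O(\gamma^2)$. Expanding the product of means gives
\[
\EE[\hat Y_K]\EE[\hat Y_K]^\top=\mu_\data\mu_\data^\top+\gamma\big(m\mu_\data^\top+\mu_\data m^\top\big)+O(\gamma^2),
\]
where the $\gamma^2 mm^\top$ term is absorbed into $O(\gamma^2)$. Subtracting this from the second-moment expansion, and using $\EE[Y_T]=\mu_\data$ together with Fubini to exchange $\int ds$ and $\EE$, we obtain
\[
\EE[eY_T^\top]-\EE[e]\mu_\data^\top=\cov[e,Y_T].
\]
The transposed terms pair up in the same way, which gives~\eqref{eq:cov_disc_error}.

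\textbf{Main obstacle.} The delicate point is not the algebra but justifying that Theorem~\ref{thm:general_weak} applies with these quadratic test functions. They have polynomial growth and are smooth, so the theorem applies directly. What must be checked is that the $O(\gamma^2)$ remainders are uniform and that $\EE[|e_{T,s}||Y_T|]$ is integrable in $s$. Both follow from the standing regularity assumptions: bounded derivatives of $v$ and linear growth give moment bounds on $J$, $H$, $\Delta$ and on $Y$. We would also verify the symmetry of $E_{T,s}$ from its explicit form, so that $\langle E,\nabla^2 f\rangle$ produces $2E_{ij}$.
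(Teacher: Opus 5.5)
Your proof is correct and follows the paper's own argument essentially step for step. Like the paper, you apply Theorem~\ref{thm:general_weak} at $t_K=T$ with $f(y)=y_i$ and $f(y)=y_iy_j$, use the symmetry of $E_{T,s}$ to get $2\EE[E_{T,s}(Y)]$, and expand $\EE[\hat Y_K]\EE[\hat Y_K]^\top$ to $O(\gamma^2)$ to turn second moments into cross-covariances; your remarks on uniform remainders and integrability simply make explicit what the paper leaves to its standing regularity assumptions.
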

In the rest of the document, we focus on the errors in these first two moments, which become especially tractable when the drift is linear, as detailed in Section~\ref{sec:gaussian}. We show in Appendix~\ref{app:Fréchet_dist_exp_general} that these moment errors can be combined into a first-order expansion of the Fréchet distance (FD), defined as the sum of the squared $L^2$ distance between means and of the squared Bures distance between covariances. For Gaussian data, FD coincides with the Wasserstein-2 distance.
% \begin{equation}
% \label{eq:Fréchet_distance}
% \FD(p_{\rm data},\Law(\hat Y_K))
% \!=\!
% \Big\|\mu_{\rm data}-\EE[\hat Y_K] \Big\|^2
% \!+\!
% \operatorname{Tr}\!\left(\Sigma_{\rm data}\!+\!\cov[\hat Y_K]
% \!-\!2\Big(\Sigma_{\rm data}^{1/2}\cov[\hat Y_K]\Sigma_{\rm data}^{1/2}\Big)^{1/2}\right).
% \end{equation}

\section{Explicit formulas under Gaussian data}
\label{sec:gaussian}

We now specialize the general discretization-error expansions to Gaussian data. This yields explicit \emph{geometry-aware} formulas that depend on the covariance spectrum of the target distribution. In Section~\ref{sec:opt_params}, we use these formulas to optimize the diffusion parameters \((\sigma_t,\eta_t,\alpha)\) with respect to this spectrum, and compare the resulting predictions on non-Gaussian image sampling problems.

For Gaussian data $p_{\mathrm{data}} \sim \mathcal{N}(\mu_\mathrm{data}, \Sigma_\mathrm{data})$, the score is affine and reads, using the notations from Section~\ref{sec:background},  $\nabla \log p_t = - \Sigma_t^{-1} (x - \mu_t)$. The generative SDE~\eqref{eq:backward_SDE} thus has linear drift
\begin{equation}  \label{eq:vt_linear}
  v_t(x) = H_t \,x + r_t \quad \text{where for $t \leq T$} \quad
  \left\{
    \begin{aligned}
      H_{T-t} &\eqdef - \frac{\dot \eta_{t}}{\eta_t}\id - (1 + \alpha) \eta_t^2 \dot \sigma_t \sigma_t \Sigma_{t}^{-1} \\
      r_{T-t} &\eqdef (1 + \alpha) \eta_t^2 \dot \sigma_t \sigma_t \Sigma_{t}^{-1} \mu_{t}.
    \end{aligned}
    \right.
  \end{equation}
In Appendix~\ref{app:affine_velocity}, we specialize the \textit{first-order} mean and covariance error formulas from Corollary~\ref{cor:mean_cov} to general affine drifts, and additionally derive their \textit{second-order} expansions in \(\gamma\). Instantiating these formulas with \(v_t\) from~\eqref{eq:vt_linear} yields Proposition~\ref{prop:mean_cov_Gaussian}. In this Gaussian setting, each eigendirection evolves independently in the data eigenbasis \(\Sigma_\data = U \diag{\lambda_i} U^\top\), so the discretization error decomposes by eigendirection \(\lambda_i\). By abuse of notation, we sometimes write \(\lambda\) for \(\lambda_i\); all such expressions are understood componentwise in this eigenbasis.

\begin{prop}[Gaussian data mean and covariance errors, proof in Appendix~\ref{app:mean_cov_Gaussian}]
    \label{prop:mean_cov_Gaussian}
   Assume Gaussian data \(p_{\mathrm{data}} \sim \mathcal{N}(\mu_\data, \Sigma_\data)\) and Assumption~\ref{ass:regularity_sched}(i) on the schedules. Then the mean and covariance discretization errors decouple in the eigenbasis of the data covariance
\(\Sigma_\data = U \operatorname{Diag}(\lambda_i) U^\top\):
\begin{align*}
  \EE[\hat Y_{K}]
  &= \mu_\data  + U \operatorname{Diag}\Big( \Delta^\mu(\lambda_i)\Big) U^\top \mu_\data,\\
  \cov[\hat Y_{K}]
  &= \Sigma_\data + U \operatorname{Diag}\Big( \Delta^\Sigma(\lambda_i)\Big) U^\top .
\end{align*}
For both per-eigendirection defects \(\Delta^\mu(\lambda_i)\) and \(\Delta^\Sigma(\lambda_i)\), we derive explicit first- and second-order small-stepsize expansions of the form
\[
\Delta(\lambda)
=
\gamma \Delta^{[1]}(\lambda)
+
\gamma^2 \Delta^{[2]}(\lambda)
+
O(\gamma^3),
\]
%where each coefficient depends on the sampling parameters \(\eta_t, \sigma_t, \alpha\) and on the eigenvalue \(\lambda\). 
In particular, denoting
% Denote
\begin{align*}
  A_s \eqdef \frac{\dot \eta_s}{\eta_s}, \quad 
  B_s(\lambda) \eqdef \frac{\sigma_s\dot\sigma_s}{\lambda+\sigma_s^2}, \quad N_s(\lambda) \eqdef \frac{\lambda}{\lambda+\sigma_s^2}
\end{align*}
the first-order coefficients are 
\begin{align}
\label{eq:mean_error_gauss}
\Delta^{\mu,[1]}(\lambda)
  &=
  -\frac12 \int_0^T
N_s(\lambda)^{\frac{1+\alpha}{2}}
  A_s
  \Big(
   A_s
    +
    (1+\alpha)B_s(\lambda)
  \Big) ds -\frac12 \Big[N_s(\lambda)^{\frac{1+\alpha}{2}} A_s \Big]_0^T \\
  \! \!  \Delta^{\Sigma,[1]}(\lambda)
  &= - \lambda \int_0^T 
  N_s(\lambda)^\alpha 
  \Big[
    \Big(
      A_s
     + 
      B_s(\lambda)
    \Big)^2 \!
     - 
    \alpha^2 B_s(\lambda)^2
  \Big] ds \! -  \! \Big[\lambda N_s(\lambda)^\alpha\big(A_s + B_s(\lambda) \big)\Big]_0^T .
\label{eq:cov_error_gauss}
\end{align}
Under Assumption~\ref{ass:regularity_sched}(ii), the boundary terms $[ \, \cdot \,]_0^T$ in both expansions vanish as $\sigma_T = \sigma_{\max} \to \infty$.  The expressions of the second-order coefficients are given in Appendix~\ref{app:mean_cov_Gaussian} equations
\eqref{eq:gaussian_second_order_mean} and \eqref{eq:gaussian_second_order_cov}. 
% \begin{align}
% \label{eq:mean_error_gauss}
%   \! \Delta^{\mu,[1]}(\lambda)
%   &=
%   -\frac12 \int_0^T
%   \left(\frac{\lambda}{\lambda+\sigma_s^2}\right)^{\frac{1+\alpha}{2}}
%   \frac{\dot\eta_s}{\eta_s}
%   \left(
%     \frac{\dot\eta_s}{\eta_s}
%     +
%     (1+\alpha)\frac{\sigma_s\dot\sigma_s}{\lambda+\sigma_s^2}
%   \right) ds + \Big[\Omega_s^\mu(\lambda)\Big]_0^T \\
%   \! \Delta^{\Sigma,[1]}(\lambda)
%   &= - \lambda \int_0^T 
%   \left(\frac{\lambda}{\lambda+\sigma_s^2}\right)^\alpha 
%   \left[
%     \left(
%       \frac{\dot\eta_s}{\eta_s}
%      + 
%       \frac{\sigma_s\dot\sigma_s}{\lambda+\sigma_s^2}
%     \right)^2 \!
%      - 
%     \alpha^2
%     \left(
%       \frac{\sigma_s\dot\sigma_s}{\lambda+\sigma_s^2}
%     \right)^2
%   \right] ds  +  \Big[\Omega_s^\Sigma(\lambda)\Big]_0^T
% \label{eq:cov_error_gauss}
% \end{align}
%The residuals $\Omega^\mu$ and $\Omega^\Sigma$ are given in Appendix~\ref{app:mean_cov_Gaussian}, equations~\eqref{eq:R_mu} and~\eqref{eq:R_sigma}. 
\end{prop}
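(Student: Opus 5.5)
Because the drift \eqref{eq:vt_linear} is affine and the diffusion coefficient is state-independent, I would not go through the flow derivatives of Theorem~\ref{thm:general_weak}. Instead I would work directly with exact discrete recursions for the first two moments. Writing $m_k \eqdef \EE[\hat Y_k]$ and $P_k \eqdef \cov[\hat Y_k]$, the scheme~\eqref{eq:disc} gives
\[
m_{k+1} = (\id+\gamma H_{t_k}) m_k + \gamma r_{t_k}, \qquad P_{k+1} = (\id+\gamma H_{t_k}) P_k (\id+\gamma H_{t_k})^\top + 2\gamma a_{t_k}\id .
\]
The exact process satisfies the continuous counterparts $\dot\mu = H\mu + r$ and $\dot P = HP + PH^\top + 2a\id$, whose solutions at time $T$ are $\mu_\data$ and $\Sigma_\data$. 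Every $H_t$ is a polynomial in $\Sigma_\data$, since $\Sigma_t = \eta_t^2(\Sigma_\data+\sigma_t^2\id)$. Likewise $r_t$ and the initial data $m_0 = \eta_T\mu_\data$ and $P_0 = \Sigma_T$ lie in the corresponding eigenspaces. Hence all recursions are diagonal in the basis $U$, and this already gives the decoupled form of the statement, with scalar defects $\Delta^\mu(\lambda_i)$ and $\Delta^\Sigma(\lambda_i)$. I also note that $r_t = -(H_t+\beta_{T-t}\id)\mu_{T-t}$, so the mean error is proportional to $\mu_\data$ componentwise.

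Next, in a fixed eigendirection I would obtain the scalar error expansion by a discrete variation-of-constants argument. Let $h_s(\lambda) = -A_{T-s} - (1+\alpha)B_{T-s}(\lambda)$, where I use $\eta_t^2\sigma_t\dot\sigma_t/\Sigma_t = B_t(\lambda)$. The scalar propagator is $\Psi_{t,s} = \exp\int_s^t h$. A direct computation gives $\Psi_{T,s} = \frac{\eta_0}{\eta_{T-s}} N_{T-s}(\lambda)^{\frac{1+\alpha}{2}}$, because $\frac{d}{dt}\log N_t = -2B_t$. The one-step local defect compares $(1+\gamma h)$ with $e^{\int h}$ and the frozen inputs $r$, $a$ with their integrals. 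This defect is $\frac{\gamma^2}{2}$ times the time derivative of the exact local vector field along the exact solution. Summing the transported defects via a telescoping identity gives
\[
\Delta^{\mu} = -\frac{\gamma}{2}\int_0^T \Psi_{T,s}\,\frac{d}{ds}\big(h_s\mu(s)+r_s\big)\,ds + O(\gamma^2),
\]
which is consistent with \eqref{eq:weak_ODE}. The covariance is handled the same way, using the transport $\Psi_{T,s}^2$ acting on $\frac{d}{ds}(2h_sP(s)+2a_s)$ together with the quadratic remainder $\gamma^2 h^2P$. For the second-order coefficients I would simply carry these Taylor expansions one order further. This is routine but lengthy, and I would relegate it to Appendix~\ref{app:affine_velocity}.

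The remaining step is the explicit simplification, and I expect it to be the main obstacle. I would change variables $s\mapsto T-s$ and substitute the exact moments $\mu(T-t) = \eta_t\mu_\data$ and $P(T-t) = \eta_t^2(\lambda+\sigma_t^2)$. I would then integrate by parts once, so that the derivatives $\dot A$ and $\dot B$ are eliminated in favour of the boundary terms $[\,\cdot\,]_0^T$ appearing in \eqref{eq:mean_error_gauss}--\eqref{eq:cov_error_gauss}. In doing so I would use $\dot N = -2BN$ and $\frac{d}{dt}\log(\lambda+\sigma_t^2) = 2B$. The delicate part of this bookkeeping is showing that the $\eta$-factors cancel, so that only $A$ survives. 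It also requires showing that the $\alpha$-dependent diffusion contribution $2\alpha B\eta^2$ collapses, together with the drift term, into $(A+B)^2 - \alpha^2B^2$ weighted by $\lambda N^\alpha$. Finally, under Assumption~\ref{ass:regularity_sched}(ii) I would check the two ends separately. At $s=0$ the conditions $\beta_0 = \xi_0 = 0$ kill the boundary terms. At $s=T$ one has $N_T\sim\lambda\sigma_T^{-2}$, and the stated rate conditions make $N_T^{\frac{1+\alpha}{2}}A_T$ and $N_T^{\alpha}(A_T+B_T)$ vanish as $\sigma_T\to\infty$.
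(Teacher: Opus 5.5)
Your proposal is correct and follows essentially the paper's route. You co-diagonalize the affine moment recursions and use the closed-form propagator $\eta_{T-s}^{-1}N_{T-s}(\lambda)^{\frac{1+\alpha}{2}}$, which is the paper's Jacobian $J_{T,s}$. You transport the local defects $-\frac{\gamma^2}{2}\ddot m_s$ and $\gamma^2\big(H_sC_sH_s^\top-\frac12\ddot C_s\big)$ to time $T$, substitute the exact moments, and integrate by parts once using $\dot N=-2BN$. The only difference is minor: the paper obtains the first-order affine formulas by specializing Corollary~\ref{cor:mean_cov} and uses the direct recursion/coefficient-matching argument only for the second order, whereas you use the discrete variation-of-constants argument for both orders. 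One small slip: the diffusion input is $2a=2\alpha BP$, i.e.\ $2\alpha\eta^2\sigma\dot\sigma$, not $2\alpha B\eta^2$.
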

\paragraph{ODE sampling ($\alpha=0$)} In this case, the first-order coefficients considerably simplify: the covariance error only depends on the eigendecomposition of the drift matrix $H_s = U \diag{\Delta^H_s} U^\top$: 
\begin{align*} 
    \Delta^{\Sigma,[1]}(\lambda)
    =
    -\lambda \int_0^T
    \big(\Delta^H_s(\lambda)\big)^2 ds + \lambda \Big[ \Delta^H_s(\lambda)\Big]_0^T
\end{align*}

This error is \emph{geometry-aware:} it controls the full drift spectrum
rather than only its Lipschitz constant as in~\citep{chen2025lipschitz}
and in non-asymptotic error analyses; see details in Appendix~\ref{app:Lipschitz}.

\paragraph{Typical rescaling schedules} Appendix~\ref{sec:first_order_power_law} specializes the above error terms for usual rescaling schedules, including VE, VP, and FM; see Section~\ref{sec:background}. We state here the VE case, which corresponds to $\eta_t = 1$. Then, the first and second-order terms in the mean error vanish: 
$\EE[\hat Y_K] = \mu_\data + O(\gamma^3).$
For the covariance error term~\eqref{eq:cov_error_gauss}, when $\alpha > 0$ and if $\sigma_t = O(t)$ as $t\to 0$, Assumption~\ref{ass:regularity_sched}(ii) holds and for large terminal noise $\sigma_T = \sigma_{\max}$:  
\begin{align} \label{eq:ve_cov_error_beta}
  \Delta^{\Sigma,[1]}(\lambda)
  &=
  (\alpha^2-1)\lambda^{\alpha+1}
  \int_0^T \frac{(\sigma_s\dot\sigma_s)^2}{(\lambda+\sigma_s^2)^{\alpha+2}}\,ds + o_{\sigma_{\max}\to\infty}(1)  
\end{align}
  For example, in the case of the \textit{polynomial schedule} $\sigma_t = \sigma_{\max}\left(\frac{t}{T}\right)^\beta$, $\beta > \frac12$ ($\operatorname{B}$ the Beta function):   
\begin{align}\label{eq:ve_cov_error_ibp}
  \Delta^{\Sigma,[1]}(\lambda)
  &=
  (\alpha^2-1)\frac{\beta}{2T}
  \lambda^{1-\frac{1}{2\beta}}
  \operatorname{B}\!\left(
    2-\frac{1}{2\beta},
    \alpha+\frac{1}{2\beta}
  \right)
  \sigma_{\max}^{1/\beta}
  + o(\sigma_{\max}^{1/\beta}).
\end{align}
\paragraph{Fr\'echet distance expansion for Gaussian data}  
Using Proposition~\ref{prop:mean_cov_Gaussian}, we obtain an explicit expansion for small stepsize of the Fréchet distance. Using the fact that the covariance error is diagonal in the covariance eigenbasis, the Bures distance between covariances decomposes nicely as a single sum over $\lambda_i$-dependent terms (proof in Appendix~\ref{app:Fréchet_dist_exp_gaussian}):
\begin{align} \label{eq:gaussian_expansion_Fréchet}
    \operatorname{FD}(p_\mathrm{data}, \operatorname{Law}(\hat Y_K))
    =
    \gamma^2\sum_{i=1}^d \Delta^{\mu, [1]}(\lambda_i)^2 (u_i^\top \mu_\data)^2
    +
    \frac{\gamma^2}{4}\sum_{i=1}^d \frac{\Delta^{\Sigma, [1]}(\lambda_i)^2}{\lambda_i} + o(\gamma^2)
  \end{align}
  In Appendix~\ref{app:Fréchet_dist_exp_gaussian}, a third-order expansion in $\gamma$ is derived using the second-order terms $\Delta^{\mu, [2]}$ and $\Delta^{\Sigma, [2]}$ from Proposition~\ref{prop:mean_cov_Gaussian}. 
  In Section~\ref{sec:opt_params}, we use this Fréchet objective for parameter design.

\section{Optimal diffusion sampling parameters}
\label{sec:opt_params}
The Gaussian formulas above yield explicit objectives for choosing diffusion parameters, namely the diffusion-term parameter $\alpha$ and the noise $\sigma_t$ and rescaling $\eta_t$ schedules. By minimizing the leading order Fréchet discretization error, we derive geometry-aware optimality criteria, expressed through the covariance spectrum, and test whether the resulting predictions extend beyond Gaussians.

In particular, we compare with image sampling using pretrained score models~\citep{karras2022elucidating} on CIFAR-10, ImageNet, and FFHQ. For empirical curves, we calculate FID (Fréchet Distance computed in \emph{Inception-v3} feature space) from 30k generated images. For theoretical curves, we plot the Gaussian Fréchet Distance (FD) from~\eqref{eq:gaussian_expansion_Fréchet}, with $\mu_\data$ and $\Sigma_\data$ estimated empirically from each dataset in image space. Experimental details are given in Appendix~\ref{app:details_expe}.

\paragraph{Optimal diffusion-term parameter $\alpha$}
%\label{sec:opt_alpha}
\looseness=-1
We first study the diffusion-term parameter $\alpha$ for fixed noise schedules $\sigma_t$ in the Variance Exploding setting (VE; $\eta_t=1$). The first-order Gaussian theory~\eqref{eq:ve_cov_error_ibp} gives a clean reference: $\alpha=1$ cancels the first-order mean and covariance errors in every eigendirection. This is exactly the choice of $\alpha$ used in the original SDE-based diffusion work~\citep{song2020score}. 
More precisely, the covariance shift $\Delta^{\Sigma,[1]}$ increases with $\alpha$: it is negative for $\alpha<1$ and positive for $\alpha>1$. Thus, small $\alpha$ underestimates variance in every eigendirection, while large $\alpha$ overestimates it.
This is visually reflected in the samples shown in Figure~\ref{fig:alpha_images}. This variance reduction at ${\alpha = 0}$ can be leveraged to improve posterior mean estimation with diffusion models (see Appendix~\ref{app:mean_estim}).

\begin{figure}[h]
\centering
 \vspace{-0.3cm}
\begin{minipage}{0.64\linewidth}
  \centering
  \vspace{0.6cm}
  \hspace{0.3cm}
  \includegraphics[width=0.7\linewidth]{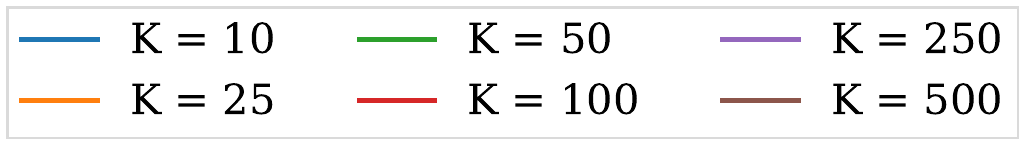}
\end{minipage}%
\hspace{0.3cm}
\begin{minipage}{0.30\linewidth}
\hspace{0.5cm}
\vspace{-.7cm}
\includegraphics[width=0.8\linewidth]{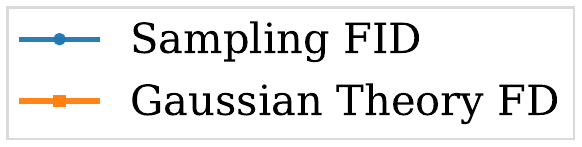}
\end{minipage}
\vspace{0.2em}
\begin{subfigure}{0.32\linewidth}
  \centering
  \includegraphics[width=\linewidth]{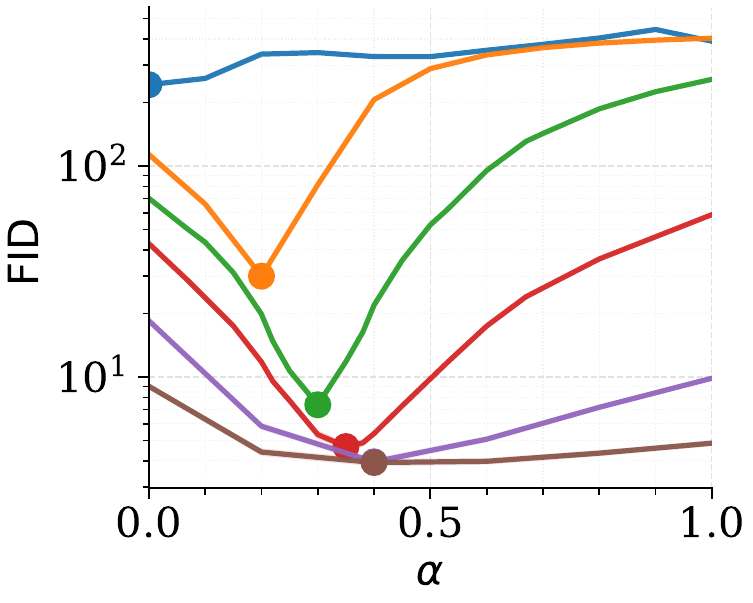}
  \caption{Image Sampling FID}
\end{subfigure}
\begin{subfigure}{0.32\linewidth}
  \centering
  \includegraphics[width=\linewidth]{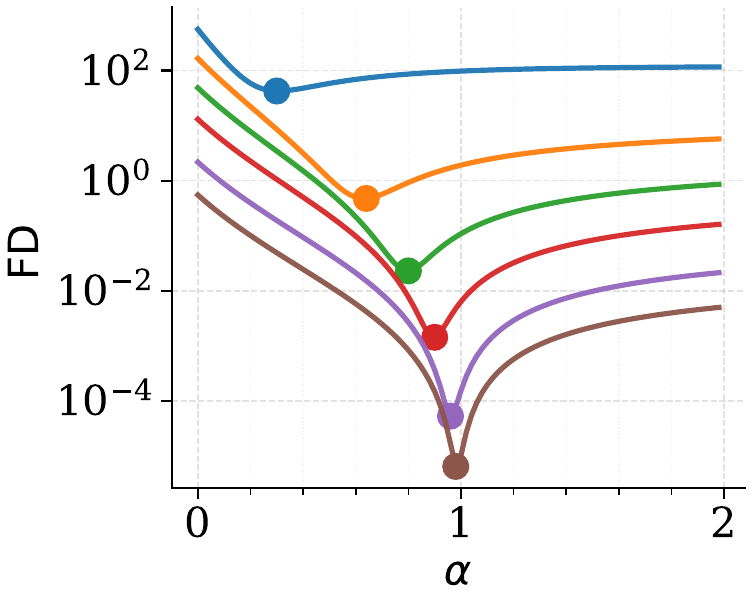}
  \caption{Gaussian Theory FD}
\end{subfigure}
%\hspace{-0.3cm}
\begin{subfigure}{0.3\linewidth}
  \centering
  \includegraphics[width=\linewidth]{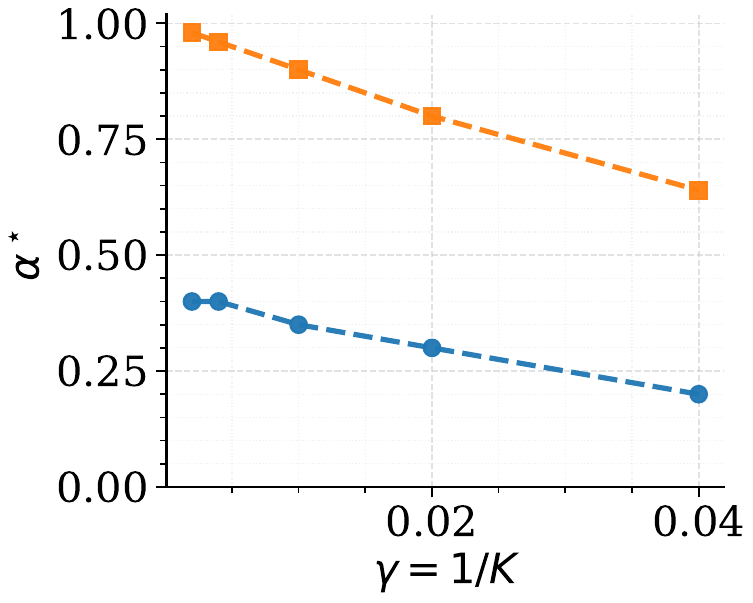}
  \caption{$\alpha^*(\gamma = \frac{T}{K})$}
\end{subfigure}
    \caption{Empirical FID for FFHQ sampling (a) and Gaussian-theory Fréchet Distance (FD) using the empirical FFHQ covariance (b) versus $\alpha$, for several step counts $K$. Using $\eta_t = 1$ (VE) and  $\sigma_t=\sigma_{\max}(t/T)^5$.
   (c) shows the minimizer $\alpha^\star$ of each curve as a function of $\gamma=T/K$.
      }
    \label{fig:FID_alpha_FFHQ_nstep}
    \vspace{-0.5cm}
  \end{figure}
  %\vspace{-0.75cm}

%For practical image sampling with small stepsizes, however, the optimum $\alpha$ moves below $1$. 

Figure~\ref{fig:FID_alpha_FFHQ_nstep}(a) shows that, for real image sampling, the $\alpha^\star$ minimizing the FID decreases below $1$ as the number of discretization steps $K$ decreases. This decrease is approximately linear in the stepsize $\gamma=T/K$, as shown in Figure~\ref{fig:FID_alpha_FFHQ_nstep}(c). We now show that this shift is well captured by our \textit{second-order} theoretical error from Proposition~\ref{prop:mean_cov_Gaussian}, represented in Figure~\ref{fig:FID_alpha_FFHQ_nstep}(b). Proposition~\ref{prop:opt_alpha} makes the linear evolution of $\alpha^*$ explicit in terms of the spectrum of $\Sigma_\mathrm{data}$.

\begin{prop}[Proof in Appendix~\ref{app:second_order_alpha_VE}]
\label{prop:opt_alpha} For $p_{\mathrm{data}} \sim \mathcal{N}(\mu_\mathrm{data}, \Sigma_\mathrm{data})$ and VE schedule with $\sigma_t = \sigma_{max}\left(\frac{t}{T}\right)^{\beta}$, $\beta > 1$, the $\alpha$ minimizing the third-order Gaussian Fréchet expansion~\eqref{eq:third_order}, satisfies: \vspace{-0.4cm}
\[
    \alpha^\star(\gamma)
    =
    1 - \gamma \,  \left( \frac{\sigma_{\max}^{\frac{1}{\beta}}}{T}
    \sum_{i=1}^d  w_{\beta,i}  \lambda_i^{\frac{-1}{2\beta}} + o\!\left(\sigma_{\max}^{\frac{1}{\beta}}\right)  \right)
    + o(\gamma), \quad w_{\beta,i} = C_\beta \frac{    \lambda_i^{1-\frac{1}{\beta}} }
{\sum_{j=1}^d  \lambda_j^{1-\frac{1}{\beta}}} >0
\]
\end{prop}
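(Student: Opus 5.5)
Starting point: in the VE case the mean error vanishes up to $O(\gamma^3)$, so the third-order Fréchet expansion~\eqref{eq:third_order} contains only covariance terms. Using the Bures expansion per eigendirection, I would write
\[
\mathrm{FD}(\alpha,\gamma)=\sum_{i=1}^d \frac{1}{4\lambda_i}\Big(\gamma^2 \Delta^{\Sigma,[1]}(\lambda_i)^2 + 2\gamma^3 \Delta^{\Sigma,[1]}(\lambda_i)\Delta^{\Sigma,[2]}(\lambda_i) + \gamma^3 c(\lambda_i)\,\Delta^{\Sigma,[1]}(\lambda_i)^3\Big)+o(\gamma^3).
\]
Here $c(\lambda_i)$ is the coefficient of the cubic Bures correction coming from $(\lambda+\delta)^{1/2}$. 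By~\eqref{eq:ve_cov_error_ibp},
\[
\Delta^{\Sigma,[1]}(\lambda)=(\alpha^2-1)\,g(\lambda),\qquad g(\lambda)=\frac{\beta}{2T}\lambda^{1-\frac1{2\beta}}\operatorname{B}\!\Big(2-\tfrac1{2\beta},\alpha+\tfrac1{2\beta}\Big)\sigma_{\max}^{1/\beta}.
\]
The leading term is therefore proportional to $(\alpha^2-1)^2$. It is minimized at $\alpha=1$ and is locally quadratic there. I would use a perturbative ansatz $\alpha^\star=1+\gamma\delta+o(\gamma)$ and solve the first-order optimality condition $\partial_\alpha \mathrm{FD}=0$ order by order in $\gamma$.

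Near $\alpha=1$ write $\alpha^2-1=2(\alpha-1)+O((\alpha-1)^2)$. Then $\gamma^2\Delta^{[1]2}$ has size $4\gamma^2(\alpha-1)^2g^2$, which is $O(\gamma^4)$ on the ansatz. The cubic Bures term is $O(\gamma^6)$ and drops out. The cross term is $2\gamma^3 (\alpha^2-1)g\,\Delta^{\Sigma,[2]}$, and only $\Delta^{\Sigma,[2]}|_{\alpha=1}$ matters at leading order. Minimizing the resulting quadratic in $\alpha-1$,
\[
\sum_i \frac{1}{4\lambda_i}\Big(4\gamma^2(\alpha-1)^2 g_i^2 + 4\gamma^3(\alpha-1) g_i \Delta^{\Sigma,[2]}_i|_{\alpha=1}\Big),
\]
gives
\[
\alpha^\star-1=-\frac{\gamma}{2}\,\frac{\sum_i g_i\,\Delta^{\Sigma,[2]}_i|_{\alpha=1}/\lambda_i}{\sum_i g_i^2/\lambda_i}+o(\gamma).
\]
Here $g_i$ is evaluated at $\alpha=1$, where $\operatorname{B}(2-\frac1{2\beta},1+\frac1{2\beta})$ is a constant. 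Since $g_i^2/\lambda_i\propto \lambda_i^{1-1/\beta}$, this already produces the weights $\lambda_i^{1-\frac1\beta}/\sum_j\lambda_j^{1-\frac1\beta}$.

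The main obstacle is the asymptotics of the second-order coefficient $\Delta^{\Sigma,[2]}(\lambda)$ at $\alpha=1$ from~\eqref{eq:gaussian_second_order_cov}, for $\eta=1$, $\sigma_t=\sigma_{\max}(t/T)^\beta$, as $\sigma_{\max}\to\infty$. I would specialize $A_s=0$, substitute $u=\sigma_s^2/\lambda$ (so $s=T(\lambda u)^{1/(2\beta)}\sigma_{\max}^{-1/\beta}$), and reduce each integral to a Beta-type integral. I would also check that all boundary terms vanish under Assumption~\ref{ass:regularity_sched}(ii). The goal is to show
\[
\Delta^{\Sigma,[2]}(\lambda)|_{\alpha=1}=\frac{\kappa_\beta}{T^2}\,\lambda^{1-\frac1\beta}\sigma_{\max}^{2/\beta}\,(1+o(1)),
\]
which is the natural scaling, since each extra $\gamma$ comes with a factor $\dot\sigma/\sigma\sim s^{-1}\sim \sigma_{\max}^{1/\beta}\lambda^{-1/(2\beta)}/T$. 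The restriction $\beta>1$ (rather than $\beta>\frac12$) should be exactly what makes these second-order integrals converge near $s=0$. I expect identifying this, and the sign $\kappa_\beta>0$, to be the delicate part. I would compute the sign explicitly from the Beta values.

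Plugging the scaling into the ratio gives
\[
\alpha^\star=1-\gamma\,\frac{\sigma_{\max}^{1/\beta}}{T}\,\frac{\sum_i \lambda_i^{1-\frac1\beta}\,\lambda_i^{-\frac1{2\beta}}}{\sum_j\lambda_j^{1-\frac1\beta}}\,C_\beta .
\]
This uses $g_i\Delta^{[2]}_i/\lambda_i\propto \lambda_i^{-\frac1{2\beta}}\lambda_i^{1-\frac1\beta}\lambda_i^{-\frac1{2\beta}}\cdot\lambda_i^{\frac1{2\beta}}$, which I would recheck carefully. The constant $C_\beta$ collects $\kappa_\beta$ and the Beta constant. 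The $o(\sigma_{\max}^{1/\beta})$ remainder comes from the $o(1)$ corrections in the asymptotics above. Finally, I would verify that the stationary point is a minimum, which holds since the $\gamma^2$ quadratic coefficient $\sum_i g_i^2/\lambda_i>0$. I would also check that the $o(\gamma)$ remainder is justified by the implicit function theorem applied to $\gamma^{-3}\partial_\alpha\mathrm{FD}$ in the variable $\delta$.
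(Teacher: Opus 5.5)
Your reduction follows the paper's proof. For VE only the covariance part of \eqref{eq:third_order} survives, and $\alpha=1$ is a nondegenerate minimizer of the $\gamma^2$ term. Your shift $\alpha^\star-1=-\frac{\gamma}{2}\frac{\sum_i g_i\Delta^{\Sigma,[2]}_i/\lambda_i}{\sum_i g_i^2/\lambda_i}$ coincides with the paper's implicit-function formula $-\gamma\,\partial_\alpha\mathcal E^{[1]}(1)/\partial_{\alpha\alpha}\mathcal E^{[0]}(1)$. This is because $\partial_\alpha\Delta^{\Sigma,[1]}(1,\lambda)=2g(\lambda)$ (your $g$, at $\alpha=1$), and $\Delta^{\Sigma,[1]}(1,\lambda)=0$ removes the cubic Bures term. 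Your time-scaling argument also gives the correct form $\kappa_\beta T^{-2}\lambda^{1-1/\beta}\sigma_{\max}^{2/\beta}$ for $\Delta^{\Sigma,[2]}(1,\lambda)$, and hence the weights.

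What is missing is exactly what scaling cannot give and what you postpone: the sign of $\kappa_\beta$. The statement asserts $w_{\beta,i}>0$, i.e.\ $\alpha^\star<1$, and this is not visible term by term. Set $z=\sigma_{\max}^2/\lambda$, $x=s/T$, $u=zx^{2\beta}$. The leading part of $\Delta^{\Sigma,[2]}(1,\lambda)$ is then $\frac{\lambda}{6T^2}z^{1/\beta}\int_0^\infty u^{-1/\beta}(1+u)^{-4}p_\beta(u)\,du$, where $p_\beta(u)=-(\beta-1)(2\beta-1)-2(\beta-1)(2\beta-1)u+(10\beta^2+3\beta-1)u^2$. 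Its first two coefficients are negative for $\beta>1$. (Your guess is right that $\beta>1$ is what makes the $u^{-1/\beta}$ singularity integrable.) The paper settles the sign with Beta recurrences. With $I_k=\int_0^\infty u^{k-1/\beta}(1+u)^{-4}\,du$, one has $I_0=\frac{(\beta+1)(2\beta+1)}{(\beta-1)(2\beta-1)}I_2$ and $I_1=\frac{\beta+1}{2\beta-1}I_2$, which collapse the combination to $6\beta^2 I_2$. This gives $\kappa_\beta=\beta^2\operatorname{B}(3-\frac1\beta,1+\frac1\beta)>0$ and $C_\beta=\kappa_\beta/\big(\beta\operatorname{B}(2-\frac1{2\beta},1+\frac1{2\beta})\big)$.

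Your plan to ``check that all boundary terms vanish under Assumption~\ref{ass:regularity_sched}(ii)'' would go wrong at exactly this step. In \eqref{eq:gaussian_second_order_cov} the first-order coefficient enters through $\tilde\Delta^{\Sigma,[1]}_s$, which is accumulated over $[s,T]$. At $\alpha=1$ with $A\equiv0$, one has $N_u(\dot B_u-2B_u^2)=\frac{d}{du}(N_uB_u)$. Hence $\tilde\Delta^{\Sigma,[1]}_s=\lambda N_sB_s-\lambda N_TB_T$ is entirely a boundary term, and the assumption says nothing about the moving endpoint $s$.

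The piece $\lambda N_sB_s$ must be kept. Merged with the remaining terms, it gives the integrand $\lambda N_s\big(-\frac16\ddot B_s-B_s\dot B_s+\frac{10}{3}B_s^3\big)$ that produces $p_\beta$; dropping it would multiply $\kappa_\beta$ by $4$. The bracket in the display preceding \eqref{eq:before_beta} already contains this piece. So the factor multiplying $-2(\dot B_s+2B_s^2)$ there should only be $-\lambda N_TB_T$; counting $\lambda N_sB_s$ twice would give $-2\beta^2I_2<0$. The $T$-endpoint piece is negligible but does not vanish. The paper bounds its contribution by $R_z=O(z^{-1}+z^{\frac1{2\beta}-1})=o(z^{1/\beta})$.
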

\vspace{-0.2cm}
In particular, the leading $O(\gamma)$ term predicts the slope of the linear $\alpha^\star(\gamma)$ trend in Figure~\ref{fig:FID_alpha_FFHQ_nstep}(c). This suggests that a few coarse-step experiments can be used to fit the theoretical scaling law and extrapolate the optimal $\alpha^\star$ to smaller $\gamma$.

The expression for $\alpha^\star(\gamma)$ in Proposition~\ref{prop:opt_alpha} is eigendirection-dependent: directions with smaller variance $\lambda_i$ favor smaller values of $\alpha^*$. In Appendix~\ref{app:alpha_per_eig}, we show that this spectral inhomogeneity also appears in image sampling by decomposing FID along the covariance eigenbasis.

\paragraph{Optimal rescaling schedule $\eta_t$}
%\label{sec:opt_c}

We now optimize the rescaling schedule $\eta_t$, for a given diffusion parameter $\alpha$ and noise schedule $\sigma_t$. In the large final noise regime of Proposition~\ref{prop:mean_cov_Gaussian}, both first-order mean~\eqref{eq:mean_error_gauss} and covariance~\eqref{eq:cov_error_gauss} errors in one eigendirection of variance $\lambda$ are canceled by
\begin{align}
  \frac{\dot\eta_s}{\eta_s}
  +
  (1+\alpha)\frac{\sigma_s\dot\sigma_s}{\lambda+\sigma_s^2}
  = 0 \quad \Leftrightarrow \quad \eta_t^\star(\lambda)
  =
  \left(\frac{\lambda}{\lambda+\sigma_t^2}\right)^{\frac{\alpha+1}{2}}.
\label{eq:opt_rescaling}
\end{align}
For ODE sampling ($\alpha=0$), since $\cov(X_t) = \eta_t^2(\Sigma_\data + \sigma_t^2 \id)$, the above $\eta_t^\star(\lambda)$ exactly preserves variance along eigendirection $\lambda$ during sampling, and precisely coincides with VP when $\Sigma_\data=\id$. For anisotropic data, however, it cannot be imposed in all directions simultaneously. Note that if $\alpha > 0$, the forward process~\eqref{eq:forward_SDE} then converges to dirac $0$ : $\eta^*_t(\lambda)^2(\Sigma_\data + \sigma_t^2 \id) \to 0$ as $\sigma_t \to \infty$.  

In the following, we thus keep $\alpha=0$ and replace $\lambda$ with a scalar parameter $c_\eta>0$, which is chosen by minimizing the leading-order Gaussian Fr\'echet objective~\eqref{eq:gaussian_expansion_Fréchet} for $\mu_\data=0$:
\begin{align}
 \eta_t(c)=\left(\frac{c_\eta}{c_\eta+\sigma_t^2}\right)^{1/2}, \quad c_\eta^\star \in \argmin_{c>0} \sum_{i=1}^d \frac{\Delta^{\Sigma,[1]}(\lambda_i;c_\eta)^2}{\lambda_i}  \label{eq:c_obj}\enspace.
\end{align}
The forward process~\eqref{eq:forward_SDE} with $\eta_t = \eta_t(c_\eta)$ then converges to $\mathcal{N}(0,c_\eta\id)$, so $c_\eta$ controls the variance of the backward initialization: $c_\eta=1$ gives VP, while $c_\eta\gg \sigma_T^2$ approaches VE.  For a finite anisotropic spectrum $(\lambda_i)_{i\le d}$, $c_\eta^\star$ does not admit a closed-form expression. Given that empirical image spectra are close to power laws (see Appendix Figure~\ref{fig:empiricalspectrum}), we consider 
$
\lambda_i = \lambda_{\max} i^{-p}.
$
Proposition~\ref{prop:opt_eta_powerlaw} then describes the structure of the corresponding leading-order minimizer.%, for polynomial noise schedules~$\sigma_t$.
\begin{prop}[Optimal $c_\eta$ parameter for power-law spectra, proof in Appendix~\ref{app:opt_rescaling}]
\label{prop:opt_eta_powerlaw}
In dimension $d \geq 2$, assume a power-law spectrum $\lambda_i=\lambda_{\max}i^{-p}$ with $p>0$ and a polynomial noise schedule $\sigma_t=\sigma_{\max}(t/T)^\beta$. For large $\sigma_{\max}$ and $\beta$, the leading-order minimizer of~\eqref{eq:c_obj} satisfies: \vspace{-0.2cm}
\[
c_\eta^\star(\lambda_{\max},p)=\lambda_{\max}\tau_p^\star, \text{ for $\tau_p^\star \in (0,1)$ minimizing } 
\sum_{i=1}^d i^{-p}g(\tau i^p) \text{ with }
g(z)=\left[\frac{z+1}{z-1}\log z-2\right]^2 .
\] 
\end{prop}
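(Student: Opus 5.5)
The plan is to compute the first-order covariance defect $\Delta^{\Sigma,[1]}(\lambda;c_\eta)$ in closed form for $\alpha=0$ and $\eta_t=\eta_t(c_\eta)$, simplify it in the regime of large $\sigma_{\max}$ and $\beta$, and then study the resulting one-dimensional minimization.

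\textbf{Step 1: specialize Proposition~\ref{prop:mean_cov_Gaussian}.} With $\eta_t=(c/(c+\sigma_t^2))^{1/2}$ we have $A_s=-\sigma_s\dot\sigma_s/(c+\sigma_s^2)$. Hence
\[
A_s+B_s(\lambda)=\sigma_s\dot\sigma_s\,\frac{c-\lambda}{(\lambda+\sigma_s^2)(c+\sigma_s^2)},
\]
and with $\alpha=0$, \eqref{eq:cov_error_gauss} becomes
\[
\Delta^{\Sigma,[1]}(\lambda;c)=-\lambda\int_0^T\frac{(\sigma_s\dot\sigma_s)^2(c-\lambda)^2}{(\lambda+\sigma_s^2)^2(c+\sigma_s^2)^2}\,ds
\]
plus boundary terms. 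I will check directly that these boundary terms vanish. At $s=T$, $A+B=O(\sigma_T\dot\sigma_T/\sigma_T^4)\to0$ as $\sigma_{\max}\to\infty$. At $s=0$, $\sigma_s\dot\sigma_s\to0$ for the polynomial schedule with $\beta>1/2$.

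\textbf{Step 2: large-$\beta$ reduction.} I change variables to $u=\sigma_s^2$, so $\sigma_s\dot\sigma_s\,ds=du/2$, and use $\sigma_s\dot\sigma_s=\beta u/s$ with $s=T(u/\sigma_{\max}^2)^{1/(2\beta)}$. For large $\beta$, the factor $s$ equals $T(1+O(\beta^{-1}\log))$ uniformly on the range of $u$ that carries the mass of the integrand (a bounded window in $\log u$ around $\lambda$ and $c$). For large $\sigma_{\max}$ the upper limit extends to $\infty$, and the tail beyond $\sigma_{\max}^2$ decays like $u^{-3}$. This gives
\[
\Delta^{\Sigma,[1]}(\lambda;c)\approx-\frac{\beta\lambda}{2T}\int_0^\infty\frac{u(c-\lambda)^2}{(\lambda+u)^2(c+u)^2}\,du .
\]
Rescaling $u=\lambda v$ and setting $z=c/\lambda$ makes this scale-free. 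A partial-fraction computation gives
\[
\int_0^\infty\frac{v\,dv}{(1+v)^2(z+v)^2}=\frac{(z+1)\log z-2(z-1)}{(z-1)^3},
\]
so that
\[
\Delta^{\Sigma,[1]}(\lambda;c)\approx-\frac{\beta\lambda}{2T}\Big[\tfrac{z+1}{z-1}\log z-2\Big],
\]
and therefore $\Delta^{\Sigma,[1]}(\lambda;c)^2/\lambda\approx\frac{\beta^2}{4T^2}\,\lambda\, g(c/\lambda)$.

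\textbf{Step 3: the one-dimensional problem.} Summing over $\lambda_i=\lambda_{\max}i^{-p}$ and writing $c=\lambda_{\max}\tau$, the objective \eqref{eq:c_obj} is, up to the positive prefactor $\lambda_{\max}\beta^2/(4T^2)$,
\[
\sum_{i=1}^d i^{-p}g(\tau i^p).
\]
Its minimizer gives $c_\eta^\star=\lambda_{\max}\tau_p^\star$. It remains to localize $\tau_p^\star$ in $(0,1)$. The function $g$ is nonnegative and has a double zero at $z=1$, since $\frac{z+1}{z-1}\log z-2=\frac{(z-1)^2}{6}+O((z-1)^3)$. Moreover $g$ is increasing in $|\log z|$, because $h(z)$ is invariant under $z\mapsto1/z$ and is increasing for $z>1$. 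Finally $g(z)\sim(\log z)^2\to\infty$ as $z\to0$. These facts give the localization:
\begin{itemize}
\item For $\tau\ge1$, every argument $\tau i^p\ge1$, so decreasing $\tau$ toward $1$ strictly decreases the objective.
\item At $\tau=1$, the $i=1$ term contributes only at second order, while the terms $i\ge2$ (which exist since $d\ge2$) have nonzero first derivative $i^{-p}g'(i^p)i^p>0$. Hence moving slightly below $1$ strictly decreases the objective.
\item As $\tau\to0$, the objective diverges.
\end{itemize}
By continuity, a minimizer exists and lies in $(0,1)$.

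\textbf{Main obstacle.} The delicate step is Step 2: making the large-$\beta$, large-$\sigma_{\max}$ approximation rigorous uniformly in $c$ near the minimizer. Concretely, one must control the error from replacing $s$ by $T$ and from truncating the $u$-integral, and show that these errors are $o(1)$ relative to the leading term. Only then does the argmin of the true objective converge to that of the limiting one, which follows by uniform convergence on compacts of $\tau\in(0,\infty)$ together with coercivity.
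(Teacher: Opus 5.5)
Your proof is correct. Steps~1--2 match the paper: specialize \eqref{eq:cov_error_gauss} at $\alpha=0$, substitute $u=\sigma_s^2$, rescale by $\lambda$, and read off $h(c/\lambda)$. Your partial-fraction evaluation $\int_0^\infty v(1+v)^{-2}(z+v)^{-2}\,dv=\big((z+1)\log z-2(z-1)\big)/(z-1)^3$ is right, and it supplies a computation the paper only states.

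The real difference is in Step~3. The paper works in logarithmic variables and writes $g(e^x)=4r(x/2)^2$ with $r(y)=y\coth y-1$. It then uses $r''(y)=2r(y)/\sinh^2 y$ to show that $r^2$ is strictly convex. Hence $s\mapsto\Psi_p(e^s)$ is strictly convex and coercive, so it has a \emph{unique} minimizer, and $\widetilde\Psi_p'(0)>0$ places it at $s<0$.

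You instead use only four facts: the symmetry $h(1/z)=h(z)$, monotonicity of $h$ on $(1,\infty)$, the sign $\Psi_p'(1)=\sum_{i\ge2}g'(i^p)>0$, and blow-up as $\tau\to0$. This is more elementary and does show that every minimizer lies in $(0,1)$. You should still justify strict monotonicity of $h$ on $(1,\infty)$, for instance via $h'(z)=(z^2-1-2z\log z)/\big(z(z-1)^2\big)>0$.

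What your route does not give is uniqueness. Uniqueness is implicit in ``the leading-order minimizer'' and in the notation $c_\eta^\star(\lambda_{\max},p)$. The paper also relies on it afterwards to differentiate $\tau_p^\star$ with respect to $p$. Adding the log-convexity argument would close this.

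One smaller imprecision in Step~2. The substitution gives exactly $1/s=T^{-1}\sigma_{\max}^{1/\beta}u^{-1/(2\beta)}$. Your claim $s=T(1+O(\beta^{-1}\log))$ on the bulk of the integrand therefore requires $\beta\gg\log\sigma_{\max}$. In the paper's ordering, where $\sigma_{\max}\to\infty$ first at fixed $\beta$, one has $s/T\to0$ for $u$ of order $\lambda$. This is harmless for the argmin, because $\sigma_{\max}^{1/\beta}$ is a common factor independent of $i$ and $c$. Still, you should keep that factor explicitly, as the paper does, and only send $u^{-1/(2\beta)}\to1$ as $\beta\to\infty$.
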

\vspace{-0.4cm}
The evolution of $\tau_p^\star$ with $p$ is further described and plotted in Appendix~\ref{app:opt_rescaling}.

Figure~\ref{fig:opt_c_datasets} (top) confirms the prediction $c_\eta^\star(\lambda_{\max},p)=\lambda_{\max}\tau_p^\star$ from Proposition~\ref{prop:opt_eta_powerlaw} on CIFAR, FFHQ and ImageNet sampling with $K=100$ steps. Figure~\ref{fig:opt_c_datasets} compares, for each dataset,  the empirical sampling FID (a) with the theoretical Gaussian Fréchet error (b), computed from~\eqref{eq:gaussian_expansion_Fréchet} using an empirical estimation of each covariance spectrum. Across datasets, the fitted exponent~$p$ changes only mildly, with limited impact on $\tau_p$, while $\lambda_{\max}$ changes substantially. Correspondingly, both empirical FID and Gaussian FD are minimized at a $c_\eta^*$ value increasing with $\lambda_{\max}$.

\vspace{-0.2cm}
\begin{figure}[h]
    \centering
    \begin{subfigure}[t]{0.37\linewidth}
    \captionsetup{skip=0pt}
    \vspace{0pt}
    \centering
    \includegraphics[width=\linewidth]{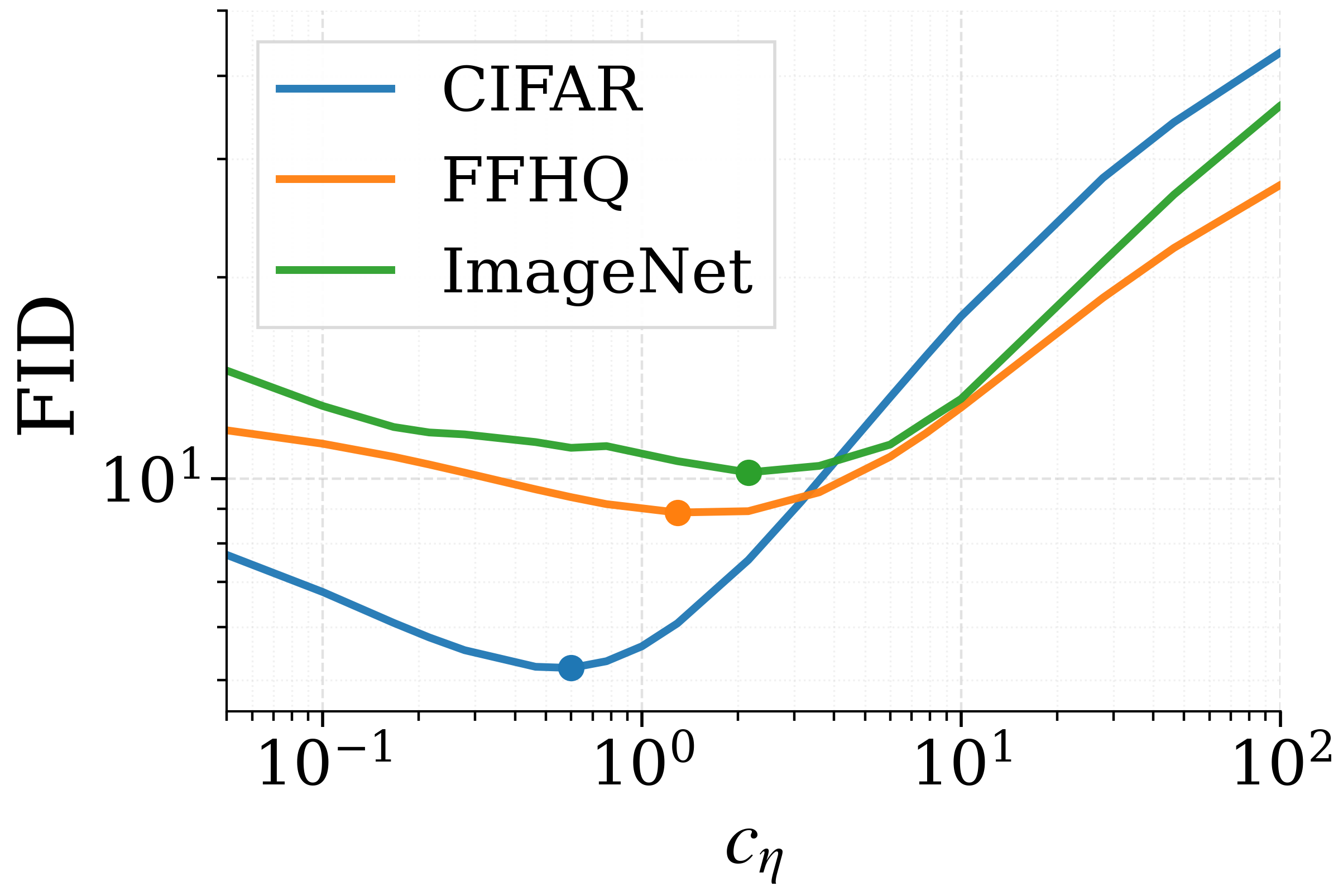}
     \caption{Image datasets sampling FID}
     \end{subfigure}
    \hfill
    \begin{subfigure}[t]{0.37\linewidth}
    \captionsetup{skip=0pt}
    \vspace{0pt}
    \centering
    \includegraphics[width=\linewidth]{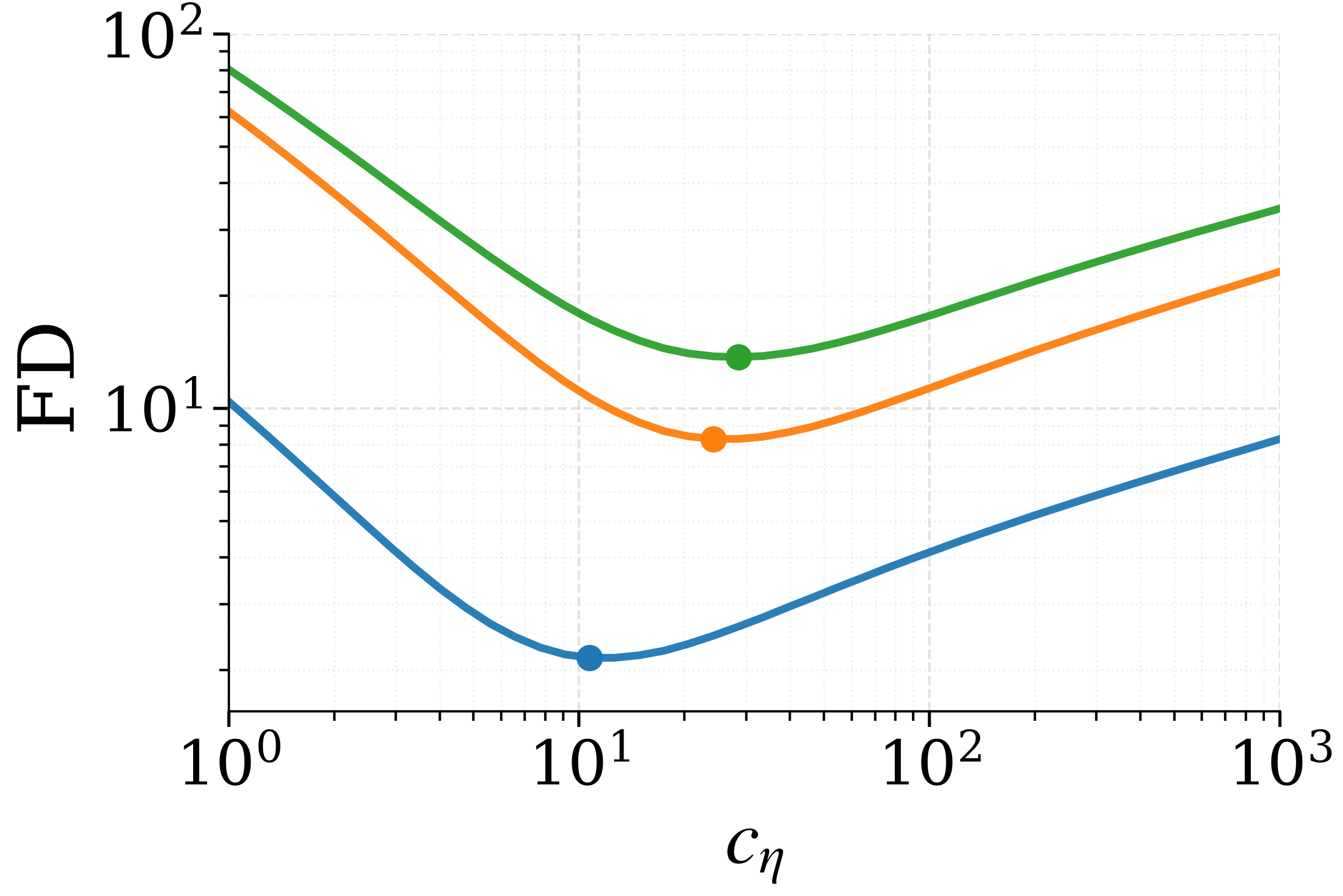}
     \caption{Datasets Gaussian theory FD}
     \end{subfigure}
    \hfill
    \begin{subfigure}[t]{0.22\linewidth}
    \captionsetup{skip=0pt}
    \vspace{0.5cm}
    \centering
    \footnotesize
    \renewcommand{\arraystretch}{1.15}
    \setlength{\tabcolsep}{3pt}
    \begin{tabular}{@{}lcc@{}}
    \hline
    Dataset & $\lambda_{\max}$ & $p$ \\
    \hline
    {\color{blue} CIFAR} & 55.3 & 1.43 \\
    {\color{orange} FFHQ} & 173 & 1.47 \\
    {\color{green!70!black} ImageNet} & 225 & 1.35 \\
    \hline
    \end{tabular}
    \end{subfigure} \\
    \vspace{0.4cm}
    \hspace{-0.3cm}
    \begin{subfigure}[t]{0.37\linewidth}
    \captionsetup{skip=0pt}
    \centering
    \includegraphics[width=\linewidth]{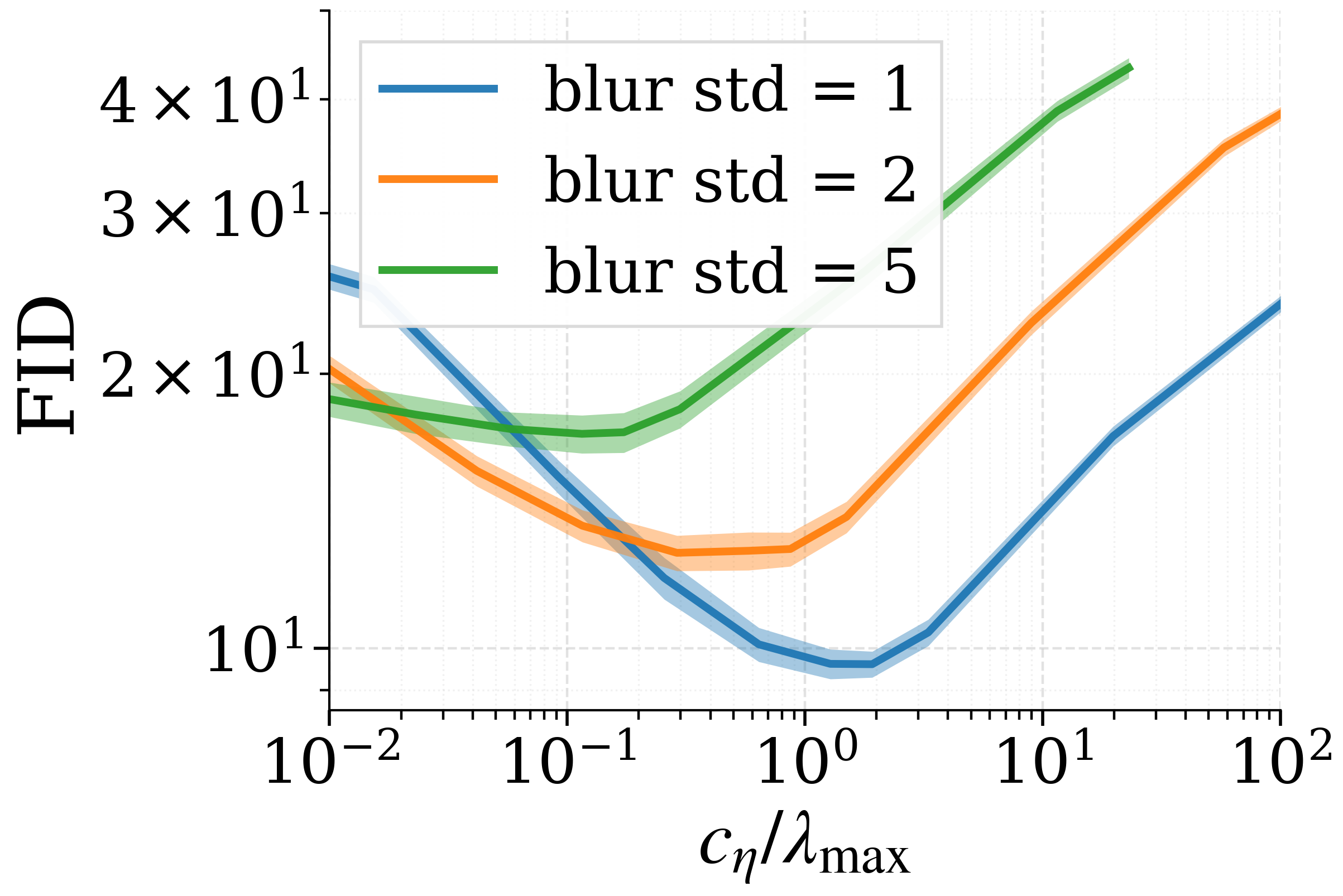}
     \caption{Image posterior sampling FID}
     \end{subfigure}
      %\hspace{0.2cm}
      \hfill
    \begin{subfigure}[t]{0.37\linewidth}
    \captionsetup{skip=0pt}
    \centering
    \includegraphics[width=\linewidth]{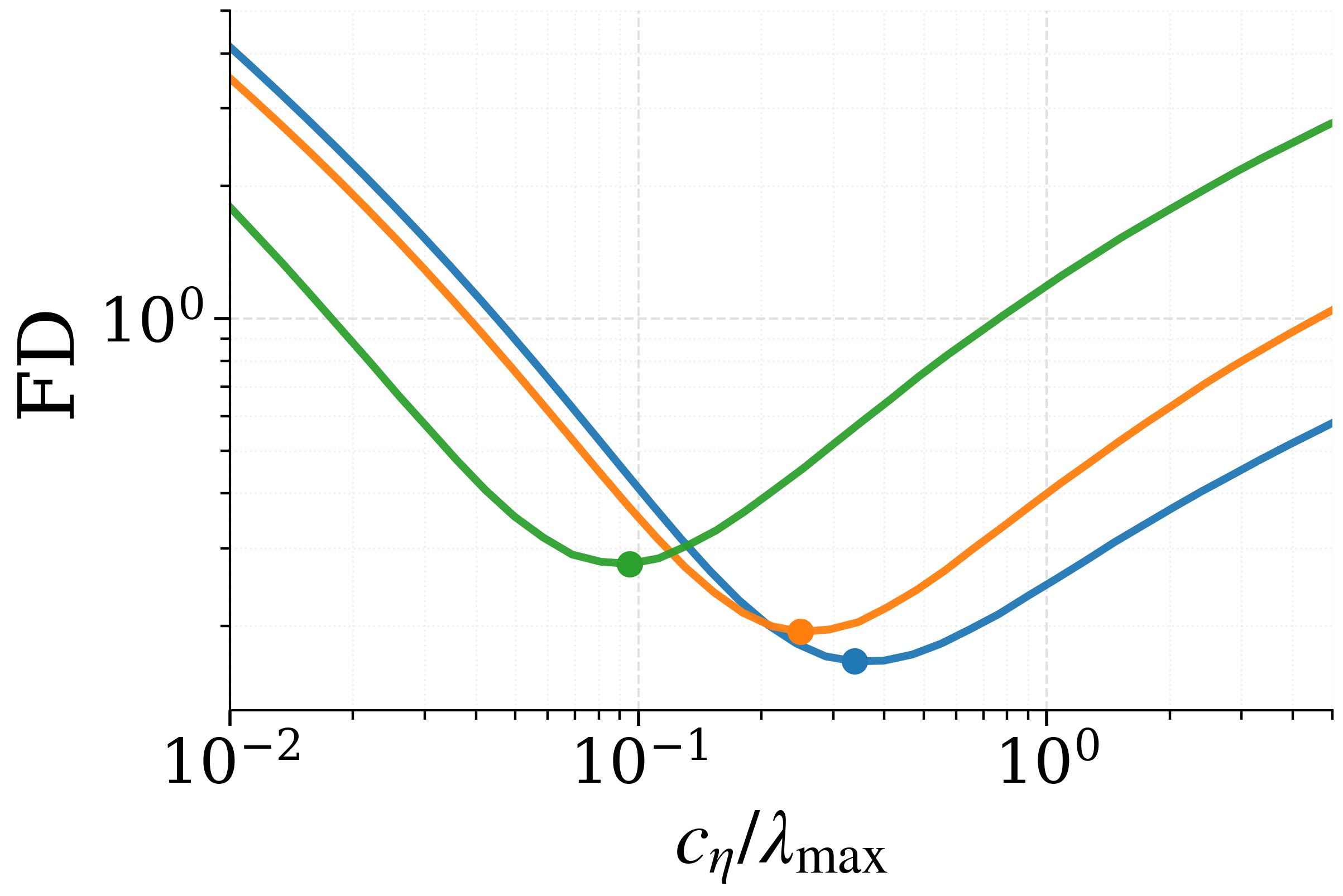}
     \caption{Posterior Gaussian theory FD}
     \end{subfigure}
     \hfill
     \begin{subfigure}[t]{0.22\linewidth}
     \captionsetup{skip=0pt}
    \vspace{-3cm}
    \centering
    \footnotesize
    \renewcommand{\arraystretch}{1.15}
    \setlength{\tabcolsep}{3pt}
    \begin{tabular}{@{}ccc@{}}
    \hline
    blur std & $\lambda_{\max}$ & $p$ \\
    \hline
    {\color{blue} 1} & 0.39 & 0.57 \\
    {\color{orange} 2} & 0.86 & 0.61 \\
    {\color{green!70!black} 5} & 0.78 & 0.83 \\
    \hline
    \end{tabular}
    \end{subfigure}
    %\caption{Image sampling FID (a) and Gaussian theory Fréchet Distance (b) as functions of $c$ for CIFAR, FFHQ and ImageNet. The table reports the values of $\lambda_{\max}$ and $p$ obtained by fitting the power-law model $\lambda(u)=\lambda_{\max}u^{-p}$ to the empirical spectra. A per-$\lambda_i$ view of the error is defferent to Appendix~\ref{app:opt_c_additional}, Figure~\ref{fig:opt_c_per_eig}.
    %}
    \caption{Empirical FID and theoretical Fréchet Distance (FD) for image sampling across datasets (top) and for deblurring posteriors (bottom), using $K=100$ steps, $\alpha = 0$ and $\sigma_t = \sigma_{max}\left(\frac{t}{T}\right)^{5}$. Power-law parameters fitted on empirical spectrum (Figure~\ref{fig:empiricalspectrum}) are reported in the tables. Per-eigendirection error curves are given in Appendix~\ref{app:c_per_eig}.}
    \label{fig:opt_c_datasets}
    \vspace{-0.2cm}
\end{figure}

To isolate the role of $p$, we also study image deblurring posteriors $p(x\mid y)$ with $y=k*x+w$, varying the Gaussian blur standard deviation in kernel $k$. Larger blur yields faster spectral decay of $\cov[x | y]$ and thus larger $p$ (Figure~\ref{fig:opt_c_datasets}, table). Using the moment-matching approximation~\citep{rozet2024learning} (exact for Gaussian data; Appendix~\ref{app:posterior_sampling}), Figure~\ref{fig:opt_c_datasets} again compares FID (c) with theoretical Gaussian FD (d). After rescaling by $\lambda_{\max}$, the remaining variation is due to $p$: both empirical and theoretical errors have similar shapes, and their minimizers $\tau_p^\star = c_\eta^*/\lambda_{\max}$ shift to smaller values as $p$ increases. As detailed at the end of Appendix~\ref{app:opt_rescaling}, this behavior is consistent with Proposition~\ref{prop:opt_eta_powerlaw}: Figure~\ref{fig:tau_p} predicts a decrease of $\tau_p^\star$ over the range of $p$ values representing these posterior distributions.

\paragraph{Optimal noise schedule $\sigma_t$}

\looseness=-1
Finally, we propose to optimize the noise schedule $\sigma_t$ for fixed diffusion parameter $\alpha$ and rescaling $\eta_t$. As in previous cases, we start with the one-dimensional problem: for one eigendirection of variance $\lambda$, the following proposition gives the optimal choice of $\sigma_t^*$ that minimizes the first-order Fr\'echet error~\eqref{eq:gaussian_expansion_Fréchet} for VE (Appendix~\ref{app:optimal_noise_schedule_VP} shows an analogous formula for VP).

\begin{prop}[Single-eigendirection optimal noise schedule in the variance exploding case, proof in Appendix~\ref{app:main_prop_proofs}]
  \label{prop:opt_sigma_ve}
  Fix one eigendirection of variance $\lambda$, let $\eta_t = 1$ (VE), and define $q_\alpha \eqdef 1-2\alpha-\alpha^2$.
  The following schedule cancels the first-order one-dimensional Fréchet error:
  \begin{align} \label{eq:sigma_opt_1D}
    \sigma_t^*(\lambda)^2
    =
    \begin{cases}
      \lambda\left[
      \left(1+\frac{\sigma_{\max}^2}{\lambda}\right)^{t/T}
      -1
      \right], & q_\alpha = 0, \\[0.4em]
      \lambda\left[
      \left(
      1
      +
      \frac{t}{T}
      \left(
      \left(1+\frac{\sigma_{\max}^2}{\lambda}\right)^{q_\alpha/2}
      - 1
      \right)
      \right)^{2/q_\alpha}
      -1
      \right], & q_\alpha \neq 0.
    \end{cases}
  \end{align}
\end{prop}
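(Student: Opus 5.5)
The plan is to make the first-order one-dimensional Fréchet error vanish by making the integrand of the covariance error vanish pointwise in time, rather than only its integral. This turns the problem into a first-order ODE for $\sigma_t$, which can be solved explicitly.

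\emph{Reduction.} Since $\eta_t=1$, we have $A_s=0$. So $\Delta^{\mu,[1]}(\lambda)=0$ by~\eqref{eq:mean_error_gauss}; the VE discussion after Proposition~\ref{prop:mean_cov_Gaussian} even gives $\EE[\hat Y_K]=\mu_\data+O(\gamma^3)$. By~\eqref{eq:gaussian_expansion_Fréchet} restricted to one eigendirection, the first-order Fréchet error is then $\frac{\gamma^2}{4}\Delta^{\Sigma,[1]}(\lambda)^2/\lambda$. It therefore suffices to find $\sigma_t$ with $\Delta^{\Sigma,[1]}(\lambda)=0$. With $A_s=0$, \eqref{eq:cov_error_gauss} reads
\[
\Delta^{\Sigma,[1]}(\lambda)=-\lambda\int_0^T N_s^\alpha(1-\alpha^2)B_s^2\,ds-\Big[\lambda N_s^\alpha B_s\Big]_0^T .
\]

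\emph{Folding the boundary term into the integral.} The key observation is that $\frac{d}{ds}\log N_s=-2\sigma_s\dot\sigma_s/(\lambda+\sigma_s^2)=-2B_s$. Hence $\frac{d}{ds}N_s^\alpha=-2\alpha N_s^\alpha B_s$, and so
\[
\frac{d}{ds}\big(\lambda N_s^\alpha B_s\big)=\lambda N_s^\alpha\big(\dot B_s-2\alpha B_s^2\big).
\]
Writing the bracket term as the integral of this derivative gives
\[
\Delta^{\Sigma,[1]}(\lambda)=-\lambda\int_0^T N_s^\alpha\big(\dot B_s+q_\alpha B_s^2\big)\,ds,\qquad q_\alpha=1-2\alpha-\alpha^2 .
\]
A sufficient condition for cancellation is therefore the Riccati-type equation $\dot B_s+q_\alpha B_s^2=0$. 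This keeps the boundary contributions exactly rather than discarding them via Assumption~\ref{ass:regularity_sched}(ii), which is convenient because the schedule we will obtain need not satisfy $\sigma_t=O(t)$.

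\emph{Solving the ODE.} Set $u_s=\log(\lambda+\sigma_s^2)$, so that $B_s=\dot u_s/2$. Whenever $B$ does not vanish, the equation gives $1/B_s=q_\alpha s+c$.
\begin{itemize}
\item If $q_\alpha=0$, then $B$ is constant, so $u$ is affine in $s$. Imposing $\sigma_0=0$ and $\sigma_T=\sigma_{\max}$ yields $\lambda+\sigma_t^2=\lambda(1+\sigma_{\max}^2/\lambda)^{t/T}$, which is the first case of~\eqref{eq:sigma_opt_1D}.
\item If $q_\alpha\neq0$, then $u_s=\frac{2}{q_\alpha}\log(q_\alpha s+c)+\mathrm{const}$, that is, $(\lambda+\sigma_s^2)^{q_\alpha/2}$ is affine in $s$. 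Interpolating this affine function between the values $\lambda^{q_\alpha/2}$ at $s=0$ and $(\lambda+\sigma_{\max}^2)^{q_\alpha/2}$ at $s=T$ gives exactly the second case of~\eqref{eq:sigma_opt_1D}.
\end{itemize}
In both cases one should also check that the resulting $\sigma_t$ is increasing, nonnegative and smooth on $(0,T]$. For this, note that $(1+\sigma_{\max}^2/\lambda)^{q_\alpha/2}-1$ has the sign of $q_\alpha$, so the base $1+\frac tT(\cdots)$ stays positive on $[0,T]$, and the power $2/q_\alpha$ restores monotonicity.

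\emph{Main obstacle.} The delicate point is regularity at $s=0$. The schedule behaves like $\sigma_t^2\sim ct$, so $\sigma_t\sim\sqrt t$, and $\dot\sigma_t$ blows up. We then have to check the following.
\begin{itemize}
\item Assumption~\ref{ass:regularity_sched}(i) still holds: $\xi_t=\sigma_t\dot\sigma_t$ equals $\frac12\frac{d}{dt}\sigma_t^2$, which is smooth up to $t=0$, and $\beta_t=0$.
\item The integration by parts is legitimate. Here $B_s=\xi_s/(\lambda+\sigma_s^2)$ is bounded and smooth on $[0,T]$, so this holds as well.
\end{itemize}
Equivalently, it is cleaner to argue directly with $\xi$ and $\sigma^2$ in place of $\sigma$ throughout, which avoids the square-root singularity entirely.
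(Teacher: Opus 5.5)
Your proposal is correct and is essentially the paper's proof. Like the paper, you set $A_s=0$ and write $\Delta^{\Sigma,[1]}(\lambda)=-\lambda\int_0^T N_s^\alpha(\dot B_s+q_\alpha B_s^2)\,ds$, cancel the integrand pointwise, and solve the resulting Riccati equation; the only differences are that you recover this form by undoing the integration by parts, and that you argue via $1/B_s$ being affine where the paper shows $P_t=(1+\sigma_t^2/\lambda)^{q_\alpha/2}$ satisfies $\ddot P_t=0$. Your regularity check at $t=0$ ($\sigma_t\sim\sqrt t$ but $\xi_t$ smooth, so the exact formula applies without Assumption~\ref{ass:regularity_sched}(ii)) matches the paper's closing remark.
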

In multiple dimensions, the Fréchet objective sums eigendirection-wise errors and no single closed-form global minimizer exists. We can use~\eqref{eq:sigma_opt_1D} as a surrogate family by replacing $\lambda$ with a tunable scalar proxy $c_\sigma$. However, we observe that this parametrized schedule performs poorly for sampling highly anisotropic data such as images, because small and large eigenvalues favor incompatible values of~$c_\sigma$.
Figure~\ref{fig:FID_beta_GMM} demonstrates this effect on two $100$-dimensional Gaussian mixture models with different covariance spectra. For mild anisotropy, one choice of~$c_\sigma$ lets the above one-dimensional optimal family outperform polynomial schedules, consistent with Proposition~\ref{prop:opt_sigma_ve}. For strong anisotropy, however, no single $c_\sigma$ fits the whole spectrum, and polynomial schedules perform better.

\vspace{-0.4cm}
\begin{figure}[h]
\centering
\begin{minipage}{0.64\linewidth}
  \centering
  \vspace{0.6cm}
  \hspace{0.3cm}
\includegraphics[width=0.8\linewidth]{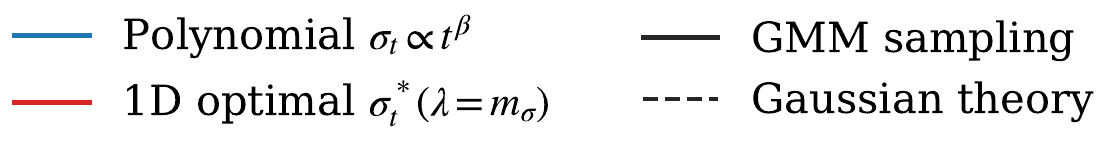}
\end{minipage}%
\hspace{0.3cm}
\begin{minipage}{0.30\linewidth}
\hspace{0.5cm}
\vspace{-2cm}
\includegraphics[width=0.8\linewidth]{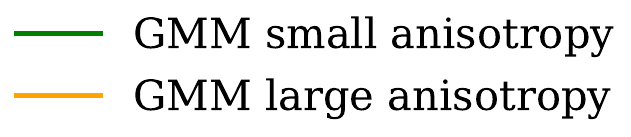}
\end{minipage}
\vspace{0.2em}
\begin{subfigure}{0.32\linewidth}
  \centering
  \includegraphics[width=\linewidth]{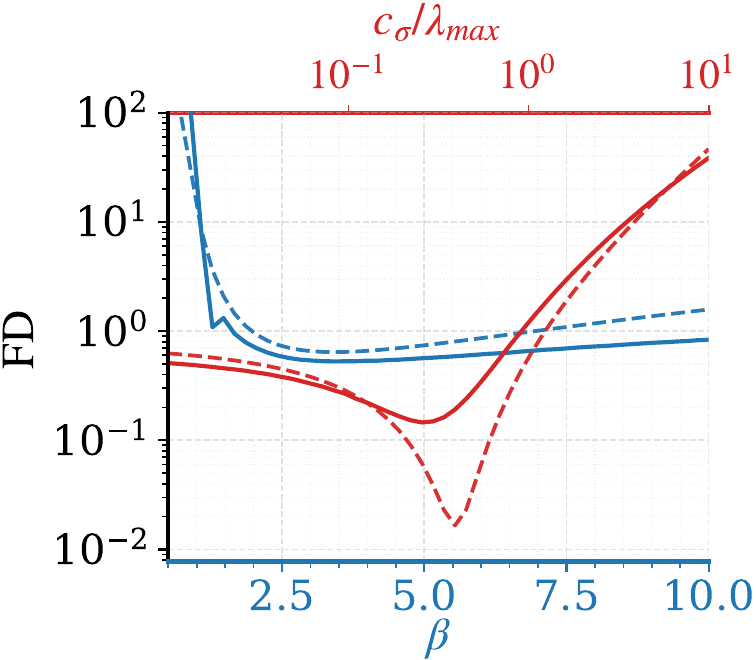}
  \caption{GMM small anisotropy}
\end{subfigure}
\begin{subfigure}{0.32\linewidth}
  \centering
  \includegraphics[width=\linewidth]{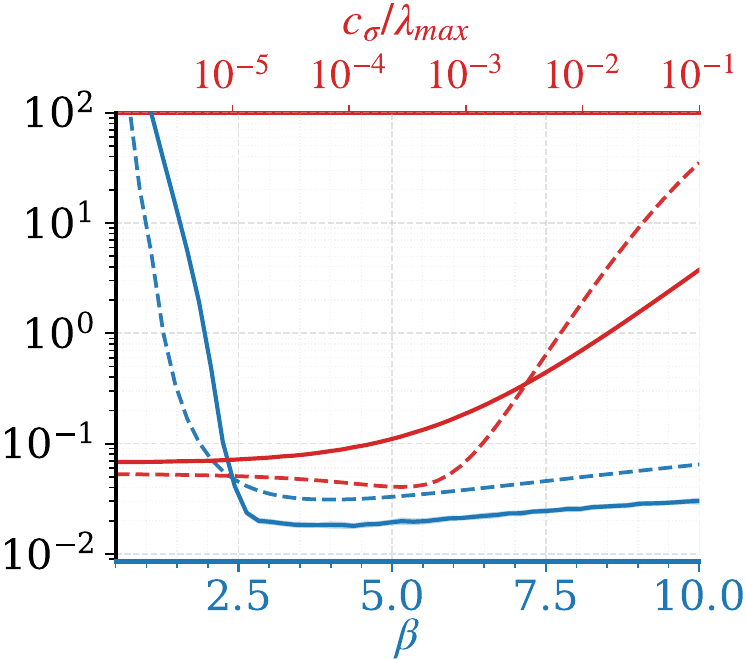}
  \caption{GMM large anisotropy}
\end{subfigure}
%\hspace{-0.3cm}
\begin{subfigure}{0.3\linewidth}
\vspace{-0.6cm}
  \centering
  \includegraphics[width=\linewidth]{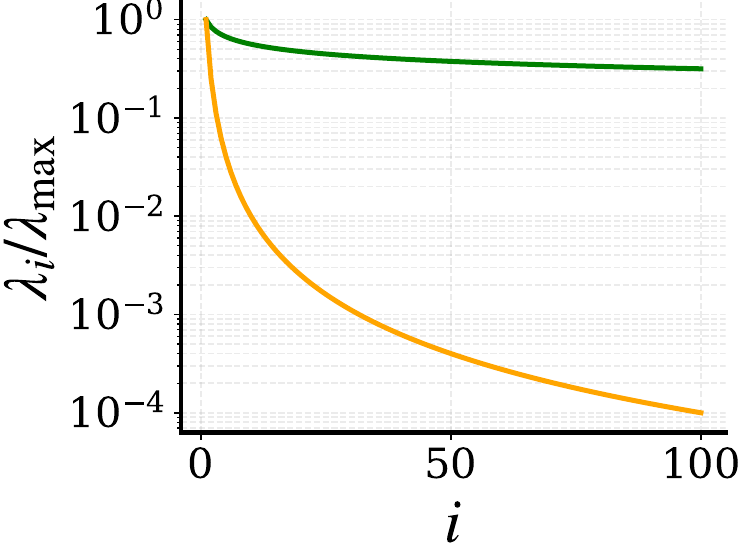}
  \caption{Covariance spectrum}
\end{subfigure}
\caption{Fréchet error for sampling a $100$-dimensional Gaussian mixture model with $10$ centers and covariance rescaled to follow two different power spectra shown in (c). Using $\alpha=0.25$ and $K=50$. Figures (a) and (b) compare two noise schedule families: the one-dimensional optimal schedule~\eqref{eq:sigma_opt_1D} $\sigma_t^*(c_\sigma)$ as a function of $c_\sigma$ (top axis), and the polynomial schedules $\sigma_t=\sigma_{\max}(t/T)^\beta$ as a function of $\beta$ (bottom axis). Solid curves show empirical FID and dashed curves Gaussian theory.}
\label{fig:FID_beta_GMM}
\end{figure}

Polynomial schedules \(\sigma_t = \sigma_{\max} (t/T)^{\beta}\) with moderately large \(\beta\) have indeed been empirically shown to be effective for sampling highly anisotropic data such as images, \citet{karras2022elucidating} recommending $\beta = 7$. Our theory explains this robustness: in the VE case~\eqref{eq:ve_cov_error_ibp}, the Gaussian Fr\'echet error~\eqref{eq:gaussian_expansion_Fréchet} along one eigendirection of variance $\lambda$ satisfies
$\FD_\lambda(\beta)\propto \lambda^{1-1/\beta}$. Hence, for two eigenvalues
$\lambda_1$ and $\lambda_2$: \vspace{-0.2cm}
\begin{equation*}
\frac{d}{d\beta}
\log\left(
\frac{\FD_{\lambda_2}(\beta)}
{\FD_{\lambda_1}(\beta)}
\right)
=
\frac{1}{\beta^2}
\log\left(
\frac{\lambda_2}{\lambda_1}
\right).
\end{equation*}
For large enough $\beta$, this derivative depends only weakly on the $\lambda$ quotient, so the shape of $\FD_\lambda(\beta)$ varies little across eigendirections. This explains why a single polynomial exponent can remain robust for the full Gaussian Fr\'echet objective $\FD(\beta)=\sum_i \FD_{\lambda_i}(\beta)$ even under strong anisotropy. This behavior is confirmed for real image datasets in Figure~\ref{fig:error_beta}: the optimal $\beta^\star$ remains unchanged across datasets, and the per-eigendirection error curves vary only mildly across eigenvalues $\lambda_i$.

\section{Conclusion}
We studied how the discretization error of diffusion models depends on the parameters of the dynamics. Starting from general weak and Fréchet error expansions, we derived explicit discretization error formulas under Gaussian data, revealing how the covariance spectrum controls the effect of the diffusion coefficient $\alpha$, and the diffusion schedules $\eta_t$ and $\sigma_t$. These formulas predict several practical trends observed in experiments, including the decrease of the optimal $\alpha$ under tighter discretization budgets, the dependence of the optimal rescaling $\eta_t$ on spectral anisotropy, and the robustness of polynomial noise schedules $\sigma_t$ for image-like spectra. Overall, our results provide a simple spectral perspective on diffusion sampler design and a theoretical basis for important empirical heuristics.

\paragraph{Limitations.} Our parameter-design analysis relies on Gaussian approximations, which capture useful global trends but remain far from the true distribution of natural images. Hence, it does not take into account discretization error effects due to multimodality. Another limitation is that we ignore score-estimation error, analyzed in a Gaussian setting by~\citet{hurault2025score}; this may introduce tradeoffs with discretization error when optimizing sampling parameters. Extending the analysis beyond the Gaussian case using our general error Theorem~\ref{thm:general_weak} and incorporating score-estimation error into parameter optimization are natural next steps.

\section{Acknowledgments}

This work was supported by the European Research Council (ERC project WOLF) and the French
government under the management of Agence Nationale de la Recherche as part of the “France
2030” program, reference ANR-23-IACL-0008 (PRAIRIE-PSAI). This work was performed using HPC resources from GENCI–IDRIS (Grant 2025-AD011015483R1).
	  
  \bibliographystyle{plainnat}
  \bibliography{refs}

  \newpage
  \appendix
  \counterwithin{figure}{section}
  \def\thefigure{\thesection.\arabic{figure}}

\section{Introduction details}

\subsection{Backward diffusion SDE}
\label{app:reverse_SDE}
Let \( (X_t)_{t \geq 0} \) be the stochastic process in \( \mathbb{R}^d \) defined as the solution to the stochastic differential equation:
\begin{align*}
    dX_t = -\beta_t(X_t)\,dt + \sqrt{2 \xi_t}\,dW_t,
\end{align*}
where \( \beta_t : \mathbb{R}^d \to \mathbb{R}^d \) is a measurable drift field,
and \( \xi_t > 0 \) is a time-dependent diffusion coefficient. 

We assume throughout that the drift $\beta_t$, the diffusion coefficient $\xi_t$, and the densities $p_t$ are sufficiently regular. In particular, $\xi_t$ is positive and bounded away from zero, the coefficients are smooth enough to ensure well-posedness of the SDE and Fokker-Planck equation, and the densities are smooth, positive, and sufficiently decaying at infinity. Under these assumptions, all scores and integrations by parts appearing below are well-defined.

First, the law \( p_t \) of \( X_t \) admits a smooth density that evolves according to the Fokker-Planck equation:
\begin{align*}
\partial_t p_t(x) = \operatorname{div} \left( \beta_t(x)\, p_t(x) \right) + \xi_t \Delta p_t(x),
\end{align*}
Equivalently, using the identity
\begin{align*}
\Delta p_t(x) = \operatorname{div} \left( \nabla \log p_t(x) \cdot p_t(x) \right),
\end{align*}
this equation can be rewritten in conservative form:
\begin{align*}
\partial_t p_t(x) = -\operatorname{div} \left( f_t(x, p_t)\, p_t(x) \right),
\end{align*}
where the effective velocity field \( f_t \) is given by:
\begin{align*}
f_t(x, p_t) = -\beta_t(x) - \xi_t \nabla \log p_t(x).
\end{align*}
Let us consider the reverse process $q_t = p_{T-t}$. We have
$ \partial_t q_t = - \partial_t p_{T-t} $ and the following Fokker--Planck equation for~$q_t$:
\begin{align*}
\partial_t q_t(x) = -\operatorname{div}(\beta_{T-t}(x)q_t(x)) - \xi_{T-t}\Delta q_t 
\end{align*}
As the diffusion term is negative, this equation is unstable. We can make it positive by using, for some $\alpha_t \geq 0$:
\begin{align*}
- \Delta q_t = \alpha_t \Delta q_t - (1+\alpha_t)\operatorname{div}( q_t \nabla \log q_t)
\end{align*}
to get
\begin{align*}\partial_t q_t(x) = -\operatorname{div} \left(\left((1+\alpha_t)\xi_{T-t}\nabla \log q_t + \beta_{T-t}(x)\right)q_t(x)\right) + \alpha_t \xi_{T-t} \Delta q_t
\end{align*}
for which the corresponding SDE is 
\begin{align*}dY_t = [(1+\alpha_t)\xi_{T-t} \nabla \log q_t(Y_t) + \beta_{T-t}(Y_t)]dt + \sqrt{2\alpha_t \xi_{T-t}} dW_t.
\end{align*}

\subsection{Initialization error}
\label{app:init_error}

Under Gaussian data, we quantify the error induced by replacing the exact terminal law
$$
p_T \propto p_{\mathrm{data}}\!\left(\frac{1}{\eta_T}\cdot\right)
* \mathcal{N}\left(0,(\eta_T\sigma_T)^2\id\right)
$$
by the Gaussian initialization $q_0=\mathcal N(0,(\eta_T\sigma_T)^2\id)$. 

\begin{prop}
\label{prop:gaussian_initialization_error_scaling}
Assume Gaussian data $p_{\data}=\mathcal N(\mu_{\data},\Sigma_{\data})$. Let $\widetilde Y_t$ be the solution of the reverse generative SDE~\eqref{eq:backward_SDE}   initialized using $q_0=\mathcal N(0,(\eta_T\sigma_T)^2\id)$
instead of the exact terminal law \(p_T\). At final sampling time, the Fréchet error between $\operatorname{Law}(\widetilde Y_T)$ and $p_{\data}$ satisfies, for any choice of rescaling schedule $\eta_t$:
\[
\FD(p_{\data},\operatorname{Law}(\widetilde Y_T))
=
O\!\left(\sigma_T^{-2(1+\alpha)}\right)
\qquad
\text{as } \sigma_T\to\infty.
\]
Moreover, if the data are centered, \(\mu_{\data}=0\), then:
\[
\FD(p_{\data},\operatorname{Law}(\widetilde Y_T))
=
O\!\left(\sigma_T^{-4(1+\alpha)}\right).
\]
\end{prop}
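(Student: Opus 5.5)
Since the data are Gaussian, the drift of the reverse SDE~\eqref{eq:backward_SDE} is affine, given by~\eqref{eq:vt_linear}. The score in this drift is the exact score of the true marginals $p_{T-t}$; only the initial law is changed. The plan is to exploit this linearity and compute the error exactly rather than bound it.

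\textbf{Step 1: reduce to linear moment equations.} With affine drift $v_t(x)=H_t x+r_t$ and deterministic diffusion coefficient $a_t$, a Gaussian initial law stays Gaussian. The mean and covariance obey
\[
\dot m_t = H_t m_t + r_t, \qquad \dot S_t = H_t S_t + S_t H_t^\top + 2a_t\id .
\]
Both the exact process (started at $p_T=\mathcal N(\eta_T\mu_\data,\eta_T^2(\Sigma_\data+\sigma_T^2\id))$) and $\widetilde Y$ (started at $q_0=\mathcal N(0,\eta_T^2\sigma_T^2\id)$) solve these equations with the same $H_t,r_t,a_t$. Hence the differences satisfy the homogeneous equations
\[
\dot{\delta m}=H_t\,\delta m, \qquad \dot{\delta S}=H_t\,\delta S+\delta S\,H_t^\top .
\]
All $H_t$ are diagonal in the eigenbasis $U$ of $\Sigma_\data$, so everything decouples per eigenvalue $\lambda$. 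If $\Phi(\lambda)=\exp\int_0^T H_t(\lambda)\,dt$ is the scalar propagator, then
\[
\delta m_T=\Phi\,\delta m_0, \qquad \delta S_T=\Phi^2\,\delta S_0 .
\]
The initial differences are $\delta m_0=-\eta_T\,u^\top\mu_\data$ and $\delta S_0=-\eta_T^2\lambda$.

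\textbf{Step 2: compute the propagator explicitly.} Substituting $s=T-t$ and using~\eqref{eq:vt_linear},
\[
\int_0^T H_t\,dt=-\int_0^T\Big(\frac{\dot\eta_s}{\eta_s}+(1+\alpha)\frac{\sigma_s\dot\sigma_s}{\lambda+\sigma_s^2}\Big)ds .
\]
This integrates in closed form: with $\eta_0=1$ and $\sigma_0=0$,
\[
\Phi(\lambda)=\eta_T^{-1}\Big(\frac{\lambda}{\lambda+\sigma_T^2}\Big)^{(1+\alpha)/2}.
\]
The factor $\eta_T^{-1}$ cancels the $\eta_T$ in $\delta m_0$ and the $\eta_T^2$ in $\delta S_0$. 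This cancellation is why the result holds for every rescaling schedule. We obtain
\[
\delta m_T=-N_T(\lambda)^{\frac{1+\alpha}{2}}\,u^\top\mu_\data=O(\sigma_T^{-(1+\alpha)}), \qquad \delta S_T=-\lambda N_T(\lambda)^{1+\alpha}=O(\sigma_T^{-2(1+\alpha)}),
\]
with $N_T(\lambda)=\lambda/(\lambda+\sigma_T^2)$ as in Proposition~\ref{prop:mean_cov_Gaussian}. This is the step where I expect to need care: the time reversal and the sign convention of $H_{T-t}$ must be handled correctly, and one must check that the integral is finite near $s=0$. Finiteness holds because $\lambda>0$ and $\beta_t,\xi_t$ are smooth up to $0$ by Assumption~\ref{ass:regularity_sched}(i).

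\textbf{Step 3: evaluate the Fréchet distance.} Both laws are Gaussian and their covariances commute, so the Fréchet distance is exact and splits over eigendirections:
\[
\FD=\sum_i \delta m_{T,i}^2+\sum_i\Big(\sqrt{\lambda_i}-\sqrt{\lambda_i+\delta S_{T,i}}\Big)^2 .
\]
The mean part is $O(\sigma_T^{-2(1+\alpha)})$. Since $\delta S_{T,i}\to0$, we have $(\sqrt{\lambda}-\sqrt{\lambda+\delta})^2\sim \delta^2/(4\lambda)$, so the covariance part is $O(\sigma_T^{-4(1+\alpha)})$. Summing the two parts gives the general bound $O(\sigma_T^{-2(1+\alpha)})$. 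When $\mu_\data=0$, the mean term vanishes identically and only the $O(\sigma_T^{-4(1+\alpha)})$ covariance term remains, which is the second claim.
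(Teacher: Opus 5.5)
Your proposal is correct and follows the paper's proof essentially step for step. The paper likewise propagates the initial mean and covariance discrepancies through the deterministic flow Jacobian $J_{T,0}=\eta_T^{-1}\Sigma_{\data}^{\frac{1+\alpha}{2}}(\Sigma_{\data}+\sigma_T^2\id)^{-\frac{1+\alpha}{2}}$, obtained there from the variation-of-constants formula rather than your difference of moment ODEs. It then uses the cancellation of $\eta_T$ and evaluates the Fréchet distance per eigendirection with the same square-root expansion; your opposite sign convention for the differences is immaterial.
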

\begin{proof}
Assuming Gaussian data $p_{\data}=\mathcal N(\mu_{\data},\Sigma_{\data})$, the solution of the forward noising process~\eqref{eq:forward_SDE} at time $T$ is
\begin{align*}
p_T
=
\mathcal N\!\left(
\eta_T\mu_{\rm data},
\eta_T^2(\Sigma_{\rm data}+\sigma_T^2 \id)
\right).
\end{align*}
Instead of initializing from \(p_T\), we initialize from the Gaussian distribution $q_0=\mathcal N(0,(\eta_T\sigma_T)^2 \id)$.
Therefore, at initialization, the mean and covariance errors are
\begin{align*}
\delta m_0 \eqdef  \EE[X_T]  - \EE[\widetilde Y_0] =  \eta_T \mu_\data
\qquad
\delta \Sigma_0 \eqdef \cov[X_T] - \cov[\widetilde Y_0]  =  \eta_T^2 \Sigma_\data
\end{align*}
%First note that if $\eta_T \to 0$, then we already get that the initialization error vanishes. This is in particular the case for Flow-Matching and for Variance Preserving when $\sigma_T \to \infty$. However, for Variance Explosing $\eta_t = 1$, so the initial error on the mean and covariance does not vanish with $\sigma_T \to \infty$. However, it
We now study how this initial error on the mean and covariance propagates during continuous diffusion sampling via~\eqref{eq:backward_SDE}. Let $Y_t$ be the solution of~\eqref{eq:backward_SDE} with exact initialization $Y_0 \sim p_T$, and let $\tilde Y_t$ be the solution with approximate initialization $\widetilde Y_0 \sim q_0$. As detailed in Section~\ref{sec:gaussian}, for Gaussian data the drift in the generative SDE~\eqref{eq:backward_SDE} is affine and is given by
\begin{equation} 
  v_t(x) = H_t \,x + r_t \quad \text{with} \quad
  \left\{
    \begin{aligned}
      H_{T-t} &\eqdef - \frac{\dot \eta_{t}}{\eta_t}\id - (1 + \alpha) \eta_t^2 \dot \sigma_t \sigma_t \Sigma_{t}^{-1} \\
      r_{T-t} &\eqdef (1 + \alpha) \eta_t^2 \dot \sigma_t \sigma_t \Sigma_{t}^{-1} \mu_{t}.
    \end{aligned}
    \right.
  \end{equation}
We use $J_{t,s}$, the Jacobian of the flow map between time $s$ and time $t$ (or fundamental matrix) introduced in Section~\ref{sec:general_error}, which is the unique solution of the equation
\begin{equation*}
\frac{d}{dt} J_{t,s} = H_t J_{t,s} \quad J_{s,s} = \id
\end{equation*}
Using the variation-of-constants formula, $Y_t$ following the affine SDE~\eqref{eq:backward_SDE} has the explicit Itô solution at time $t$:
\begin{equation*}
Y_t = J_{t,0} Y_0 + \int_0^t J_{t,s} r_s ds + \int_0^t J_{t,s} \sqrt{2 a_s} dW_s
\end{equation*}
The last two terms do not depend on the initialization law. Therefore, taking mean and covariance on $Y_t$ and $\tilde Y_t$ we easily get the exact propagation formulas for the mean and covariance:
\begin{align*}
\delta m_t &=  \EE[Y_t] - \EE[\tilde Y_t]  = J_{t,0} \delta m_0  \\
\delta \Sigma_t &= \cov[Y_t] - \cov[\tilde Y_t]  = J_{t,0} \delta \Sigma_0 J_{t,0}^\top 
\end{align*}
Note that similar propagation equations can be derived in the non-Gaussian case, but we keep it Gaussian here for simplicity. We prove in Section~\ref{app:mean_cov_Gaussian} (equation~\ref{eq:jacobian_gaussian}) that the Jacobian $J_{T,0}$ verifies (at time $T$) in the Gaussian data case: 
  \begin{align*} 
    J_{T,0}
    &= \eta_{T}^{-1}
    \Sigma_{\data}^{\frac{1+\alpha}{2}}
    \big(\Sigma_\data + \sigma_{T}^2 \id\big)^{-\frac{1+\alpha}{2}}.
  \end{align*}
  We thus get, after replacing $\delta m_0$ and $\delta \Sigma_0$, that the mean and covariance errors, at final diffusion time $T$, due to initialization bias, are:
  \begin{align*}
\delta m_T &= \Sigma_{\data}^{\frac{1+\alpha}{2}}
    \big(\Sigma_\data + \sigma_{T}^2 \id\big)^{-\frac{1+\alpha}{2}} \mu_\data  \\
\delta \Sigma_T &= \Sigma_{\data}^{2+\alpha}
    \big(\Sigma_\data + \sigma_{T}^2 \id\big)^{-(1+\alpha)}
\end{align*}
which rewrites, using the eigendecomposition of the data covariance $\Sigma_{\rm data}=U\operatorname{Diag}(\lambda_i)U^\top$:
  \begin{align*}
\delta m_T &= U \diag{ \left( \frac{\lambda_i}{\lambda_i + \sigma_T^2}\right)^{\frac{\alpha+1}{2}}} U^\top \mu_\data  \\
\delta \Sigma_T &= U \diag{\lambda_i \left( \frac{\lambda_i}{\lambda_i + \sigma_T^2}\right)^{\alpha+1} }U^\top 
\end{align*}
First note that the propagated initialization error is independent of the choice of the rescaling schedule $\eta_t$. In particular, it is the same for both Variance Exploding and Variance Preserving. 

The mean and covariance biases can be used to evaluate the Fréchet distance between ${p_\data = \operatorname{Law}(Y_T)}$ and $q_T = \operatorname{Law}(\tilde Y_T)$:
  \begin{align*}
  \FD(p_\data, q_T) &= \FD(\operatorname{Law}(Y_T), \operatorname{Law}(\tilde Y_T)) \\
  &= \norm{\delta m_T}^2 + \mathcal{B}^2(\Sigma_\data, \Sigma_\data - \delta \Sigma_T) \\
  &= \sum_{i=1}^d  \left( \frac{\lambda_i}{\lambda_i + \sigma_T^2}\right)^{\alpha+1} (U^\top \mu_\data)_i^2 + \sum_{i=1}^d \left( \sqrt{\lambda_i} -  \sqrt{\lambda_i - \lambda_i \left( \frac{\lambda_i}{\lambda_i + \sigma_T^2}\right)^{\alpha+1}} \right)^2
   \end{align*}
When $\sigma_T \to \infty$, the first term is $O\left(\sigma_T^{-2(\alpha+1)}\right)$ and after expanding the square-root, the second term is $O\left(\sigma_T^{-4(\alpha+1)}\right)$.
\end{proof}
\subsection{Equivalences between schedulers} \label{app:equivalence_stepsize_schedule}

% First non-Gaussian formula on mean and cov
% Then Gaussian 
% then to 0 in \sigma_T

\paragraph{Relation between time stepsize schedules $\gamma_t$ and diffusion schedules $(\eta_t, \sigma_t)$} 

The following lemma makes precise how a time reparameterization of the reverse SDE induces a corresponding reparameterization of the diffusion schedules, and how this relates to a change of variable-step Euler--Maruyama scheme.

\begin{prop}[Time change of the reverse SDE] \label{lem:time_change_reverse_sde}
Let $\varphi:[0,S]\to[0,T]$ be an increasing $\mathcal{C}^1$ bijection and define $ \psi(s) \eqdef T-\varphi(S-s)$ for $s \in [0,S]$. If $(Y_t)$ solves~\eqref{eq:backward_SDE} and $Z_s \eqdef Y_{\varphi(s)}$, then $(Z_s)$ follows the reverse SDE associated with the forward schedules $(\eta_{\psi(s)},\sigma_{\psi(s)})$ on horizon $S$, namely
\begin{equation} \label{eq:Z_s}
  Z_0 \sim p_{\psi(S)} = p_T, \qquad
  dZ_s = v^\psi_{s}(Z_s) ds + \sqrt{2a^\psi_s}\, dW_s, \qquad s \in [0,S],
\end{equation}
where
\begin{align*}
  v^{\psi}_{s}(x) &\eqdef -\frac{\dot \eta_{\psi(S-s)}}{\eta_{\psi(S-s)}} x
  + (1 + \alpha) (\eta^\psi_{S-s})^2 \dot \sigma_{\psi(S-s)} \sigma_{\psi(S-s)} \nabla \log p_{\psi(S-s)}(x),
 \\
  a^{\psi}_{s} &\eqdef \alpha (\eta_{\psi(S-s)})^2 \dot \sigma_{\psi(S-s)} \sigma_{\psi(S-s)}.
\end{align*}
In particular, $Z_s$ has marginals $p^\psi_{S-s} = p_{T-\varphi(s)}$. Moreover, the Euler--Maruyama discretization of~\eqref{eq:Z_s} with constant step $\gamma$ corresponds to a variable-step Euler--Maruyama discretization of~\eqref{eq:backward_SDE} for $Y$, with local stepsizes $\gamma_{k} = \varphi'( \gamma k) \gamma$.
\end{prop}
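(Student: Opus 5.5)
The proof is a deterministic time change applied to the integral form of~\eqref{eq:backward_SDE}, combined with the observation that the forward marginals depend only on the \emph{values} of $(\eta,\sigma)$ at each time. Throughout, I read $\dot\eta_{\psi(u)}$ and $\dot\sigma_{\psi(u)}$ in the definition of $v^\psi,a^\psi$ as derivatives of the composite maps $u\mapsto\eta_{\psi(u)}$ and $u\mapsto\sigma_{\psi(u)}$. Under that reading, $\frac{d}{du}\eta_{\psi(u)}=\psi'(u)\,\dot\eta_{\psi(u)}$, and similarly for $\sigma$. I also use $\psi'(S-r)=\varphi'(r)$.

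\emph{Step 1 (forward marginals).} Since $p_t\propto p_\data(\cdot/\eta_t)*\mathcal N(0,(\eta_t\sigma_t)^2\id)$, the forward process~\eqref{eq:forward_SDE} with schedules $(\eta_{\psi(\cdot)},\sigma_{\psi(\cdot)})$ on $[0,S]$ has marginals $p^\psi_u=p_{\psi(u)}$. The schedules $\eta_{\psi(\cdot)}$ and $\sigma_{\psi(\cdot)}$ still satisfy Assumption~\ref{ass:regularity_sched}(i) because $\psi$ is a $\mathcal C^1$ increasing bijection with $\psi(0)=0$ and $\psi(S)=T$. Hence $p^\psi_{S-s}=p_{\psi(S-s)}=p_{T-\varphi(s)}$, and the scores coincide: $\nabla\log p^\psi_{S-s}=\nabla\log p_{T-\varphi(s)}$. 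In particular $Z_0=Y_{\varphi(0)}=Y_0\sim p_T=p_{\psi(S)}$.

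\emph{Step 2 (time change of the SDE).} I write $Z_s=Y_0+\int_0^{\varphi(s)}v_t(Y_t)\,dt+M_{\varphi(s)}$ with $M_t=\int_0^t\sqrt{2a_t}\,dW_t$.
\begin{itemize}
\item In the drift I substitute $t=\varphi(r)$, which gives $\int_0^s\varphi'(r)\,v_{\varphi(r)}(Z_r)\,dr$.
\item The process $N_s\eqdef M_{\varphi(s)}$ is a continuous martingale with respect to the time-changed filtration. Its quadratic variation is $\int_0^s 2a_{\varphi(r)}\varphi'(r)\,dr$.
\item When $\varphi'>0$, I set $B_s\eqdef\int_0^{\varphi(s)}\big(\varphi'(\varphi^{-1}(t))\big)^{-1/2}dW_t$. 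L\'evy's characterization shows $B$ is a Brownian motion, and $N_s=\int_0^s\sqrt{2a_{\varphi(r)}\varphi'(r)}\,dB_r$.
\item If $\varphi'$ is allowed to vanish, I would fall back on the Dambis--Dubins--Schwarz representation on an enlarged probability space.
\end{itemize}
This gives $dZ_s=\varphi'(s)v_{\varphi(s)}(Z_s)\,ds+\sqrt{2\varphi'(s)a_{\varphi(s)}}\,dB_s$.

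\emph{Step 3 (identification of coefficients).} Using $T-\varphi(s)=\psi(S-s)$ and the chain rule from the first paragraph, I get
\[
\varphi'(s)\,\frac{\dot\eta_{T-\varphi(s)}}{\eta_{T-\varphi(s)}}=\frac{\frac{d}{du}\eta_{\psi(u)}}{\eta_{\psi(u)}}\Big|_{u=S-s}
\]
and $\varphi'(s)\,\sigma\dot\sigma|_{T-\varphi(s)}=\sigma_{\psi(u)}\frac{d}{du}\sigma_{\psi(u)}|_{u=S-s}$. Combined with the score identity from Step 1, this shows $\varphi'(s)v_{\varphi(s)}=v^\psi_s$ and $\varphi'(s)a_{\varphi(s)}=a^\psi_s$, which proves~\eqref{eq:Z_s}.

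\emph{Step 4 (Euler--Maruyama).} One constant-step EM update of~\eqref{eq:Z_s} at $s_k=\gamma k$ reads
\[
\hat Z_{k+1}=\hat Z_k+\gamma\varphi'(s_k)\,v_{\varphi(s_k)}(\hat Z_k)+\sqrt{2\gamma\varphi'(s_k)\,a_{\varphi(s_k)}}\,\hat W_k.
\]
This is exactly a variable-step EM update of~\eqref{eq:backward_SDE} with step $\gamma_k=\varphi'(\gamma k)\gamma$, evaluated at the time node $\varphi(s_k)$.

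The main obstacle is this last identification. The natural variable-step grid $t_{k+1}=t_k+\gamma_k$ differs from $\varphi(s_k)$ by $O(\gamma^2)$ per step, hence by $O(\gamma)$ cumulatively. I would therefore state the correspondence at the level of update rules: the coefficients are frozen at $\varphi(s_k)$ and the increments are $\gamma_k$. As a secondary check, I would note that this grid mismatch does not affect first-order weak-error statements like Theorem~\ref{thm:general_weak}, since it is an $O(\gamma)$ reparameterization consistent with the same limiting time change.
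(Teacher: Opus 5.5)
Your proof is correct and follows the paper's argument. The steps are the same: the time change of $Z_s=Y_{\varphi(s)}$; then $\varphi'(s)v_{\varphi(s)}=v^\psi_s$ and $\varphi'(s)a_{\varphi(s)}=a^\psi_s$ from $\psi(S-s)=T-\varphi(s)$ and $\psi'(S-s)=\varphi'(s)$, reading $\dot\eta_{\psi(\cdot)},\dot\sigma_{\psi(\cdot)}$ as derivatives of the composites, as the paper's proof implicitly does; and the EM correspondence as an identity of update rules at the nodes $\varphi(s_k)$. You additionally make the L\'evy-characterization and forward-marginal details explicit.

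Drop the closing ``secondary check'', though. An $O(\gamma)$ cumulative change of grid is of the same order as the first-order weak-error term, and it does alter that term in general. For $a\equiv 0$ and a constant drift $v$, standard EM on the nodes $\varphi(s_k)$ is exact, while the update with $\gamma_k=\gamma\varphi'(s_k)$ gives $\hat Y_K=Y_T-\frac{\gamma}{2}\big(\varphi'(S)-\varphi'(0)\big)v+o(\gamma)$. Only the update-rule identification, which is all the proposition asserts, should be claimed. Also, composing with a merely $\mathcal C^1$ map $\psi$ does not preserve the smoothness required by Assumption~\ref{ass:regularity_sched}(i); your Step 1 does not need it anyway.
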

\begin{rem}
    Standard Variable-step EM is more commonly defined using $\gamma_k = t_{k+1} - t_k = \varphi(s_{k+1}) - \varphi(s_k)$ which agrees with $\gamma_k = \gamma \varphi'(s_k)$ up to $O(\gamma^2)$.
\end{rem}
\begin{proof}
By the time-change formula,
\begin{equation*}
  dZ_s = \varphi'(s) v_{\varphi(s)}(Z_s) ds + \sqrt{2\varphi'(s)a_{\varphi(s)}}\, dW_s.
\end{equation*}
First note that Euler discretization of $Z_s$ with constant stepsize $\gamma$ (on the uniform grid $s_k = k \gamma$) gives 
\begin{equation*}
  \hat Z_{k+1} = \hat Z_{k} + \gamma \varphi'(s_k) v_{\varphi(s_k)}(\hat Z_k)  + \sqrt{2 \gamma \varphi'(s_k)a_{\varphi(s_k)}}\, W_k.
\end{equation*}
This is the same stochastic process as the EM discretization  of the original SDE~\eqref{eq:backward_SDE} on $Y_t$ with stepsizes $\gamma_k = \varphi'(s_k) \gamma$: 
\begin{equation*}
  \hat Y_{k+1} = \hat Y_{k} + \gamma_k v_{t_k}(\hat Y_k)  + \sqrt{2 \gamma_k a_{t_k}}\, W_k
\end{equation*}
Second, since $\psi(S-s)=T-\varphi(s)$ and $\dot \psi(S-s)=\varphi'(s)$, we have, denoting $\eta^\psi_{s} = \eta_{\psi_{s}}$, $\sigma^\psi_{s} = \sigma_{\psi_{s}}$ and $p^\psi_{s} = p_{\psi_{s}}$,
\begin{equation*}
  \eta^\psi_{S-s} = \eta_{T-\varphi(s)}, \qquad
  \sigma^\psi_{S-s} = \sigma_{T-\varphi(s)}, \qquad
  p^\psi_{S-s} = p_{T-\varphi(s)},
\end{equation*}
and
\begin{equation*}
  \dot \eta^\psi_{S-s} = \dot \eta_{T-\varphi(s)} \varphi'(s), \qquad
  \dot \sigma^\psi_{S-s} = \dot \sigma_{T-\varphi(s)} \varphi'(s).
\end{equation*}
Substituting these identities into the definitions of $v_t$ and $a_t$ in~\eqref{eq:backward_SDE} gives
\begin{equation*}
  \varphi'(s) v_{\varphi(s)} = v^\psi_s, \qquad
  \varphi'(s) a_{\varphi(s)} = a^\psi_s,
\end{equation*}
\end{proof}

In the rest of the paper, we thus consider a uniform stepsize discretization schedule, and allow for different choices of diffusion schedules $(\eta_t, \sigma_t)$.

\paragraph{Equivalence between scale schedules $\eta_t$ in the continuous setting.}
In the continuous (non-discretized) setting, for fixed noise schedule $\sigma_t$, any choice of scale schedule $\eta_t$ is equivalent, up to a change of variable. Therefore, the choice of $\eta_t$ (Variance Exploding, Variance Preserving, or any other) does not affect the sampling performance of the continuous SDE~\eqref{eq:backward_SDE}. However, once discretized, this equivalence is lost and the choice of $\eta_t$ influences the performance of the generative process~\eqref{eq:disc}. 

% \paragraph{Variance Preserving condition equivalence for Flow Matching} The VP condition $\eta_t = (1+\sigma_t^2)^{-1/2}$ is equivalent to $a_t^2 + b_t^2 = 1$ for the Flow Matching formulation with interpolation $(a_t, b_t)$ instead of $(t, 1-t)$.

\section{Proof of Theorem~\ref{thm:general_weak}}
\label{app:general_weak}

\begin{proof}
We denote $\mathcal{F}$ the class of functions $f : \RR^d \to \RR$ that are $\mathcal{C}^\infty$ and with polynomial growth i.e. there is $k \in \mathbb{N}, C >0$ such that:
  \begin{align*} \forall x \in \RR^d, \quad |f(x)| \leq C (1+ |x|^k)
  \end{align*}
  For $f \in \mathcal{F}$, we define the backward value function
  \begin{align}\label{eq:u-def} u_s(x)\coloneqq \EE\big[f(Y_{t})\mid Y_s=x\big] = \EE\big[f(Y_{t}^{s,x})] ,\qquad 0\le s\le t.
  \end{align}
  where $Y_t^{s,x} = \Phi_{t,s}(x)$ is the solution at time $t$ of the SDE that started at time $s$ from the point $x$. Note that the expectation is here taken only with respect to the Brownian motion (not initialization).  Then $u_s$ solves the backward Kolmogorov PDE
  \begin{align}\label{eq:backward-PDE} (\partial_s+ \mathcal{L}_s)u_s(\cdot)=0,\qquad u_{t_k}(\cdot)=f.
  \end{align}  
  where we recall that $\mathcal{L}_s f(y) \eqdef v_s(y) \cdot \nabla f(y) + a_s \Delta f(y)$.

We first compute the local weak defect of one Euler--Maruyama step, with time coefficients frozen, for small stepsize $\gamma$. For a fixed time $s\in [0,T]$ and $x \in \RR^d$ , let
\[
\bar Y_{s+\gamma}^{s,x}
=
x+ \gamma v_s(x)+\sqrt{2\gamma a_s}\,\xi,
\qquad \xi\sim\mathcal N(0,I_d),
\]
\begin{lem}[One-step weak defect] \label{lem:one_step}
\[
\EE\big[u_{s+\gamma}(\bar Y_{s+\gamma}^{s,x})\big]-u_s(x)
=
\gamma^2\psi_s(x)+O(\gamma^3),
\]
where
\[
\psi_s(x)
=
-\frac12\big((\partial_s+\mathcal L_s)v_s\big)(x)\cdot\nabla u_s(x)
-a_s\langle \nabla v_s(x),\nabla^2u_s(x)\rangle
-\frac12\dot a_s\Delta u_s(x).
\]
\end{lem}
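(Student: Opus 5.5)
Our plan is to Taylor-expand both $\EE[u_{s+\gamma}(\bar Y^{s,x}_{s+\gamma})]$ and $u_s(x)$ around the common base point $(s,x)$ up to order $\gamma^2$. We then use the backward Kolmogorov equation~\eqref{eq:backward-PDE} to cancel the order-$\gamma$ term and to rewrite every time derivative of $u$ as a spatial derivative. Throughout, we assume (as stated before Theorem~\ref{thm:general_weak}) that $u$ is smooth in $(s,x)$ and has derivatives of polynomial growth. These hypotheses, together with Gaussian moments of $\xi$ and the linear growth of $v_s$, make all remainders $O(\gamma^3)$, with constants that are polynomial in $|x|$.

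\textbf{Step 1: expansion of the Euler step.} Write $\delta = \gamma v_s(x) + \sqrt{2\gamma a_s}\,\xi$ and expand $u_{s+\gamma}(x+\delta)$ jointly in time and space. Taking expectations uses $\EE[\xi]=0$, $\EE[\xi\xi^\top]=\id$, and the vanishing of odd moments. We keep every term of order $\gamma^2$: $\gamma^2\partial_s^2u/2$, the mixed terms $\gamma\,\partial_s(\gamma v\cdot\nabla u + \gamma a\Delta u)$, the second-order drift term $\frac{\gamma^2}{2}\langle v v^\top,\nabla^2 u\rangle$, the cross term $\gamma^2 a_s\, v\cdot\nabla\Delta u$ coming from third derivatives, and the fourth-order term $\frac{\gamma^2 a_s^2}{6}\,\EE[\nabla^4u[\xi^{\otimes4}]]$, which equals $\frac{\gamma^2a_s^2}{2}\Delta^2u$. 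Collecting terms, the result is
\[
\EE\big[u_{s+\gamma}(\bar Y)\big] = u_s + \gamma(\partial_s+\mathcal L_s)u_s + \tfrac{\gamma^2}{2}\big(\partial_s + \bar{\mathcal L}_s\big)^2u_s + O(\gamma^3),
\]
where $\bar{\mathcal L}_s$ denotes the generator with coefficients frozen at time $s$, so that $\partial_s$ does not act on $v_s, a_s$ inside it. The order-$\gamma$ term vanishes by~\eqref{eq:backward-PDE}.

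\textbf{Step 2: compare with the exact semigroup.} Differentiating $(\partial_s+\mathcal L_s)u_s=0$ in $s$ gives $(\partial_s+\mathcal L_s)^2u_s + (\partial_s\mathcal L_s)u_s=0$, where $\partial_s\mathcal L_s = \dot v_s\cdot\nabla + \dot a_s\Delta$. Hence the order-$\gamma^2$ coefficient reduces to a commutator-type remainder, $\frac12[(\partial_s+\bar{\mathcal L}_s)^2 - (\partial_s+\mathcal L_s)^2 - \partial_s\mathcal L_s]u_s$, which we evaluate explicitly. Equivalently, and more concretely, we replace $\partial_s u = -\mathcal L_s u$ and $\partial_s\nabla u = -\nabla(\mathcal L_s u)$ in the Step 1 terms. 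The spatial derivative of $\mathcal L_su$ produces $\nabla v_s^\top\nabla u$ together with higher derivatives of $u$. After substitution, all terms of the form $\langle vv^\top,\nabla^2u\rangle$, $a\,v\cdot\nabla\Delta u$ and $a^2\Delta^2u$ should cancel exactly.

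\textbf{Step 3: identify the survivors.} The terms that remain should be the following. First, $-\frac12(\dot v_s + (\nabla v_s) v_s + a_s\Delta v_s)\cdot\nabla u$, where the term $a_s\Delta v_s$ arises from $\Delta(v\cdot\nabla u)$; this combination is exactly $-\frac12((\partial_s+\mathcal L_s)v_s)\cdot\nabla u$. Second, $-a_s\langle\nabla v_s,\nabla^2u\rangle$, the cross term from $\Delta(v\cdot\nabla u)$. Third, $-\frac12\dot a_s\Delta u$, which comes from the frozen diffusion coefficient. Together these give $\psi_s$.

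The main obstacle we expect is the bookkeeping in Step 2, in particular tracking the factor $2$ in $\Delta(v\cdot\nabla u) = \Delta v\cdot\nabla u + 2\langle\nabla v,\nabla^2u\rangle + v\cdot\nabla\Delta u$ and the sign convention for $\nabla v$, and ensuring that the cancellations of the pure-$u$ derivatives are exact. To guard against errors, we will first verify the identity on $d=1$ with affine $v$ and constant $a$, where both sides are computable in closed form.
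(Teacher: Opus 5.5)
Your proposal is correct and follows essentially the paper's route: you expand the frozen-coefficient Euler step to order $\gamma^2$ (your list of terms, including $\tfrac{\gamma^2 a_s^2}{2}\Delta^2u$, matches the paper's Taylor expansion exactly), kill the $O(\gamma)$ term with the backward Kolmogorov equation, and eliminate $\partial_{ss}u_s$ and the mixed terms via $\partial_s u_s=-\mathcal L_s u_s$ and its time derivative; you then expand $\mathcal L_s^2u_s$ and cancel the pure $u$-derivative terms, which is exactly what the paper does. The one thing to tighten is the compact shorthand. $\bar{\mathcal L}_s$ must have its coefficients frozen at the point $(s,x)$, i.e.\ constant in space and not just in time. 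Likewise, $(\partial_s+\mathcal L_s)^2$ must be read as the formal square $\partial_{ss}+2\mathcal L_s\partial_s+\mathcal L_s^2$: as a genuine operator composition it annihilates $u_s$, and your identity would then wrongly force $(\partial_s\mathcal L_s)u_s=0$. The explicit substitution route you also describe avoids both ambiguities, so write that one out.
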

\begin{proof}
We start the proof of this lemma by a Taylor expansion of $\EE\big[u_{s+\gamma}(\bar Y_{s+\gamma}^{s,x})\big]$. We directly state here the final form of the expansion with the following Lemma. Because the proof of the lemma is technical but quite straightforward, it is deferred to the end of the section.
\begin{lem} \label{lem:taylor_expansion}
By a Taylor expansion in time and space,
\begin{equation}
\label{eq:taylor_expansion}
\begin{aligned} 
\EE\big[u_{s+\gamma}(\bar Y_{s+\gamma}^{s,x})\big]
&=
u_s(x)
+\gamma\big(\partial_su_s+v_s\cdot\nabla u_s+a_s\Delta u_s\big)(x)\\
&\quad
+\gamma^2\Big[
\frac12\partial_{ss}u_s
+v_s\cdot\nabla\partial_su_s
+a_s\Delta\partial_su_s
+\frac12 v_s^\top\nabla^2u_s v_s \\
&\qquad\qquad
+a_s v_s\cdot\nabla\Delta u_s
+\frac12 a_s^2\Delta^2u_s
\Big](x)
+O(\gamma^3).
\end{aligned}
\end{equation}
\end{lem}
Now, using the backward Kolmogorov equation
\[
(\partial_s+\mathcal L_s)u_s=0
\]
the first order term of~\eqref{eq:taylor_expansion} vanishes. Moreover,
differentiating \(\partial_su_s=-\mathcal L_su_s\) with respect to \(s\)
gives
\[
\partial_{ss}u_s
=
-(\partial_s\mathcal L_s)u_s-\mathcal L_s\partial_su_s.
\]
Hence, using again \(\partial_su_s=-\mathcal L_su_s\),
\[
\frac12\partial_{ss}u_s
+\mathcal L_s\partial_su_s
=
-\frac12(\partial_s\mathcal L_s)u_s
-\frac12\mathcal L_s^2u_s .
\]
Thus, the second-order term
simplifies to:
\[
\begin{aligned}
&\frac12\partial_{ss}u_s
+\mathcal L_s\partial_su_s
+\frac12 v_s^\top\nabla^2u_s v_s
+a_s v_s\cdot\nabla\Delta u_s
+\frac12 a_s^2\Delta^2u_s  \\
&=
-\frac12(\partial_s\mathcal L_s)u_s
-\frac12\mathcal L_s^2u_s
+\frac12 v_s^\top\nabla^2u_s v_s
+a_s v_s\cdot\nabla\Delta u_s
+\frac12 a_s^2\Delta^2u_s .
\end{aligned}
\]
Since
\(
\mathcal L_s u_s
=
v_s\cdot\nabla u_s+a_s\Delta u_s,
\)
we have
\[
\mathcal L_s^2u_s
=
(v_s\cdot\nabla)\big(v_s\cdot\nabla u_s\big)
+a_s\Delta\big(v_s\cdot\nabla u_s\big)
+a_s v_s\cdot\nabla\Delta u_s
+a_s^2\Delta^2u_s .
\]
We now expand the two first terms:
\[
(v_s\cdot\nabla)\big(v_s\cdot\nabla u_s\big)
=
\big((v_s\cdot\nabla)v_s\big)\cdot\nabla u_s
+
v_s^\top\nabla^2u_s\,v_s .
\]
and
\[
\Delta\big(v_s\cdot\nabla u_s\big)
=
(\Delta v_s)\cdot\nabla u_s
+
2\langle\nabla v_s,\nabla^2u_s\rangle
+
v_s\cdot\nabla\Delta u_s .
\]
Therefore,
\[
\mathcal L_s^2u_s
=
\big((v_s\cdot\nabla)v_s+a_s\Delta v_s\big)\cdot\nabla u_s
+v_s^\top\nabla^2u_s v_s
+2a_s v_s\cdot\nabla\Delta u_s
+2a_s\langle\nabla v_s,\nabla^2u_s\rangle
+a_s^2\Delta^2u_s .
\]
Moreover,
\[
(\partial_s\mathcal L_s)u_s
=
(\partial_sv_s)\cdot\nabla u_s+\dot a_s\Delta u_s.
\]
Therefore the second-order coefficient finally simplifies to
\[
-\frac12\big((\partial_s+\mathcal L_s)v_s\big)\cdot\nabla u_s
-a_s\langle \nabla v_s,\nabla^2u_s\rangle
-\frac12\dot a_s\Delta u_s,
\]
which proves the claim.
\end{proof}
We now sum these local defects along the Euler trajectory from time $0$ to time $t_k > 0$. Since $u_{t_k}(x)=\EE[f(Y_k^{t_k, x})] = f(x)$,
\[
\EE[f(\hat Y_k)]
=
\EE[u_{t_k}(\hat Y_k)].
\]
Moreover, by the definition of \(u_0\),
$u_0(x)=\EE[f(Y_{t_k}^{0,x})]$,
and therefore, using \(Y_{t_k}^{0,Y_0}=Y_{t_k}\),
\[
\EE[f(Y_{t_k})]
=
\EE[u_0(Y_0)].
\]
We get:
\[
\mathbb E[f(\hat Y_k)]-\mathbb E[f(Y_{t_k})]
=
\sum_{\ell=0}^{k-1}
\mathbb E\Big[
u_{t_{\ell+1}}(\hat Y_{\ell+1})
-u_{t_\ell}(\hat Y_\ell)
\Big].
\]
Applying Lemma~\ref{lem:one_step} gives
\[
\mathbb E[f(\hat Y_k)]-\mathbb E[f(Y_{t_k})]
=
\gamma^2\sum_{\ell=0}^{k-1}
\mathbb E[\psi_{t_\ell}(\hat Y_\ell)]
+O(\gamma^2).
\]
With our assumptions, \(\psi_s \in \mathcal{F}\) and the first-order weak
consistency of the Euler scheme~\cite[Theorem~1]{talay1990expansion} implies
\[
\left|
\EE[\psi_{t_\ell}(\hat Y_\ell)]
-
\EE[\psi_{t_\ell}(Y_{t_\ell})]
\right|
\le C\gamma .
\]
Therefore,
\[
\gamma^2\sum_{\ell=0}^{k-1}
\EE[\psi_{t_\ell}(\widehat Y_\ell)]
=
\gamma^2\sum_{\ell=0}^{k-1}
\EE[\psi_{t_\ell}(Y_{t_\ell})]
+O(\gamma^2).
\]
Moreover, since \(s\mapsto \EE[\psi_s(Y_s)]\) is smooth,
\[
\gamma^2\sum_{\ell=0}^{k-1}
\EE[\psi_{t_\ell}(Y_{t_\ell})]
=
\gamma\int_0^{t_k} \EE[\psi_s(Y_s)]\,ds+O(\gamma^2).
\]
At first order in $\gamma$, the sum can thus be replaced by the corresponding integral along the exact process,
which gives
\[
\mathbb E[f(\hat Y_k)]-\mathbb E[f(Y_{t_k})]
=
\gamma\int_0^{t_k}\mathbb E[\psi_s(Y_s)]\,ds+O(\gamma^2).
\]

It remains to rewrite \(\psi_s(Y_s)\) in terms of derivatives of \(f\) at the
final time \(t_k\).

Let $J_{t,s}(x) = \nabla_y Y_{t}^{s,x}$ the Jacobian, at time $t$, of the process starting from $x$ at time $s$; as well as $H_{t,s}(x) \eqdef \nabla^2_y Y_{t}^{s,x}$ the Hessian process, which is a 3-tensor and the Laplacian process  $\Delta_{t,s}(x) \eqdef \left(\Delta_y (Y_{t}^{s,x})_i\right)_{i \in \llbracket 1,d \rrbracket}$.
  By differentiating $u_s(x) = \EE\big[f(Y_{t}^{s,x})\big] $ w.r.t the initial condition $x$ we get:
  \begin{align}\label{eq:nabla_u} \nabla u_s(x) = \EE\Big[J_{t,s}(x)^\top \nabla f(Y_{t}^{s,x})\Big]
  \end{align}
  \begin{align}\label{eq:nabla2_u} \nabla^2 u_s(x) &= \EE\Big[H_{t,s}(x)\big[\nabla f(Y_{t}^{s,x})\big] + J_{t,s}(x)^\top \nabla^2 f(Y_{t}^{s,x}) J_{t,s}(x)  \Big] 
  \end{align}
  where we denote for the $3$-tensor $H_{t,s}$ and $v \in \RR^d$, the contraction $H_{t,s}[v]\in \RR^{d \times d}$ with $(H_{t,s}[v])_{k,l} = \sum_{i=1}^d(H_{t,s})_{ikl} v_i$.
  \begin{align}\label{eq:Delta_u} \Delta u_s(x) &= \EE\Big[\Delta_{t,s}(x) \cdot \nabla f(Y_{t}^{s,x}) + \langle J_{t,s}(x) J_{t,s}(x)^\top, \nabla^2 f(Y_{t}^{s,x})\rangle \Big]
  \end{align}
   Substituting \eqref{eq:nabla_u}, \eqref{eq:nabla2_u} and \eqref{eq:Delta_u} into the expression of $\psi_s$ gives
\begin{align}
  \psi_s(x)
  &= - \frac12 \EE\Big[\Big(J_{t,s}(x)\big[(\partial_s+\mathcal{L}_s)v_s\big](x)
    + 2 a_s\, [H_{t,s}(x),\nabla v_s(x)]
      + \dot a_s\, \Delta_{t,s}(x)\Big)\cdot \nabla f(Y_t^{s,x})\Big] \nonumber\\
  &\quad - a_s \EE\Big[\big\langle \nabla v_s(x), J_{t,s}(x)^\top \nabla^2 f(Y_t^{s,x}) J_{t,s}(x)\big\rangle\Big]
      - \frac12 \dot a_s \EE\Big[\big\langle J_{t,s}(x)J_{t,s}(x)^\top,\nabla^2 f(Y_t^{s,x})\big\rangle\Big].
  \label{eq:psi_substitution}
\end{align}
Using that for a matrix $A \in \RR^{d\times d}$
\[
\big\langle \nabla v_s(x), J_{t,s}(x)^\top A J_{t,s}(x)\big\rangle
=
\big\langle J_{t,s}(x)\nabla v_s(x)J_{t,s}(x)^\top, A\big\rangle,
\]
and the fact that, since $\nabla^2 f(Y_t^{s,x})$ is symmetric, only the symmetric part of
$J_{t,s}(x)\nabla v_s(x)J_{t,s}(x)^\top$ contributes, we get
\[
\big\langle J_{t,s}(x)\nabla v_s(x)J_{t,s}(x)^\top,\nabla^2 f(Y_t^{s,x})\big\rangle
=
\big\langle E_{t,s}(x),\nabla^2 f(Y_t^{s,x})\big\rangle,
\]
where we set
\begin{align}
  e_{t,s}(x)
  &\eqdef
  -\frac12 J_{t,s}(x)\big[(\partial_s+\mathcal{L}_s)v_s\big](x)
  - a_s\, [H_{t,s}(x),\nabla v_s(x)]
  - \frac12 \dot a_s\, \Delta_{t,s}(x)
  \in \RR^d, \label{eq:def_e_x}\\
  E_{t,s}(x)
  &\eqdef
  -\frac12 J_{t,s}(x)
  \Big(a_s\big(\nabla v_s(x)+\nabla v_s(x)^\top\big)+\dot a_s I_d\Big)
  J_{t,s}(x)^\top
  \in \RR^{d\times d}. \label{eq:def_E_x}
\end{align}
Hence \eqref{eq:psi_substitution} rewrites as
\begin{align}
  \psi_s(x)
  =
  \EE\Big[
    e_{t,s}(x)\cdot \nabla f(Y_t^{s,x})
    + \big\langle E_{t,s}(x), \nabla^2 f(Y_t^{s,x})\big\rangle
  \Big].
  \label{eq:psi_det_x}
\end{align}

We now replace the deterministic initial condition \(x\) by the random state
\(Y_s\). Denoting by \(\mathcal F_s\) the natural filtration of the process,
\(Y_s\) is \(\mathcal F_s\)-measurable and the future Brownian increments are
independent of \(\mathcal F_s\). Therefore, \eqref{eq:psi_det_x} yields
\begin{align*}
  \psi_s(Y_s)
  &=
  \EE\Big[
    e_{t,s}(Y_s)\cdot \nabla f(Y_t^{s,Y_s})
    + \big\langle E_{t,s}(Y_s), \nabla^2 f(Y_t^{s,Y_s})\big\rangle
    \,\Big|\, \mathcal F_s
  \Big].
\end{align*}
Using the flow property $Y_t^{s,Y_s}=Y_t$, and the notations
\[
J_{t,s}(Y)\eqdef J_{t,s}(Y_s), \qquad
H_{t,s}(Y)\eqdef H_{t,s}(Y_s), \qquad
\Delta_{t,s}(Y)\eqdef \Delta_{t,s}(Y_s),
\]
we obtain
\begin{align*}
  \psi_s(Y_s)
  &=
  \EE\Big[
    e_{t,s}(Y)\cdot \nabla f(Y_t)
    + \big\langle E_{t,s}(Y), \nabla^2 f(Y_t)\big\rangle
    \,\Big|\, \mathcal F_s
  \Big].
\end{align*}
Taking expectation gives
\begin{align}
  \EE\big[\psi_s(Y_s)\big]
  =
  \EE\Big[
    e_{t,s}(Y)\cdot \nabla f(Y_t)
    + \big\langle E_{t,s}(Y), \nabla^2 f(Y_t)\big\rangle
  \Big].
  \label{eq:psi_along_flow}
\end{align}
  and finally:
\begin{align*}
  \EE\big[f(\hat Y_k)\big]-\EE\big[f(Y_{t_k})\big]
  &=
  \gamma \int_0^{t_k}
  \EE\Big[
    e_{t_k,s}(Y)\cdot \nabla f(Y_{t_k})
    + \big\langle E_{t_k,s}(Y), \nabla^2 f(Y_{t_k})\big\rangle
  \Big] ds
  + O(\gamma^2),
\end{align*}
which concludes the proof.
\end{proof}

We finally give here the proof of the Taylor expansion~\eqref{eq:taylor_expansion} from Lemma~\ref{lem:taylor_expansion}.   
\begin{proof}[Proof of Lemma~\ref{lem:taylor_expansion}]
Set
\[
\delta_\gamma
:=
\gamma v_s(x)+\sqrt{2\gamma a_s}\,\xi
\]
so that
\[
\bar Y_{s+\gamma}^{s,x}=x+\delta_\gamma .
\]
We Taylor expand \(u_{s+\gamma}(x+\delta_\gamma)\) around \((s,x)\) in both space (at order $4$) and time (at order $2$) to keep terms up to $O(\gamma^2)$:
\[
\begin{aligned}
u_{s+\gamma}(x+\delta_\gamma)
&=
u_s(x)
+\gamma \partial_s u_s(x)
+\nabla u_s(x)\cdot \delta_\gamma \\
&\quad
+\frac{\gamma^2}{2}\partial_{ss}u_s(x)
+\gamma\, \nabla \partial_su_s(x)\cdot \delta_\gamma
+\frac12 \delta_\gamma^\top \nabla^2u_s(x)\delta_\gamma  \\
&\quad
+\frac{\gamma}{2}\delta_\gamma^\top\nabla^2\partial_su_s(x)\delta_\gamma
+\frac16 \nabla^3u_s(x)[\delta_\gamma,\delta_\gamma,\delta_\gamma] \\
&\quad
+\frac1{24}\nabla^4u_s(x)[\delta_\gamma,\delta_\gamma,\delta_\gamma,\delta_\gamma]
+O(\gamma^3).
\end{aligned}
\]
We now calculate the expectation of each term with respect to the noise \(\xi \sim \mathcal{N}(0,\id)\). Since $\EE[\delta_\gamma]=\gamma v_s(x)$, we have
\[
\EE[\nabla u_s(x)\cdot\delta_\gamma]
=
\gamma v_s(x)\cdot\nabla u_s(x).
\]
Moreover, $
\EE[\delta_\gamma\delta_\gamma^\top]
=
\gamma^2 v_s(x)v_s(x)^\top+2\gamma a_s I_d,
$
and hence
\[
\frac12\EE\big[\delta_\gamma^\top\nabla^2u_s(x)\delta_\gamma\big]
=
\gamma a_s\Delta u_s(x)
+\frac{\gamma^2}{2}v_s(x)^\top\nabla^2u_s(x)v_s(x).
\]
Similarly,
\[
\gamma\,\EE[\nabla\partial_su_s(x)\cdot\delta_\gamma]
=
\gamma^2 v_s(x)\cdot\nabla\partial_su_s(x),
\]
and
\[
\frac{\gamma}{2}
\EE\big[\delta_\gamma^\top\nabla^2\partial_su_s(x)\delta_\gamma\big]
=
\gamma^2 a_s\Delta\partial_su_s(x)
+O(\gamma^3).
\]
For the cubic term, after taking expectation, the only non-zero term of order \(\gamma^2\) comes from one
factor \(\gamma v_s(x)\) and two Gaussian factors
\(\sqrt{2\gamma a_s}\xi\). In coordinates:
\[
\EE[\delta_{\gamma,i}\delta_{\gamma,j}\delta_{\gamma,k}]
=
2\gamma^2a_s
\big(
v_{s,i}(x)\delta_{jk}
+v_{s,j}(x)\delta_{ik}
+v_{s,k}(x)\delta_{ij}
\big)
+O(\gamma^3).
\]
Thus, by symmetry over indexes
\[
\begin{aligned}
\frac16
\sum_{i,j,k}\partial_{ijk}u_s(x)\,
\EE[\delta_{\gamma,i}\delta_{\gamma,j}\delta_{\gamma,k}]
&=
\frac16
\sum_{i,j,k}\partial_{ijk}u_s(x)\,
2\gamma^2a_s
\big(
v_{s,i}\delta_{jk}
+v_{s,j}\delta_{ik}
+v_{s,k}\delta_{ij}
\big)\\
&=
\gamma^2a_s\sum_i v_{s,i}\partial_i\Delta u_s(x).
\end{aligned}
\]
That is to say
\[
\frac16\EE\big[\nabla^3u_s(x)[\delta_\gamma,\delta_\gamma,\delta_\gamma]\big]
=
\gamma^2 a_s\, v_s(x)\cdot\nabla\Delta u_s(x)
+O(\gamma^3).
\]
For the quartic term, after taking expectation, the only non-zero term of order \(\gamma^2\) comes from
the fourth moment of the Gaussian part. Since
\[
\EE[\xi_i\xi_j\xi_k\xi_\ell]
=
\delta_{ij}\delta_{k\ell}
+\delta_{ik}\delta_{j\ell}
+\delta_{i\ell}\delta_{jk},
\]
we obtain, again by symmetry over indices
\[
\frac1{24}(2\gamma a_s)^2
\sum_{i,j,k,\ell}
\partial_{ijk\ell}u_s(x)
\big(
\delta_{ij}\delta_{k\ell}
+\delta_{ik}\delta_{j\ell}
+\delta_{i\ell}\delta_{jk}
\big)
=
\frac12\gamma^2a_s^2\Delta^2u_s(x).
\]
i.e.
\[
\frac1{24}
\EE\big[\nabla^4u_s(x)[\delta_\gamma,\delta_\gamma,\delta_\gamma,\delta_\gamma]\big]
=
\frac12\gamma^2a_s^2\Delta^2u_s(x)
+O(\gamma^3).
\]
\end{proof}

  \section{Proof of Corollary~\ref{cor:mean_cov}}
  \label{app:mean_cov}

  \begin{proof}
      
  \textbf{Error on the mean.} For $i \in \llbracket 1,d \rrbracket$, we apply Theorem~\ref{thm:general_weak} with $f^{(i)}(x)=x_i$ to get, in vector form,
  % It defines $u^{(i)}(x) = \EE[(Y_{t}^{s,x})_i]$ and the vectorized $u_s(x) \eqdef (u^{(i)}_s(x))_{i \in \llbracket 1,d \rrbracket} = \EE[Y_{t}^{s,x}]$. %Moreover, using the abuses of notation: $Y_{t_f}^{s,y} \to Y^s$,
  % %$J_{t_f,s}(Y^{s,y}) \to J_s$, $H_{t_f,s}(Y^{s,y}) \to H_s$ and $\Delta_{t_f,s}(Y^{s,y}) \to \Delta_s$,
  % With the above calculations, we get
  % \begin{align}
  %   \nabla u^{(i)}_s(x) =  \EE\Big[ J_{t,s}(x)^\top e_i\Big], \quad
  %   \nabla^2 u_s^{(i)}(x) =  \EE\Big[H_{t,s}(x)[e_i]\Big]
  % \end{align}
  % In order to simplify $\psi_s(x) \eqdef \left(\psi^{(i)}_s(x)\right)_{i}$,
  % we use that for a vector $v \in \RR^d$, $v\cdot J_{t,s}^\top  e_i = (J_{t,s} v)_i$, and for matrix $M \in \RR^{d \times d}$, we denote $H_{t,s} : M \in \RR^d$ the contraction over the last two indices:
  % $$ (H_{t,s} : M)_i = \big\langle M, H_{t,s}[e_i] \big \rangle = \sum_{k,l=1}^d (H_{t,s})_{ikl} M_{kl}$$
  %   We then get for~$\psi_s$:
  % \begin{align} \label{eq:psi_s_mean}
  %   \psi_s(x)
  %   &= - \frac{1}{2} \Big( \EE[J_{t,s}(x)](\partial_s + \mathcal{L}_s)(v_s)(x) + 2 a_s \EE\big[H_{t,s}(x)\big] : \nabla v_s(x) + \dot a_s  \EE\big[\Delta_{t,s}(x)\big] \Big) \\
  %   &= - \frac{1}{2} \EE \Big[ J_{t,s}(x)(\partial_s + \mathcal{L}_s)(v_s)(x) + 2 a_s H_{t,s}(x) : \nabla v_s(x) + \dot a_s \Delta_{t,s}(x) \Big]
  % \end{align}
  % Finally, using the notations $J_{t,s}(Y)  = J_{t,s}(Y_s)$, $H_{t,s}(Y)  = H_{t,s}(Y_s)$ and $\Delta_{t,s}(Y)  = \Delta_{t,s}(Y_s)$, which correspond to the Jacobian, Hessian and Laplacian along the flow, we get for the mean error
  \begin{align} \label{eq:mean_disc_error_proof}
     \EE[\hat Y_{k}] - \EE[Y_{t_k}] 
    &= \gamma  \int_0^{t_k}  \EE\Big[e_{t_k,s}(Y)  \Big] ds +O(\gamma^2) 
  \end{align}
 The corollary directly applies the above result at final iteration $K$ i.e. $t_k = T$, for which $\EE[Y_{T}] = \mu_\data$.

  \textbf{Error on the covariance.}
  % For $i \in \llbracket 1,d \rrbracket$, we now apply the above result with $f^{(i,j)}(x)=x_i x_j$. It defines $u^{(i,j)}(x) = \EE[(Y_{t_k}^{s,x})_i(Y_{t_k}^{s,x})_j]$ 
  %and the vectorized $u_s(x) \eqdef (u^{(i,j)}_s(y))_{i,j \in \llbracket 1,d \rrbracket} = \EE[Y_{t_k}^{s,x}(Y_{t_k}^{s,x})^\top]$. Moreover, we get for this choice of $f$:
  % \begin{align}
  %   \nabla u_s^{(i,j)}(x)  &=
  %   \EE\Big[J_{t,s}(x)^\top \big( Y^s_j \, e_i + Y^s_i   e_j \big)\Big] \\
  %   \nabla^2 u_s^{(i,j)}(x)  &=
  %   \EE\Big[ H_{t,s}(x)\Big[\big( Y^s_j \, e_i + Y^s_i   e_j \big)\Big]+ J_{t,s}(x)^\top \Big( E_{ij} + E_{ji} \Big) J_{t,s}(x) \Big] \\
  %   \Delta u_s^{(i,j)}(x) &= \EE\Big[ Y^s_j (\Delta_{t,s}(x))_i + Y^s_i( \Delta_{t,s}(x))_j + 2(J_{t,s}(x) J_{t,s}(x)^\top)_{ij}\Big]
  % \end{align}
  % We get for $\psi_s(x) = (\psi_s^{(i,j)}(x))_{i,j}$ using the previous calculations
  % \begin{align}
  %   \psi_s(x)
  %   &= \Big( - \frac{1}{2} (\partial_s + \mathcal{L}_s) v_s(x) \cdot \nabla u^{(i,j)}_s(x) - a_s \langle \nabla v_s(x), \nabla^2 u^{(i,j)}_s(x) \rangle - \frac{1}{2} \dot a_s \Delta u^{(i,j)}_s(x) \Big)_{ij}\\
  %   &= \EE\Big[- \frac{1}{2}
  %   \Big( J_{t,s}  (\partial_s + \mathcal{L}_s) v_s + 2 a_s  H_{t,s} : \nabla v_s + \dot a_s \Delta_{t,s}\Big) (Y^{s,x})^\top\Big] \nonumber \\
  %   &+ \EE\Big[- \frac{1}{2}Y^{s,x} \Big(J_{t,s} (\partial_s + \mathcal{L}_s)v_s + 2 a_s H_{t,s}:\nabla v_s + \dot a_s \Delta_{t,s}\Big)^\top \Big] \\
  %   &+ \EE\Big[ - a_s J_{t,s} (\nabla v_s  + \nabla v_s^\top)  J_{t,s}^\top - \dot a_s J_{t,s} J_{t,s}^\top\Big] \nonumber
  % \end{align}
  For $i,j \in \llbracket 1,d \rrbracket$, we now apply the above result with $f^{(i,j)}(x)=x_i x_j$.  Then $\nabla f^{(i,j)}(x) = x_j e_i + x_i e_j$ and $\nabla^2 f^{(i,j)}(x) = E_{ij} + E_{ji}$. Applying Theorem~\ref{thm:general_weak} componentwise gives
  \begin{align*}
      \EE[ (\hat Y_{k})_i (\hat Y_{k})_j] &- \EE[(Y_{t_k})_i (Y_{t_k})_j] \\
    &= \gamma \int_0^{t_k}
  \EE\Big[
    e_{t_k,s}(Y)\cdot \nabla f^{(i,j)}(Y_{t_k})
    + \big\langle E_{t_k,s}(Y), \nabla^2 f^{(i,j)}(Y_{t_k})\big\rangle
  \Big] ds
  + O(\gamma^2). \nonumber
  \end{align*}
  Using that $ E_{t_k,s}(Y)$ is symmetric, 
  \begin{align*}
  \langle E_{t_k,s}(Y), \nabla^2 f^{(i,j)}(Y_{t_k})\big\rangle = 2 \langle E_{t_k,s}(Y), E_{i,j} \rangle = 2 (E_{t_k,s}(Y))_{ij}.
  \end{align*}
  In matrix form:
  \begin{align*}
    \EE[\hat Y_{k} \hat Y_{k}^\top] - \EE[Y_{t_k} Y_{t_k} ^\top]  &= \gamma \int_0^{t_k} \, \EE\Big[e_{t_k,s}(Y) Y_{t_k}^\top +  Y_{t_k} e_{t_k,s}(Y)^\top  + 2E_{t_k,s}(Y) \Big] ds
  \end{align*}
  We now deduce the error on the covariance. Denoting $m_t = \EE[Y_{t}]$ and $K_t = \EE[Y_{t} Y_{t}^\top]$, as well as $\hat m_k = \EE[\hat Y_{k}]$ and $\hat K_t \eqdef \EE[\hat Y_{k} \hat Y_{k}^\top]$, the covariance verifies $\cov(Y_t) = K_t-m_tm_t^\top$ and $\cov(\hat Y_{k}) = \hat K_t- \hat m_k \hat m_k^\top$. Then:
  \begin{align}\label{eq:cov-decomp}
  \cov(\hat Y_{k}) - \cov(Y_{t_k})
    &=(\hat  K_t - K_{t_k} )-\big(  \hat m_k \hat m_k^\top - m_{t_k}m_{t_k}^\top\big).
  \end{align}
  Using
  \begin{align*}
    m_{t_k} m_{t_k}^\top - \hat m_k \hat m_k^\top=  (m_{t_k}-\hat m_k)m_{t_k}^\top + m_{t_k}(m_{t_k}-\hat m_k)^\top - (m_{t_k}-\hat m_k)(m_{t_k} - \hat m_k)^\top,
  \end{align*}
  and the fact that $m_{t_k} - \hat m_k =O(\gamma)$, we have:
  \begin{align*}
    m_{t_k}m_{t_k}^\top-\hat  m_k \hat  m_k^\top
    =(m_{t_k}-\hat  m_k)m_{t_k}^\top+m_{t_k}(m_{t_k}-\hat  m_k)^\top+ o(\gamma)
  \end{align*}
  Thus, using \eqref{eq:mean_disc_error}, the first order discretization error on the covariance writes
  \begin{align}  \label{eq:cov_disc_error_proof}
    \cov[\hat Y_{k}] - \cov[Y_{t_k}]  &= \gamma \int_0^{t_k}  \Big(\cov\Big[e_{t_k,s}(Y), Y_{t_k}\Big]  +  \cov\Big[e_{t_k,s}(Y), Y_{t_k}\Big]^\top + 2 \EE\Big[E_{t_k,s}(Y)\Big] \Big) ds 
  \end{align}
  The corollary directly applies the above result at final iteration $K$ i.e. $t_k = T$, for which $\cov[Y_{T}] = \Sigma_\data$.
   \end{proof}

  \section{Fréchet distance expansions}
  \label{app:Fréchet_dist_exp_general}

The first-order errors on the mean and on the covariance calculated in Corollary~\ref{cor:mean_cov} can be combined into a first-order expansion of the Fr\'echet distance to the target distribution. The Fr\'echet distance compares the first and second order moments of two distributions, and is defined by
\begin{align} \label{eq:frechet_distance}
  \FD(p_\mathrm{data}, \operatorname{Law}(y)) = \Big \|\mu_\mathrm{data} - \EE[y]\Big \|^2 + \mathcal{B}^2\Big(\Sigma_\mathrm{data},\cov[y]\Big)
\end{align}
with the Bures distance between covariances:
\begin{align} \label{eq:Bures} 
  \mathcal{B}^2(\Sigma_{\mathrm{data}}, \Sigma) &= \tr{ \Sigma_{\mathrm{data}} + \Sigma - 2 (\sqrt{\Sigma_{\mathrm{data}}} \Sigma \sqrt{\Sigma_{\mathrm{data}}})^{1/2} } .
\end{align}
With the following result, we consider the Fréchet sampling error between the distribution of the final iterate $\operatorname{Law}(\hat Y_K)$ and the data distribution $p_{\data}$. We express this Fréchet error with respect to the errors on the mean and covariance calculated in Corollary~\ref{cor:mean_cov}, namely:
   \begin{align*} d^\mu_k = \EE[\hat Y_k] - \EE[Y_{t_k}] = \gamma d^{\mu,[1]}_k + o(\gamma)  \quad \text{and} \quad
      D^\Sigma_k = \cov[\hat Y_k] - \cov[Y_{t_k}] &= \gamma D^{\Sigma,[1]}_k + o(\gamma)
    \end{align*}
We also simplify the final expression making use of the eigencomposition of the data covariance  $\Sigma_\data = U \operatorname{Diag}(\lambda_i)U^\top$.
\begin{prop}[Fréchet distance expansion]
\label{prop:Fréchet_disc}
 With the previous notations:
\begin{align*}
  \FD(p_\mathrm{data}, \operatorname{Law}(\hat Y_K))
  &=
  \gamma^2 \Big \|d^{\mu,[1]}_K\Big \|^2  + \frac{\gamma^2}{2} \sum_{i,j =1}^d
  \frac{\Big(u_i^\top D^{\Sigma,[1]}_K u_j\Big)^2}{\lambda_i +\lambda_j}
  + o(\gamma^2).
\end{align*}
\end{prop}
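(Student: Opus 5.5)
The mean and covariance parts of the Fréchet distance~\eqref{eq:frechet_distance} can be treated separately. The mean term is immediate: since $\EE[Y_T]=\mu_\data$, we have $\mu_\data-\EE[\hat Y_K] = -d^\mu_K = -\gamma d^{\mu,[1]}_K + o(\gamma)$. Squaring gives $\gamma^2\|d^{\mu,[1]}_K\|^2+o(\gamma^2)$. The real work is a second-order expansion of the Bures distance $\mathcal{B}^2(\Sigma_\data,\Sigma_\data+D)$ in the perturbation $D=D^\Sigma_K=\gamma D^{\Sigma,[1]}_K+o(\gamma)$. Throughout I assume $\Sigma_\data\succ 0$, so all $\lambda_i>0$.

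\textbf{Reduction to a square-root expansion.} Write $S=\Sigma_\data^{1/2}$ and $M=S(\Sigma_\data+D)S=\Sigma_\data^2+SDS$. Then
\[
\mathcal{B}^2=\tr{2\Sigma_\data + D - 2M^{1/2}}.
\]
The matrix square root is analytic near the positive definite matrix $\Sigma_\data^2$, so we can expand
\[
M^{1/2}=\Sigma_\data + L_1(SDS) + L_2(SDS) + O(\|D\|^3),
\]
where $L_1$ is the first Fréchet derivative and $L_2$ the quadratic term. The first derivative $X=L_1(P)$ solves the Sylvester equation $\Sigma_\data X + X\Sigma_\data = P$. The second-order term $Y=L_2(P)$ is obtained by squaring the ansatz and matching orders: it solves $\Sigma_\data Y+Y\Sigma_\data=-X^2$.

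\textbf{Computation in the eigenbasis.} In the eigenbasis $u_i$, write $\tilde D_{ij}=u_i^\top D u_j$. Then $(SDS)_{ij}=\sqrt{\lambda_i\lambda_j}\,\tilde D_{ij}$, which gives
\[
X_{ij}=\frac{\sqrt{\lambda_i\lambda_j}\,\tilde D_{ij}}{\lambda_i+\lambda_j}.
\]
For the traces:
\begin{itemize}
\item First order: $\tr{X}=\sum_i \tilde D_{ii}/2 = \tr{D}/2$, so $-2\tr{X}$ cancels the linear term $\tr{D}$ exactly, as it should at a minimum of $\mathcal{B}^2$.
\item Second order: from the Sylvester equation, $\tr{Y}=-\sum_i (X^2)_{ii}/(2\lambda_i)$, with $(X^2)_{ii}=\sum_j X_{ij}X_{ji}$.
\end{itemize}
Hence
\[
-2\tr{Y}=\sum_{i,j}\frac{X_{ij}^2}{\lambda_i} = \sum_{i,j}\frac{\lambda_j\tilde D_{ij}^2}{(\lambda_i+\lambda_j)^2}.
\]
Symmetrizing in $(i,j)$, using $\tilde D_{ij}=\tilde D_{ji}$, gives
\[
\frac12\sum_{i,j}\frac{(\lambda_i+\lambda_j)\tilde D_{ij}^2}{(\lambda_i+\lambda_j)^2}=\frac12\sum_{i,j}\frac{\tilde D_{ij}^2}{\lambda_i+\lambda_j}.
\]
Substituting $\tilde D=\gamma\, u^\top D^{\Sigma,[1]}_K u+o(\gamma)$ then yields the stated covariance term, with remainder $o(\gamma^2)$.

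\textbf{Main obstacle.} The delicate step is justifying the second-order Taylor expansion of the matrix square root with a controlled remainder, together with correctly identifying the second-order correction $Y$. I would do this by applying the implicit function theorem to $F(Z,P)=Z^2-\Sigma_\data^2-P$ at $(\Sigma_\data,0)$. Its partial derivative in $Z$ is the Sylvester operator $Z\mapsto \Sigma_\data Z+Z\Sigma_\data$, which is invertible when $\Sigma_\data\succ0$. This gives smoothness of the square root, hence an $O(\|D\|^3)$ remainder, which is $o(\gamma^2)$.

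A minor point is that $\cov[\hat Y_K]$ stays positive semidefinite. This holds for small $\gamma$ because $D=O(\gamma)$ and $\Sigma_\data\succ0$, so the principal root is the branch the expansion tracks.
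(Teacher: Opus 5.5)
Your proof is correct, but it takes a different route from the paper's. The paper does not derive the second-order behaviour of the Bures distance itself. It quotes the expansion of \citet{malago2018wasserstein}, $\mathcal{B}^2(\Sigma_0,\Sigma_0+\varepsilon\Sigma_1)=\frac{\varepsilon^2}{2}\tr{\Sigma_1 L_{\Sigma_0}^{-1}[\Sigma_1]}+o(\varepsilon^2)$ with the Lyapunov operator $L_C[X]=CX+XC^\top$. It then only diagonalizes $L_{\Sigma_\data}^{-1}$ in the eigenbasis (entrywise division by $\lambda_i+\lambda_j$) to read off the double sum.

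You instead prove that expansion from scratch. You perturb the square root of $\Sigma_\data^{1/2}(\Sigma_\data+D)\Sigma_\data^{1/2}$ to second order via two Sylvester equations, observe that $-2\tr{X}$ cancels $\tr{D}$, and recover the weights $1/(\lambda_i+\lambda_j)$ from $-2\tr{Y}$ by symmetrizing $\lambda_j/(\lambda_i+\lambda_j)^2$. I checked each of these steps and they are right.

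\textbf{What each route gives you.}
\begin{itemize}
\item The paper's route is shorter and coordinate-free. The quadratic form $\frac12\tr{D\,L_{\Sigma_\data}^{-1}[D]}$ is the Bures--Wasserstein Riemannian metric at $\Sigma_\data$, which ties the result to known geometry.
\item Your route is self-contained. Through the implicit function theorem it also gives a uniform $O(\|D\|^3)$ remainder for arbitrary small symmetric $D$. That is exactly what licenses plugging in $D^\Sigma_K=\gamma D^{\Sigma,[1]}_K+o(\gamma)$. The paper glosses over this step by writing $\mathcal{B}^2(\Sigma_\data,\cov[\hat Y_K])=\mathcal{B}^2(\Sigma_\data,\Sigma_\data+\gamma D^{\Sigma,[1]}_K)$, even though the cited expansion is stated only along a fixed ray $\varepsilon\Sigma_1$.
\end{itemize}

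\textbf{A small correction to your last remark.} Positive semidefiniteness of $\cov[\hat Y_K]$ is automatic, since it is a covariance matrix. What your argument actually needs is that the IFT solution is symmetric. This follows from local uniqueness, because its transpose solves the same equation. Being symmetric and close to $\Sigma_\data\succ 0$, it is positive definite, so it coincides with the principal square root.
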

\begin{proof}
\cite{malago2018wasserstein} give the first order expansion of the Bures distance: for $\Sigma_0  \in \operatorname{Sym}^{++}(d)$, and $\Sigma_1 \in \operatorname{Sym}(d)$ such that for small enough $\varepsilon$, ${\Sigma_0 \pm \varepsilon\Sigma_1 \in  \operatorname{Sym}^{+}(d)}$, 
    \begin{align*}
      \mathcal{B}^2(\Sigma_0, \Sigma_0 + \varepsilon \Sigma_1) &= \frac{\varepsilon^2 }{2}\tr{ \Sigma_1 L_{\Sigma_0}^{-1}[\Sigma_1] } +o(\varepsilon^2) 
    \end{align*}
where we used the matrix Lyapunov operator, defined for a real matrix $C$ by $L_C[X] = CX + XC^\top$. 

   Using $\cov[\hat Y_K] = \cov[Y_T] + \gamma D^{\Sigma,[1]}_K + o(\gamma)$ and $\cov[Y_T] = \Sigma_{\data}$, we get
   \begin{align*}
     \mathcal{B}^2 \Big(\Sigma_\data, \cov[\hat Y_K] \Big) =  \mathcal{B}^2 \Big(\Sigma_\data, \Sigma_\data + \gamma D^{\Sigma,[1]}_K \Big) = \frac{\gamma^2}{2} \tr{D^{\Sigma,[1]}_K L_{\Sigma_\mathrm{data}}^{-1}\left[D^{\Sigma,[1]}_K \right]} 
     % \\
     % &= \frac{\gamma^2}{2} U \left( U^\top D^{\Sigma,[1]}_K U \odot \left(\frac{1}{\lambda_i + \lambda_j}\right)_{1 \leq i,j \leq p} \right) U^\top
  \end{align*}
  We can then use the decomposition of the inverse Lyapunov operator $L_{C}^{-1}$ in the eigenbasis of the data covariance, given in the following Lemma:
  \begin{lem} \label{lem:SGD_3} For $C \in \operatorname{Sym}^{++}(d)$, denoting $C = U \diag{\lambda_i}_{1 \leq i\leq d} U^\top$ the diagonalization of $C$ in its eigenbasis, the inverse operator of $L_C[X]$ decomposes as
    \begin{align*}
      (L_{ C})^{-1}[X] = U \left( U^\top X U \odot \left(\frac{1}{\lambda_i + \lambda_j}\right)_{1 \leq i,j \leq d} \right) U^\top.
    \end{align*}
  \end{lem}
  The proof follows directly from $L_C^{-1}[Z] = X \Leftrightarrow U^\top(CX + XC^{\top}) U = U^\top Z U$. When applied to our $L_{\Sigma_\mathrm{data}}^{-1}\left[D^{\Sigma,[1]}_K \right]$, it gives:
    \begin{align*}
   L_{\Sigma_\mathrm{data}}^{-1}\left[D^{\Sigma,[1]}_K \right] &= U \left( U^\top D^{\Sigma,[1]}_K U \odot \left(\frac{1}{\lambda_i + \lambda_j}\right)_{1 \leq i,j \leq p} \right) U^\top
  \end{align*}
  We can finally deduce:
  \begin{align*}
    \frac{\gamma^2 }{2} \tr{ D^{\Sigma,[1]}_K L_{\Sigma_\mathrm{data}}^{-1}\left[D^{\Sigma,[1]}_K \right]  } &= \frac{\gamma^2 }{2} \sum_{i,j} \frac{(U^\top D^{\Sigma,[1]}_K U)^2_{i,j}}{\lambda_i + \lambda_j}
  \end{align*}
\end{proof}

\subsection{For Gaussian data}
\label{app:Fréchet_dist_exp_gaussian}

 \paragraph{First-order expansion} Using the above notations, under Gaussian data ${p_\data \sim \mathcal{N}(\mu_\data, \Sigma_\data)}$, Proposition~\ref{prop:mean_cov_Gaussian} gives the mean and covariance errors decomposed along the eigenspace of the data covariance ${\Sigma_\data = U \diag{\lambda_i} U^\top}$:
  \begin{align*}
  d^{\mu,[1]}_K &= U \diag{\Delta^{\mu,[1]}_K (\lambda_i)}U^\top \mu_\data\\
  D^{\Sigma,[1]}_K &= U \diag{\Delta^{\Sigma,[1]}_K(\lambda_i)} U^\top 
    \end{align*}
Applying the above Proposition~\ref{prop:Fréchet_disc} gives directly
  \begin{align*}
    \operatorname{FD}(p_\mathrm{data}, \operatorname{Law}(\hat Y_K))
    =
    \gamma^2\sum_{i=1}^d \Delta^{\mu, [1]}(\lambda_i)^2 (u_i^\top \mu_\data)^2
    +
    \frac{\gamma^2}{4}\sum_{i=1}^d \frac{\Delta^{\Sigma, [1]}(\lambda_i)^2}{\lambda_i} + o(\gamma^2)
  \end{align*}
Instead of passing by the general decomposition of Proposition~\ref{prop:Fréchet_disc}, one could also simply use the per-eigenvalue decomposition of the Bures distance, as used below.

 \paragraph{Higher order expansion}
  In Proposition~\ref{prop:mean_cov_Gaussian},
  we have seen that the mean and covariance errors fully decompose in the eigenbasis of $\Sigma_\data = U \operatorname{Diag}(\lambda_i) U^\top $ as:
  \begin{align*}
  \EE[\hat Y_{K}]
  &= \mu_\data  + U \operatorname{Diag}\Big( \Delta^\mu(\lambda_i)\Big) U^\top \mu_\data,\\
  \cov[\hat Y_{K}]
  &= \Sigma_\data + U \operatorname{Diag}\Big( \Delta^\Sigma(\lambda_i)\Big) U^\top .
\end{align*}
The Fréchet distance~\eqref{eq:frechet_distance} simplifies directly to:
\begin{align*}
  \FD(p_\mathrm{data}, \operatorname{Law}(y)) &= \Big \| U \operatorname{Diag}\Big( \Delta^\mu(\lambda_i)\Big) U^\top \mu_\data \Big \|^2 + \mathcal{B}^2\Big(U \operatorname{Diag}(\lambda_i) U^\top,U \operatorname{Diag}\Big(\lambda_i +  \Delta^\Sigma(\lambda_i)\Big) U^\top\Big) \\
  &= \sum_{i=1}^d \Delta^\mu(\lambda_i)^2 (u_i^\top \mu_\data)^2 + \sum_{i=1}^d \left( \sqrt{\lambda_i} -  \sqrt{\lambda_i + \Delta^\Sigma(\lambda_i)} \right)^2 
\end{align*}
Proposition~\ref{prop:mean_cov_Gaussian} also gives a second-order expansions of the mean and covariance defects:     
\begin{align*}
\Delta^{\mu}(\lambda)
    =
    \gamma \Delta^{\mu, [1]}(\lambda)
    +
    \gamma^2 \Delta^{\mu, [2]}(\lambda)
    +
    O(\gamma^3) \\
    \Delta^{\Sigma}(\lambda)
    =
    \gamma \Delta^{\Sigma, [1]}(\lambda)
    +
    \gamma^2 \Delta^{\Sigma, [2]}(\lambda)
    +
    O(\gamma^3) 
\end{align*}
Expanding the square for the mean term,
\begin{align*}
  \Delta^\mu(\lambda_i)^2
  &=
  \gamma^2
  \left(\Delta^{\mu,[1]}(\lambda_i)\right)^2
  +
  2\gamma^3
  \Delta^{\mu,[1]}(\lambda_i)
  \Delta^{\mu,[2]}(\lambda_i)
  +
  O(\gamma^4).
\end{align*}
For the covariance term, using the square-root expansion
\begin{align*}
  \left(\sqrt{\lambda+\delta}-\sqrt{\lambda}\right)^2
  =
  \frac{\delta^2}{4\lambda}
  -
  \frac{\delta^3}{8\lambda^2}
  +
  O(\delta^4),
\end{align*}
we get
\begin{align*}
  \left(
  \sqrt{\lambda_i}
  -
  \sqrt{\lambda_i+\Delta^\Sigma(\lambda_i)}
  \right)^2
  &=
  \frac{\gamma^2}{4\lambda_i}
  \left(\Delta^{\Sigma,[1]}(\lambda_i)\right)^2  \\
  &\quad
  +
  \gamma^3
  \left[
  \frac{
  \Delta^{\Sigma,[1]}(\lambda_i)
  \Delta^{\Sigma,[2]}(\lambda_i)
  }{2\lambda_i}
  -
  \frac{
  \left(\Delta^{\Sigma,[1]}(\lambda_i)\right)^3
  }{8\lambda_i^2}
  \right]
  +
  O(\gamma^4).
\end{align*}
Therefore,
\begin{align} \label{eq:third_order}
  \FD(p_\mathrm{data}, \operatorname{Law}(\hat Y_K))
  &=
  \gamma^2 \mathcal E^{[0]}
  +
  \gamma^3 \mathcal E^{[1]}
  +
  O(\gamma^4),
\end{align}
where
\begin{align*}
  \mathcal E^{[0]}
  &\eqdef
  \sum_{i=1}^d
  \left(\Delta^{\mu,[1]}(\lambda_i)\right)^2
  (u_i^\top \mu_\data)^2
  +
  \frac14
  \sum_{i=1}^d
  \frac{
  \left(\Delta^{\Sigma,[1]}(\lambda_i)\right)^2
  }{\lambda_i},
  \\
  \mathcal E^{[1]}
  &\eqdef
  2
  \sum_{i=1}^d
  \Delta^{\mu,[1]}(\lambda_i)
  \Delta^{\mu,[2]}(\lambda_i)
  (u_i^\top \mu_\data)^2
  +
  \sum_{i=1}^d
  \left[
  \frac{
  \Delta^{\Sigma,[1]}(\lambda_i)
  \Delta^{\Sigma,[2]}(\lambda_i)
  }{2\lambda_i}
  -
  \frac{
  \left(\Delta^{\Sigma,[1]}(\lambda_i)\right)^3
  }{8\lambda_i^2}
  \right].
\end{align*}

  \section{Affine drift mean and covariance expansions}
  \label{app:affine_velocity}
  \begin{prop}[Affine drift mean and covariance errors]
  \label{prop:affine_velocity}
  Under the affine drift $v_t(x)=H_t x+r_t$, the mean and covariance biases admit the expansions
  \begin{align}
    \EE[\hat Y_k] - \EE[Y_{t_k}]
    &=
    \gamma d^{\mu,[1]}_{t_k} + \gamma^2 d^{\mu,[2]}_{t_k} + O(\gamma^3),
    \label{eq:affine_mean_disc_error} \\
    \cov[\hat Y_k] - \cov[Y_{t_k}]
    &=
    \gamma D^{\Sigma,[1]}_{t_k} + \gamma^2 D^{\Sigma,[2]}_{t_k} + O(\gamma^3),
    \label{eq:affine_cov_disc_error}
  \end{align}
  where, denoting $m_s \eqdef \EE[Y_s]$ and $C_s \eqdef \cov(Y_s)$,
  \begin{align}
    d^{\mu,[1]}_{t_k}
    &= -\frac12 \int_0^{t_k} J_{t_k,s} \ddot m_s \, ds,  \label{eq:affine_first_order_mean}
    \\
    D^{\Sigma,[1]}_{t_k}
    &= -\int_0^{t_k} J_{t_k,s}
    \left(\frac12 \ddot C_s - H_s C_s H_s^\top\right)
    J_{t_k,s}^\top \, ds, 
     \label{eq:affine_first_order_cov}
    \\
    d^{\mu,[2]}_{t_k}
    &=
    \int_0^{t_k} J_{t_k,s}
    \left[
      - \frac12(\dot H_s + H_s^2)d^{\mu,[1]}_s
      + \frac14 H_s \ddot m_s
      + \frac1{12}\dddot m_s
    \right] ds,
    \label{eq:affine_second_order_mean} \\
    D^{\Sigma,[2]}_{t_k}
    &=
    \int_0^{t_k} J_{t_k,s}
    \left[
    \begin{aligned}
      &- \frac12(\dot H_s + H_s^2)D^{\Sigma,[1]}_s
      - \frac12 D^{\Sigma,[1]}_s(\dot H_s^\top + (H_s^\top)^2) \\
      &+ \frac14 H_s \ddot C_s
      + \frac14 \ddot C_s H_s^\top
      + \frac1{12} \dddot C_s \\
      &- \frac12 \dot H_s C_s H_s^\top
      - \frac12 H_s C_s \dot H_s^\top \\
      &- H_s^2 C_s H_s^\top
      - H_s C_s (H_s^\top)^2
      - a_s H_s H_s^\top
    \end{aligned}
    \right]
    J_{t_k,s}^\top\, ds.
    \label{eq:affine_second_order_cov}
  \end{align}
 % Here \(m_s^{(3)} \eqdef \frac{d^3}{ds^3}m_s\) and \(\Sigma_s^{(3)} \eqdef \frac{d^3}{ds^3}\Sigma_s\).
  \end{prop}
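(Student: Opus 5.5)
For an affine drift $v_t(x)=H_tx+r_t$ I would not use the weak-error machinery of Theorem~\ref{thm:general_weak}. The first two moments of the Euler--Maruyama iterates obey closed linear recursions, which can be compared directly with the exact moment ODEs. Write $\hat m_k\eqdef\EE[\hat Y_k]$ and $\hat C_k\eqdef\cov[\hat Y_k]$. From~\eqref{eq:disc} we get exactly
\[
\hat m_{k+1}=(\id+\gamma H_{t_k})\hat m_k+\gamma r_{t_k},\qquad
\hat C_{k+1}=(\id+\gamma H_{t_k})\hat C_k(\id+\gamma H_{t_k})^\top+2\gamma a_{t_k}\id .
\]
The exact moments satisfy $\dot m_s=H_sm_s+r_s$ and $\dot C_s=H_sC_s+C_sH_s^\top+2a_s\id$.

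Subtracting gives error recursions for $d_k\eqdef\hat m_k-m_{t_k}$ and $D_k\eqdef\hat C_k-C_{t_k}$:
\[
d_{k+1}=(\id+\gamma H_{t_k})d_k+\tau^\mu_{t_k},\qquad
D_{k+1}=(\id+\gamma H_{t_k})D_k(\id+\gamma H_{t_k})^\top+\tau^\Sigma_{t_k},
\]
with $d_0=0$ and $D_0=0$. The local defects are
\[
\tau^\mu_t=m_t+\gamma\dot m_t-m_{t+\gamma},\qquad
\tau^\Sigma_t=C_t+\gamma\dot C_t+\gamma^2H_tC_tH_t^\top-C_{t+\gamma}.
\]
Taylor expansion gives $\tau^\mu=-\tfrac{\gamma^2}{2}\ddot m-\tfrac{\gamma^3}{6}\dddot m+O(\gamma^4)$ and $\tau^\Sigma=-\gamma^2(\tfrac12\ddot C-HCH^\top)-\tfrac{\gamma^3}{6}\dddot C+O(\gamma^4)$.

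The key step is a modified-equation (asymptotic ansatz) argument. I would look for smooth functions $d^{[1]},d^{[2]}$ of $t$, vanishing at $0$, such that $\tilde d(t)=\gamma d^{[1]}(t)+\gamma^2d^{[2]}(t)$ satisfies the recursion $\tilde d(t+\gamma)=(\id+\gamma H_t)\tilde d(t)+\tau^\mu_t$ up to $O(\gamma^4)$. Expanding $\tilde d(t+\gamma)=\tilde d+\gamma\dot{\tilde d}+\tfrac{\gamma^2}{2}\ddot{\tilde d}+O(\gamma^3)$ and matching powers of $\gamma$ gives two equations.
\begin{itemize}
\item At order $\gamma^2$: $\dot d^{[1]}=Hd^{[1]}-\tfrac12\ddot m$. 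Variation of constants with $J_{t,s}$, where $\partial_tJ_{t,s}=H_tJ_{t,s}$, gives~\eqref{eq:affine_first_order_mean}.
\item At order $\gamma^3$: $\dot d^{[2]}+\tfrac12\ddot d^{[1]}=Hd^{[2]}-\tfrac16\dddot m$.
\end{itemize}
Differentiating the first-order ODE once gives $\ddot d^{[1]}=(\dot H+H^2)d^{[1]}-\tfrac12H\ddot m-\tfrac12\dddot m$. Substituting, the source term for $d^{[2]}$ is $-\tfrac12(\dot H+H^2)d^{[1]}+\tfrac14H\ddot m+(\tfrac14-\tfrac16)\dddot m$. Since $\tfrac14-\tfrac16=\tfrac1{12}$, this yields~\eqref{eq:affine_second_order_mean}.

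The covariance case runs the same way, with the two-sided operator $D\mapsto HD+DH^\top$. One new feature is the term $\gamma^2 H_tD_kH_t^\top$ in the recursion, which is of order $\gamma^3$ once $D=O(\gamma)$. It should cancel against the cross term $2HD^{[1]}H^\top$ inside $\tfrac12\ddot D^{[1]}$. This cancellation is what leaves only the symmetric $-\tfrac12(\dot H+H^2)D^{[1]}-\tfrac12D^{[1]}(\dot H^\top+H^{\top2})$ terms in~\eqref{eq:affine_second_order_cov}. The remaining terms come from expanding $\tau^\Sigma$ to order $\gamma^3$, including the contribution $\gamma^2HCH^\top$ evaluated along the step. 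They also come from eliminating derivatives of $C$ via $\dot C=HC+CH^\top+2a\id$: the $a_sH_sH_s^\top$ and $\dot HCH^\top$ terms should appear when $\tfrac12\frac{d}{ds}(HCH^\top)$-type pieces are rewritten.

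To conclude rigorously, define the residual $\rho_k\eqdef d_k-\tilde d(t_k)$. It satisfies $\rho_{k+1}=(\id+\gamma H_{t_k})\rho_k+O(\gamma^4)$ with $\rho_0=0$, so a discrete Grönwall bound over $K=T/\gamma$ steps gives $\rho_k=O(\gamma^3)$ uniformly. The same bound handles the covariance recursion. This needs smoothness of $H,r,a$ in time, which Assumption~\ref{ass:regularity_sched}(i) provides on $[0,T]$ in terms of $\beta_t,\xi_t$, together with boundedness of $m$, $C$ and enough of their time derivatives.

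As a consistency check, the first-order formulas should agree with Corollary~\ref{cor:mean_cov} specialized to an affine drift.

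The main obstacle is the bookkeeping at order $\gamma^3$ for the covariance: getting the $HD^{[1]}H^\top$ cancellation and the exact coefficients of the $H^2CH^\top$, $\dot HCH^\top$ and $a_sHH^\top$ terms right. To do this I would first verify the computation in the scalar case $d=1$ and then symmetrize.
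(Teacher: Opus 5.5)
Your proposal is correct and follows essentially the paper's route: exact Euler--Maruyama moment recursions, Taylor expansion of the exact moments, matching the $\gamma^2$ and $\gamma^3$ coefficients of a smooth ansatz, and variation of constants, including the cancellation of the $H_sD^{\Sigma,[1]}_sH_s^\top$ term that you anticipate. The only differences are that the paper takes the first-order coefficients from Corollary~\ref{cor:mean_cov} (the same first-order ODEs reappear in its coefficient matching), and that your residual/discrete Gr\"onwall step justifies the ansatz, which the paper simply posits.
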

  \begin{proof}
  We consider the affine drift $v_s(x) = H_s x + r_s$. The Jacobian $J_{t,s}$ then verifies
  \begin{align*}
    \frac{dJ_{t,s}}{dt} = H_t J_{t,s}, \qquad J_{s,s} = \id,
  \end{align*}
  and is deterministic. Moreover, the Hessian and Laplacian of the flow vanish:
  \[
  H_{t,s}(Y) = 0, \qquad \Delta_{t,s}(Y) = 0.
  \]
  The quantities $e_{t,s}$ and $E_{t,s}$ from Theorem~\ref{thm:general_weak} simplify to:
  \begin{align}
    e_{t,s}(Y) &= -\frac12 J_{t,s}\big[\dot H_s Y_s + \dot r_s + H_s(H_s Y_s + r_s) \big], \label{eq:e_linear}\\
    E_{t,s}(Y) &= -\frac12 J_{t,s}\big[ a_s(H_s + H_s^\top) + \dot a_s \id \big] J_{t,s}^\top. \label{eq:E_linear}
  \end{align}

  \paragraph{First order coefficients} For the first order error we specialize Corollary~\ref{cor:mean_cov} to the linear drift. Note that the mean $m_s \eqdef \EE[Y_s]$ and covariance $C_s \eqdef \cov(Y_s)$ along the continuous SDE~\eqref{eq:backward_SDE} follow the ODEs:
  \begin{align} \label{eq:mean_ODE}
    \dot m_s &= H_s m_s + r_s, \\
    \dot C_s &= H_s C_s + C_s H_s^\top + 2 a_s \id.
    \label{eq:cov_ODE}
  \end{align}
  Differentiating~\eqref{eq:mean_ODE} gives $$
  \ddot m_s = \dot H_s m_s + H_s (H_s m_s + r_s) + \dot r_s
  $$
  Therefore, applying~\eqref{eq:mean_disc_error_proof} with~\eqref{eq:e_linear} gives
  \begin{align*}
    \EE[\hat Y_{k}] - \EE[Y_{t_k}]
    &= \gamma \int_0^{t_k} \EE[e_{t_k,s}(Y)] ds + O(\gamma^2) \\
    &= -\frac{\gamma}{2} \int_0^{t_k} J_{t_k,s}\big[\dot H_s m_s + \dot r_s + H_s(H_s m_s + r_s)\big] ds + O(\gamma^2) \nonumber \\
    &= -\frac{\gamma}{2} \int_0^{t_k} J_{t_k,s} \ddot m_s \, ds + O(\gamma^2), \nonumber
  \end{align*}
  which proves \eqref{eq:affine_first_order_mean}.

 For the covariance error, we specialize~\eqref{eq:cov_disc_error_proof} for affine drift. Using the variation-of-constants formula~\citep{davis1977linear}, the solution of the SDE~\eqref{eq:backward_SDE} with linear drift takes the form 
  \[
  Y_t = J_{t,s} Y_s + b_{t,s}.
  \]
  Thus $\cov(Y_s,Y_t) = C_s J_{t,s}^\top$ and using~\eqref{eq:e_linear}:
  \begin{align*}
    \cov[e_{t_k,s}(Y), Y_{t_k}]
    = -\frac12 J_{t_k,s}\big[\dot H_s + H_s^2\big] C_s J_{t_k,s}^\top.
  \end{align*}
  The covariance error~\eqref{eq:cov_disc_error_proof} becomes
  \begin{align*}
    \cov[\hat Y_{k}] - \cov[Y_{t_k}]
    &= - \gamma \int_0^{t_k} J_{t_k,s}
    \Big[
    +\frac12(\dot H_s + H_s^2)\Sigma_s
    + \frac12 \Sigma_s(\dot H_s^\top + (H_s^\top)^2)
    \\
    &+ a_s(H_s + H_s^\top)
    + \dot a_s \id
    \Big]
    J_{t_k,s}^\top ds
    + O(\gamma^2). \nonumber
  \end{align*}
  Moreover, differentiating the covariance ODE~\eqref{eq:cov_ODE} gives
  \begin{align*}
    \frac12 \ddot C_s - H_s C_s H_s^\top
    =
    \frac12(\dot H_s + H_s^2)C_s
    + \frac12 C_s(\dot H_s^\top + (H_s^\top)^2)
    + a_s(H_s + H_s^\top)
    + \dot a_s \id,
  \end{align*}
  which proves \eqref{eq:affine_first_order_cov}.

  \paragraph{Second-order coefficients.} For the second-order coefficients, we need to re-write the expansion of the mean and covariance from scratch in the linear-drift setting.
  Recall that the exact mean $m_s = \EE[Y_s]$ and covariance $C_s = \cov[Y_s]$ follow the ODEs
  \begin{align}
    \dot m_s &= H_s m_s + r_s,
   \\
    \dot \Sigma_s &= H_s \Sigma_s + \Sigma_s H_s^\top + 2 a_s \id.
  \end{align}
  We can write similar equations for the mean $\hat m_{k} = \EE[\hat Y_k]$ and covariance $\hat C_{k} = \cov[\hat Y_k]$ of the discretized process. Taking expectations in~\eqref{eq:disc} with affine drift:
    \[
    \hat Y_{k+1}
    =
    \hat Y_k+\gamma(H_{t_k}\hat Y_k+r_{t_k})
    +\sqrt{2\gamma a_{t_k}}\,\xi_k,
    \qquad \xi_k\sim\mathcal N(0,I),
    \]
    gives
   \begin{align}
    \hat m_{k+1}
    =
    \hat m_k+\gamma(H_{t_k}\hat m_k+r_{t_k}).
    \label{eq:disc_mean_eq}
   \end{align}
    Similarly, taking covariance:
    \begin{align}
    \hat C_{k+1}
    =
    (I+\gamma H_{t_k}) \hat C_k(I+\gamma H_{t_k})^\top
    +2\gamma a_{t_k} \id. \label{eq:disc_cov_eq}
    \end{align}
  Moreover, Taylor expanding the exact mean and covariance at time $t_k$ gives
  \begin{align}
    m_{t_{k+1}}
    &=
    m_{t_k}
    + \gamma \dot m_{t_k}
    + \frac{\gamma^2}{2}\ddot m_{t_k}
    + \frac{\gamma^3}{6}m^{(3)}_{t_k}
    + O(\gamma^4), \label{eq:Taylor_mean} \\
    C_{t_{k+1}}
    &=
    C_{t_k}
    + \gamma \dot C_{t_k}
    + \frac{\gamma^2}{2}\ddot C_{t_k}
    + \frac{\gamma^3}{6}C^{(3)}_{t_k}
    + O(\gamma^4). \label{eq:Taylor_cov}
  \end{align}

  \textit{Second-order mean error.}
  We look for an expansion of the form
  \[
    d^\mu_k \eqdef \EE[\hat Y_k] - \EE[Y_{t_k}] = \gamma d^{\mu,[1]}_{t_k} + \gamma^2 d^{\mu,[2]}_{t_k} + \gamma^3 d^{\mu,[3]}_{t_k} + O(\gamma^4).
  \]
  We subtract the above Taylor expansion~\eqref{eq:Taylor_mean} from the recursion~\eqref{eq:disc_mean_eq}:
  \begin{align} \label{eq:d_mu_1}
    d^\mu_{k+1}
    &=
    (I+\gamma H_{t_k})d^\mu_k
    - \frac{\gamma^2}{2}\ddot m_{t_k}
    - \frac{\gamma^3}{6}m^{(3)}_{t_k}
    + O(\gamma^4).
  \end{align}
  We also  Taylor expand $d^{\mu,[1]}_{t_{k+1}}$ and $d^{\mu,[2]}_{t_{k+1}}$:
  \begin{align*}
  d^{\mu,[1]}_{t_{k+1}} &= d^{\mu,[1]}_{t_k} + \gamma \dot d^{\mu,[1]}_{t_k} +  \frac12 \gamma^2 \ddot d^{\mu,[1]}_{t_k} + O(\gamma^3) \\
  d^{\mu,[2]}_{t_{k+1}} &= d^{\mu,[2]}_{t_k} + \gamma \dot d^{\mu,[2]}_{t_k} + O(\gamma^2),
  \end{align*}
  to get:  
  \begin{align} \label{eq:d_mu_2}
    d^\mu_{k+1}
    &=
    \gamma d^{\mu,[1]}_{t_k}
    + \gamma^2 \big(\dot d^{\mu,[1]}_{t_k} + d^{\mu,[2]}_{t_k}\big)
    + \gamma^3 \big(\tfrac12 \ddot d^{\mu,[1]}_{t_k} + \dot d^{\mu,[2]}_{t_k} + d^{\mu,[3]}_{t_{k}}\big)
    + O(\gamma^4),
  \end{align}
  We then compare the coefficients of $\gamma^2$ and $\gamma^3$ in \eqref{eq:d_mu_1} and \eqref{eq:d_mu_2}:
  \begin{align*}
    \dot d^{\mu,[1]}_s &= H_s d^{\mu,[1]}_s - \frac12 \ddot m_s, \qquad d^{\mu,[1]}_0 = 0, \\
    \dot d^{\mu,[2]}_s &= H_s d^{\mu,[2]}_s - \frac12 \ddot d^{\mu,[1]}_s - \frac16 m_s^{(3)}, \qquad d^{\mu,[2]}_0 = 0.
  \end{align*}
  Differentiating the first equation above
gives
  \begin{align*}
\ddot d^{\mu,[1]}_s
    &=
\dot H_s d^{\mu,[1]}_s + H_s \dot d^{\mu,[1]}_s - \frac12 m_s^{(3)} \\
&=
(\dot H_s + H_s^2)d^{\mu,[1]}_s
- \frac12 H_s \ddot m_s
- \frac12 m_s^{(3)}.
  \end{align*}
We use this identity into $\dot d^{\mu,[2]}_s$ to get
  \begin{align*}
\dot d^{\mu,[2]}_s
=
H_s d^{\mu,[2]}_s
-\frac12 (\dot H_s + H_s^2)d^{\mu,[1]}_s
+\frac14 H_s \ddot m_s
+\frac1{12} m_s^{(3)},
  \end{align*}
  and conclude by variation of constants to finally get \eqref{eq:affine_second_order_mean}.

  \textit{Second-order covariance error.} We follow the exact same steps for the covariance error.  We look for the expansion of
  \[
    D^\Sigma_k = \cov[\hat Y_k] - \cov[Y_{t_k}] = \gamma D^{\Sigma,[1]}_{t_k} + \gamma^2 D^{\Sigma,[2]}_{t_k} + \gamma^3 D^{\Sigma,[3]}_{t_k} + O(\gamma^4 ).
  \]
   We get for $D^\Sigma_{k+1}$, using Taylor expansions:
  \begin{align}
    D^\Sigma_{k+1}
    &=
    \gamma D^{\Sigma,[1]}_{t_{k+1}} + \gamma^2 D^{\Sigma,[2]}_{t_{k+1}} + \gamma^3 D^{\Sigma,[3]}_{t_{k+1}} + O(\gamma^4) \nonumber \\
    &=
    \gamma D^{\Sigma,[1]}_{t_k}
    + \gamma^2 \big(\dot D^{\Sigma,[1]}_{t_k} + D^{\Sigma,[2]}_{t_k}\big)
    + \gamma^3 \big(\tfrac12 \ddot D^{\Sigma,[1]}_{t_k} + \dot D^{\Sigma,[2]}_{t_k} + D^{\Sigma,[3]}_{t_k}\big)
    + O(\gamma^4), \label{eq:cov_1}
  \end{align}
  On the other side, combining~\eqref{eq:Taylor_cov} and~\eqref{eq:disc_cov_eq}:
  \begin{align} \label{eq:cov_2}
    D^\Sigma_{k+1}
    &=
    (I+\gamma H_{t_k})D^\Sigma_k(I+\gamma H_{t_k})^\top
    - \gamma^2 \left(\frac12 \ddot \Sigma_{t_k} - H_{t_k}\Sigma_{t_k}H_{t_k}^\top\right)
    - \frac{\gamma^3}{6}\Sigma^{(3)}_{t_k}
    + O(\gamma^4).
  \end{align}
  We define \(\Lambda_s \eqdef \frac12 \ddot \Sigma_s - H_s \Sigma_s H_s^\top\) and
  % \begin{align*}
  %   D^\Sigma_{k+1} &= (I+\gamma H_{t_k})D^\Sigma_k(I+\gamma H_{t_k})^\top
  %   - \gamma^2 \Lambda_{t_k}
  %   - \frac{\gamma^3}{6}\Sigma^{(3)}_{t_k} + O(\gamma^4)
  %   \\
  %   &=\gamma D^{\Sigma,[1]}_{t_k}
  %   + \gamma^2\Big(H_{t_k}D^{\Sigma,[1]}_{t_k} + D^{\Sigma,[1]}_{t_k}H_{t_k}^\top \nonumber 
  %   + D^{\Sigma,[2]}_{t_k} - \Lambda_{t_k}\Big) \nonumber \\
  %   &+ \gamma^3\Big(H_{t_k}D^{\Sigma,[2]}_{t_k} + D^{\Sigma,[2]}_{t_k}H_{t_k}^\top \nonumber 
  %   + H_{t_k}D^{\Sigma,[1]}_{t_k}H_{t_k}^\top - \frac16 \Sigma^{(3)}_{t_k} + D^{\Sigma,[3]}_{t_k}\Big)
  %   + O(\gamma^4).
  % \end{align*}
  compare the coefficients of $\gamma^2$ and $\gamma^3$ in~\eqref{eq:cov_1} and~\eqref{eq:cov_2}: 
  \begin{align*}
    \dot D^{\Sigma,[1]}_s
    &=
    H_s D^{\Sigma,[1]}_s + D^{\Sigma,[1]}_s H_s^\top - \Lambda_s,
    \qquad
    D^{\Sigma,[1]}_0 = 0, \\
    \dot D^{\Sigma,[2]}_s
    &=
    H_s D^{\Sigma,[2]}_s + D^{\Sigma,[2]}_s H_s^\top
    + H_s D^{\Sigma,[1]}_s H_s^\top
    - \frac16 \Sigma_s^{(3)}
    - \frac12 \ddot D^{\Sigma,[1]}_s,
    \qquad
    D^{\Sigma,[2]}_0 = 0.
  \end{align*}
  To replace \(\ddot D^{\Sigma,[1]}_s\) in the equation above, we differentiate the first-order equation:
  \begin{align*}
    \ddot D^{\Sigma,[1]}_s
    &=
    \dot H_s D^{\Sigma,[1]}_s + D^{\Sigma,[1]}_s \dot H_s^\top +
    H_s \bigl(H_s D^{\Sigma,[1]}_s + D^{\Sigma,[1]}_s H_s^\top - \Lambda_s\bigr)
    \\ \nonumber
    &+ \bigl(H_s D^{\Sigma,[1]}_s + D^{\Sigma,[1]}_s H_s^\top - \Lambda_s\bigr) H_s^\top \nonumber 
    - \dot \Lambda_s \nonumber \\
    &=
    (\dot H_s + H_s^2)D^{\Sigma,[1]}_s
    + D^{\Sigma,[1]}_s(\dot H_s^\top + (H_s^\top)^2)
    + 2 H_s D^{\Sigma,[1]}_s H_s^\top
    - H_s \Lambda_s
    - \Lambda_s H_s^\top
    - \dot \Lambda_s.
  \end{align*}
  Since
  \[
    \Lambda_s
    =
    \frac12 \ddot \Sigma_s - H_s \Sigma_s H_s^\top,
    \qquad
    H_s \Lambda_s + \Lambda_s H_s^\top
    =
    \frac12 H_s \ddot \Sigma_s
    + \frac12 \ddot \Sigma_s H_s^\top
    - H_s^2 \Sigma_s H_s^\top
    - H_s \Sigma_s (H_s^\top)^2,
  \]
  and
  \[
    \dot \Lambda_s
    =
    \frac12 \Sigma_s^{(3)}
    - \dot H_s \Sigma_s H_s^\top
    - H_s \dot \Sigma_s H_s^\top
    - H_s \Sigma_s \dot H_s^\top,
  \]
  using \(\dot \Sigma_s = H_s \Sigma_s + \Sigma_s H_s^\top + 2 a_s \id\), this becomes
  \begin{align*}
    \ddot D^{\Sigma,[1]}_s
    &=
    (\dot H_s + H_s^2)D^{\Sigma,[1]}_s
    + D^{\Sigma,[1]}_s(\dot H_s^\top + (H_s^\top)^2)
    + 2 H_s D^{\Sigma,[1]}_s H_s^\top \nonumber \\
    &\quad
    - \frac12 H_s \ddot \Sigma_s
    - \frac12 \ddot \Sigma_s H_s^\top
    - \frac12 \Sigma_s^{(3)}
    + \dot H_s \Sigma_s H_s^\top
    + H_s \Sigma_s \dot H_s^\top \nonumber \\
    &\quad
    + 2 H_s^2 \Sigma_s H_s^\top
    + 2 H_s \Sigma_s (H_s^\top)^2
    + 2 a_s H_s H_s^\top.
  \end{align*}
  Substituting this into the equation for \(\dot D^{\Sigma,[2]}_s\) yields the integrand in \eqref{eq:affine_second_order_cov}, and variation of constants gives the claimed formula.

  \paragraph{Equivalent form for the first-order errors}
  Recall that the Jacobian satisfies
  \begin{align*}
    \frac{dJ_{t,s}}{ds} = - J_{t,s} H_s.
  \end{align*}
  Using $\dot m_s = H_s m_s + r_s$, we get
  \begin{align*}
    \frac{d}{ds}\Big(J_{t_k,s}(H_s m_s + r_s)\Big)
    &= - J_{t_k,s} H_s(H_s m_s + r_s) + J_{t_k,s} \ddot m_s.
  \end{align*}
  Plugging this identity into \eqref{eq:affine_mean_disc_error} gives
  \begin{align}
    \EE[\hat Y_{k}] - \EE[Y_{t_k}]
    &= -\frac{\gamma}{2} \Big[ J_{t_k,s} (H_s m_s + r_s)\Big]_0^{t_k}
    - \frac{\gamma}{2}\int_0^{t_k} J_{t_k,s}H_s(H_s m_s + r_s) ds
    + O(\gamma^2).
    \label{eq:affine_mean_disc_error_ipp}
  \end{align}

  For the covariance, from
  \[
  \dot \Sigma_s = H_s \Sigma_s + \Sigma_s H_s^\top + 2 a_s \id.
  \]
  Then
  \begin{align*}
    \frac{d}{ds}\Big(J_{t_k,s} \dot \Sigma_s J_{t_k,s}^\top\Big)
    &= J_{t_k,s}\big(\dot H_s \Sigma_s + \Sigma_s \dot H_s^\top + 2 \dot a_s \id\big) J_{t_k,s}^\top.
  \end{align*}
  Since
  \begin{align*}
    \frac12 \ddot \Sigma_s - H_s \Sigma_s H_s^\top
    &=
    \frac12\big(\dot H_s \Sigma_s + \Sigma_s \dot H_s^\top + 2 \dot a_s \id\big) \\
    &\quad + \frac12\big(H_s^2 \Sigma_s + \Sigma_s (H_s^\top)^2 + 2 a_s(H_s + H_s^\top)\big), \nonumber
  \end{align*}
  \eqref{eq:affine_cov_disc_error} becomes
  \begin{align}
    \cov[\hat Y_{k}] - \cov[Y_{t_k}]
    &= -\frac{\gamma}{2} \Big[J_{t_k,s} (H_s \Sigma_s + \Sigma_s H_s^\top + 2 a_s \id)  J_{t_k,s}^\top \Big]_0^{t_k}   \nonumber \\
    &\quad - \frac\gamma 2 \int_0^{t_k} J_{t_k,s} \Big[ H_s^2\Sigma_s  +  \Sigma_s (H_s^\top)^2 + 2 a_s(H_s + H_s^\top) \Big]J_{t_k,s}^\top ds +O(\gamma^2).
    \label{eq:affine_cov_disc_error_ipp}
  \end{align}
  \end{proof}

  \section{Proof of Proposition~\ref{prop:mean_cov_Gaussian}}
  \label{app:mean_cov_Gaussian}
  \begin{proof}
  We first remind the notations, for $t\in [0,T]$ and $k \in \llbracket 0, K \rrbracket$, $m_t = \EE[Y_t]$, $C_t = \cov[Y_t]$, $\hat m_k = \EE[\hat Y_tk]$ and  $\hat C_k = \cov[\hat Y_k]$. We also remind the independent evolution equations \eqref{eq:mean_ODE}-\eqref{eq:cov_ODE}-~\eqref{eq:disc_mean_eq}-\eqref{eq:disc_cov_eq} that follow these quantities (see Appendix~\ref{app:affine_velocity} for details):
  \begin{align} 
    \dot m_s &= H_s m_s + r_s, \\
    \hat m_{k+1}
    &=
    \hat m_k+\gamma(H_{t_k}\hat m_k+r_{t_k}) \\
    \dot C_s &= H_s C_s + C_s H_s^\top + 2 a_s \id \\
    \hat C_{k+1}
    &=
    (I+\gamma H_{t_k}) \hat C_k(I+\gamma H_{t_k})^\top
    +2\gamma a_{t_k} \id.
  \end{align}
    Moreover, from~\eqref{eq:vt_linear}, $H_s$ co-diagonalizes with $\Sigma_\data = U \operatorname{Diag}(\lambda_i) U^\top$ and $r_s$ takes the form ${r_s = P_s \mu_\data}$ with $P_s$ which  co-diagonalizes with $\Sigma_\data$.
    We thus get, subtracting the continuous and discrete equations, the following decomposition at time $t_k = k \gamma$:
\begin{align*}
  d^\mu_{t_k} \eqdef \hat m_k-m_{t_k}
  &=  U \operatorname{Diag}\Big( \Delta_{t_k}^\mu(\lambda_i)\Big) U^\top \mu_\data,\\
 d^\Sigma_{t_k} \eqdef \hat C_k - C_{t_k}
  &= U \operatorname{Diag}\Big( \Delta_{t_k}^\Sigma(\lambda_i)\Big) U^\top 
\end{align*}
for some $\Delta_{t_k}^\Sigma(\lambda_i), \Delta_{t_k}^\mu(\lambda_i) \in \RR^d$. We now calculate the second order expansions of $\Delta_{t_k}^\Sigma$ and $\Delta_{t_k}^\mu$ at $t_k = T$. For this, we use the second order expansions of $d^\mu_{t_k}$ and $d^\Sigma_{t_k}$ calculated in Appendix~\ref{app:affine_velocity}, Proposition~\ref{prop:affine_velocity}, for a general linear drift $v_t(x)= H_tx + r_t$, which we specialize with the values of $H_t$ and $r_t$ from~\eqref{eq:vt_linear}. 

First, we calculate explicitly the Jacobian matrix $J_{t,s}$ which appears in the means and covariance decomposition of Proposition~\ref{prop:affine_velocity}. Recall that $J_{t,s}$ is the unique solution of the linear ODE:
  \begin{align*}
    \frac{dJ_{t,s}(Y)}{dt} = \nabla v_t(Y_t)  J_{t,s}(Y) = H_t J_{t,s}(Y), \qquad J_{s,s} = \id
  \end{align*}
  Note that all the $H_s$ commute in time and thus $$J_{t,s}(Y) = \exp\left(\int_s^t H_\tau d\tau \right).$$

  Using $\Sigma_t = \eta_t^2 (\Sigma_\data + \sigma_t^2 \id)$
  gives
  \begin{align*}
    \frac{d}{dt}\log\Sigma_t
    =
    \Sigma_t^{-1}\frac{d}{dt}\Sigma_t
    = 2 \frac{\dot \eta_t}{\eta_t} \id + 2 \eta_t^2 \sigma_t \dot \sigma_t \Sigma_t^{-1},
  \end{align*}
  hence
  \begin{align*}
    H_t
    &= \alpha \frac{\dot \eta_{T-t}}{\eta_{T-t}} \id
    -\frac{1+\alpha}{2}\frac{d}{du}\log\Sigma_u\bigg|_{u=T-t}.
  \end{align*}
  Changing variables \(u=T-\tau\), we obtain
  \begin{align*}
    \int_s^t H_\tau\,d\tau
    &=
    \alpha \Big[\log \eta_u\Big]_{T-t}^{T-s} \id
    -\frac{1+\alpha}{2}\Big[\log\Sigma_u\Big]_{T-t}^{T-s},
  \end{align*}
  and therefore
  \begin{align*}
    J_{t,s}
    &=
    \left(\frac{\eta_{T-s}}{\eta_{T-t}} \right)^{\alpha}
    \Sigma_{T-t}^{\frac{1+\alpha}{2}}
    \Sigma_{T-s}^{-\frac{1+\alpha}{2}}
  \end{align*}
  and at final time $t=T$
\begin{align} \label{eq:jacobian_gaussian}
    J_{T,s}
    &= \eta_{T-s}^{-1}
    \Sigma_{\data}^{\frac{1+\alpha}{2}}
    \big(\Sigma_\data + \sigma_{T-s}^2 \id\big)^{-\frac{1+\alpha}{2}}.
  \end{align}

  We are ready to evaluate each term of Proposition~\ref{prop:affine_velocity}, (at time $t_k = T$). We start with the mean expansion.

  \paragraph{First-order mean coefficient.}
  We start with~\eqref{eq:affine_first_order_mean} at time $t_k = T$
    $$
  d^{\mu,[1]}_{T} = -\frac12 \int_0^{T} J_{T} \ddot m_s  ds %= U \Delta^{\mu,[1]} U^\top \mu_\data
  $$
 and we look for $\Delta^{\mu,[1]}$ such that 
 $$
  d^{\mu,[1]}_{T} = U \diag{\Delta^{\mu,[1]} }U^\top \mu_\data
  $$
Define 
  $$
  A_s = \frac{\dot \eta_s}{\eta_s},
    \qquad
    B_s(\lambda) \eqdef \frac{\sigma_s\dot\sigma_s}{\lambda+\sigma_s^2},
    \qquad
    N_s(\lambda) \eqdef \frac{\lambda}{\lambda+\sigma_s^2}
$$
  Differentiating $\dot m_s = H_s m_s + r_s = - A_{T-s} m_s$:
  \begin{equation}
  \begin{split}
  \ddot m_s 
  &= \big(\dot A_{T-s} + A_{T-s}^2\big)m_s \\
  &= \big(\dot A_{T-s} + A_{T-s}^2\big)\eta_{T-s}\mu_\data.
  \label{eq:ddotms}
  \end{split}
  \end{equation}
  Combined with \eqref{eq:jacobian_gaussian} and using the change of variable $s \gets T-s$, we directly get:
  \begin{align*}
    \Delta^{\mu, [1]} &= -\frac12 \int_0^T N_s(\lambda)^{\frac{1+\alpha}{2}}\big(\dot A_s + A_s^2\big)\, ds. \nonumber
  \end{align*}
    Since
  \begin{align*}
    \frac{d}{ds} N_s(\lambda)^{\frac{1+\alpha}{2}}
    = -(1+\alpha)B_s(\lambda) N_s(\lambda)^{\frac{1+\alpha}{2}}
    \quad \text{and} \quad 
    \frac{\ddot\eta_s}{\eta_s}
    = \dot A_s + A_s^2,
  \end{align*}
  an integration by parts gives
  \begin{align*}
    \Delta^{\mu,[1]}(\lambda)
    &=
    -\frac12 \Big[N_s(\lambda)^{\frac{1+\alpha}{2}}A_s\Big]_0^T
    -\frac12 \int_0^T
    N_s(\lambda)^{\frac{1+\alpha}{2}}A_s\Big(A_s + (1+\alpha)B_s(\lambda)\Big)\, ds.
  \end{align*}
 %  This is exactly \eqref{eq:mean_error_gauss}, with
 % \begin{align} \label{eq:R_mu}
 %    \Omega^\mu(\sigma_s,\eta_s,\lambda)
 %    \eqdef - \frac12 M_s(\lambda)A_s = 
 %    -\frac12
 %    \left(\frac{\lambda}{\lambda+\sigma_s^2}\right)^{\frac{1+\alpha}{2}}
 %    \frac{\dot\eta_s}{\eta_s}.
 % \end{align}

  \paragraph{Second-order mean coefficient.}
  We now calculate $\Delta^{\mu, [2]}$ from ~\eqref{eq:affine_second_order_mean} at time $t_k = T$:
  \begin{align*}
  d^{\mu,[2]}_{T}
    &=
    \int_0^{T} J_{T,s}
    \left[
      - \frac12(\dot H_s + H_s^2)d^{\mu,[1]}_s
      + \frac14 H_s \ddot m_s
      + \frac1{12}\dddot m_s
    \right] ds = U \diag{\Delta^{\mu, [2]}} U^\top \mu_\data
\end{align*}
 With a change of variable $s \to T-s$, we get 
 \begin{align} \label{eq:inter_snd_mean}
  U \diag{\Delta^{\mu, [2]}} U^\top \mu_\data
    &=
    \int_0^{T} J_{T,T-s}
    \left[
      - \frac12(\dot H_{T-s} + H_{T-s}^2)d^{\mu,[1]}_{T-s}
      + \frac14 H_{T-s} \ddot m_{T-s}
      + \frac1{12}\dddot m_{T-s}
    \right] ds
\end{align}
Note that $d^{\mu,[1]}_{T-s} = U \diag{\tilde \Delta^{\mu,[1]}_s(\lambda)} U^\top $ where we define
  \[
    \tilde \Delta^{\mu,[1]}_s(\lambda)
    \eqdef
    -\frac12 \int_s^T N_u(\lambda)^{\frac{1+\alpha}{2}}\big(\dot A_u + A_u^2\big)\,du.
  \]
  which is similar to $\Delta^{\mu,[1]}(\lambda)$ calculated above but with an integration between time $s$ and time $T$.
  % Equivalently, using the same integration by parts as for the final first-order coefficient,
  % \[
  %   \tilde \Delta^{\mu,[1]}_s(\lambda)
  %   =
  %   -\frac12 \Big[M_u(\lambda)A_u\Big]_s^T
  %   -\frac12 \int_s^T M_u(\lambda)
  %   A_u\Big(A_u + (1+\alpha)B_u(\lambda)\Big)\,du.
  % \]
  %In particular, \(\Delta^{\mu,[1]}_0(\lambda)=\Delta^{\mu,[1]}(\lambda)\) and \(\Delta^{\mu,[1]}_T(\lambda)=0\).
Differentiating $\ddot m_s$~\eqref{eq:ddotms} gives
 \begin{align*}
 \dddot m_s &= - (\ddot A_{T-s} + 3 A_{T-s} \dot A_{T-s} + A_{T-s}^3) \eta_{T-s} \mu_\data
 \end{align*}
  Let
  \[
    Q_s(\lambda) \eqdef A_s + (1+\alpha)B_s(\lambda)
  \]
  Note that $Q_t$ verifies $\diag{Q_t} = U H_{T-s} U^\top$ with $H_t$ and thus $\frac{d}{ds}H_{T-t} = U \diag{\dot Q_t(\lambda)} U^\top$. 
  Therefore, we get from~\eqref{eq:inter_snd_mean}:
  \begin{align}
    \Delta^{\mu,[2]}(\lambda)
    &=
    \int_0^TN_s(\lambda)^{\frac{1+\alpha}{2}}
    \left[
      -\frac12\Big(\dot Q_s(\lambda) + Q_s(\lambda)^2\Big) \tilde \Delta^{\mu,[1]}_s(\lambda)
      -\frac14 Q_s(\lambda)\big(\dot A_s + A_s^2\big) \right. \nonumber\\
      &\qquad\left.
      -\frac1{12}\big(\ddot A_s + 3A_s\dot A_s + A_s^3\big)
    \right] ds.
    \label{eq:gaussian_second_order_mean}
  \end{align}
 % where we remind the dependence of the above notations with respect to $\eta_s, \sigma_s, \alpha$ and $\lambda$: 
 %  \begin{align*}
 %    %A_s &= \frac{\dot\eta_s}{\eta_s} \\
 %    %B_s(\lambda) &= \frac{\sigma_s\dot\sigma_s}{\lambda+\sigma_s^2} \\
 %    Q_s(\lambda) &= A_s + (1+\alpha)B_s(\lambda) \\
 %    %M_s(\lambda) &= \left(\frac{\lambda}{\lambda+\sigma_s^2}\right)^{\frac{1+\alpha}{2}} \\
 %     \tilde \Delta^{\mu,[1]}_s(\lambda)
 %    &=
 %    -\frac12 \int_s^T M_u(\lambda)\big(\dot A_u + A_u^2\big)\,du.
 %  \end{align*}

  \paragraph{Covariance coefficients.}
  Since $H_s$, $J_{T,s}$ and $\Sigma_s$ are spectral functions of $\Sigma_\data$, the covariance bias is diagonal in the eigenbasis of $\Sigma_\data$. We therefore fix one eigendirection of variance $\lambda$ and work only with the corresponding scalar quantities. We remind that $\Sigma_s = U \diag{\lambda_s} U^\top$ with $ \lambda_s = \eta_s^2(\lambda+\sigma_s^2)$.

  \paragraph{First-order covariance coefficient.}
  We now calculate $\Delta^{\Sigma,[1]}$ from ~\eqref{eq:affine_first_order_cov} at time $t_k = T$:
  \begin{align*}
  D^{\Sigma,[1]}_{T}
    &=
    -\int_0^{T} J_{T,s}
    \left[
      \frac12 \ddot \Sigma_s
      - H_s \Sigma_s H_s^\top
    \right] J_{T,s}^\top ds
    = U \diag{\Delta^{\Sigma, [1]}} U^\top
  \end{align*}
  With a change of variable $s \to T-s$, and commutation of the different matrices:
  \begin{align*}
    U \diag{\Delta^{\Sigma, [1]}} U^\top
    &=
    -\int_0^T
    J_{T,T-s}^2
    \left[
      \frac12 \ddot \Sigma_{T-s}
      - H_{T-s}^2 \Sigma_{T-s}
    \right] ds.
  \end{align*}
  Moreover, using the previously introduced notations:
  \begin{align*}
      H_{T-s}  &= - U \diag{Q_s(\lambda)}U^\top  \\
 J_{T,T-s}^2\Sigma_{T-s}  &= U \diag{\lambda N_s(\lambda)^\alpha}U^\top
  \end{align*}
   and differentiating $\lambda_s$ gives
  \begin{align*}
    \dot \lambda_s &= 2\big(A_s+B_s(\lambda)\big)\lambda_s \\
    \frac12 \ddot \lambda_s
    &= \Big(\dot A_s+\dot B_s(\lambda)+2\big(A_s+B_s(\lambda)\big)^2\Big)\lambda_s \\
  \end{align*}
  Therefore,
  \begin{align*}
    \Delta^{\Sigma,[1]}(\lambda)
    &=
    -\lambda \int_0^T
    N_s(\lambda)^\alpha\Big(
      \dot A_s + \dot B_s(\lambda)
      + 2\big(A_s+B_s(\lambda)\big)^2
      - Q_s(\lambda)^2
    \Big)\, ds.
  \end{align*}
  Since
  \begin{align*}
   \frac{d}{ds} N_s(\lambda)^\alpha
    &= -2\alpha B_s(\lambda)N_s(\lambda)^\alpha, \\
    \frac{d}{ds}\Big(N_s(\lambda)^\alpha\big(A_s+B_s(\lambda)\big)\Big)
    &=
    N_s(\lambda)^\alpha\Big(
      \dot A_s + \dot B_s(\lambda)
      - 2\alpha B_s(\lambda)\big(A_s+B_s(\lambda)\big)
    \Big),
  \end{align*}
  an integration by parts gives
  \begin{align*}
    \Delta^{\Sigma,[1]}(\lambda)
    &=
    -\lambda \Big[N_s(\lambda)^\alpha\big(A_s+B_s(\lambda)\big)\Big]_0^T
    -\lambda \int_0^T
    N_s(\lambda)^\alpha\Big(\big(A_s+B_s(\lambda)\big)^2-\alpha^2 B_s(\lambda)^2\Big)\, ds.
  \end{align*}
  % This is exactly \eqref{eq:cov_error_gauss}, with
  % \begin{align} \label{eq:R_sigma}
  %   \Omega^\Sigma(\sigma_s,\eta_s,\lambda)
  %   \eqdef - \lambda N_s(\lambda)^\alpha\big(A_s+B_s(\lambda)\big) =
  %   -\lambda
  %   \left(\frac{\lambda}{\lambda+\sigma_s^2}\right)^\alpha
  %   \left(
  %     \frac{\dot\eta_s}{\eta_s}
  %     +
  %     \frac{\sigma_s\dot\sigma_s}{\lambda+\sigma_s^2}
  %   \right).
  % \end{align}

  \paragraph{Second-order covariance coefficient.}
  We now calculate $\Delta^{\Sigma, [2]}$ from ~\eqref{eq:affine_second_order_cov} at time $t_k = T$:
  \begin{align*}
  D^{\Sigma,[2]}_{T}
    &=
    \int_0^{T} J_{T,s}
    \left[
    \begin{aligned}
      &- \frac12(\dot H_s + H_s^2)D^{\Sigma,[1]}_s
      - \frac12 D^{\Sigma,[1]}_s(\dot H_s^\top + (H_s^\top)^2) \\
      &+ \frac14 H_s \ddot \Sigma_s
      + \frac14 \ddot \Sigma_s H_s^\top
      + \frac1{12}\dddot \Sigma_s \\
      &- \frac12 \dot H_s \Sigma_s H_s^\top
      - \frac12 H_s \Sigma_s \dot H_s^\top \\
      &- H_s^2 \Sigma_s H_s^\top
      - H_s \Sigma_s (H_s^\top)^2
      - a_s H_s H_s^\top
    \end{aligned}
    \right] J_{T,s}^\top ds
    = U \Delta^{\Sigma, [2]} U^\top
  \end{align*}
  With a change of variable $s \to T-s$, we get
 \begin{align} \label{eq:inter_snd_cov}
  U \Delta^{\Sigma, [2]} U^\top
    &=
    \int_0^{T} J_{T,T-s}
    \left[
    \begin{aligned}
      &- \frac12(\dot H_{T-s} + H_{T-s}^2)D^{\Sigma,[1]}_{T-s}
      - \frac12 D^{\Sigma,[1]}_{T-s}(\dot H_{T-s}^\top + (H_{T-s}^\top)^2) \\
      &+ \frac14 H_{T-s} \ddot \Sigma_{T-s}
      + \frac14 \ddot \Sigma_{T-s} H_{T-s}^\top
      + \frac1{12}\dddot \Sigma_{T-s} \\
      &- \frac12 \dot H_{T-s} \Sigma_{T-s} H_{T-s}^\top
      - \frac12 H_{T-s} \Sigma_{T-s} \dot H_{T-s}^\top \\
      &- H_{T-s}^2 \Sigma_{T-s} H_{T-s}^\top
      - H_{T-s} \Sigma_{T-s} (H_{T-s}^\top)^2
      - a_{T-s} H_{T-s} H_{T-s}^\top
    \end{aligned}
    \right] J_{T,T-s}^\top ds.
 \end{align}
 Note that $D^{\Sigma,[1]}_{T-s} = U \tilde \Delta^{\Sigma,[1]}_s(\lambda) U^\top $ where we define
  \[
    \tilde \Delta^{\Sigma,[1]}_s(\lambda)
    \eqdef
    -\lambda \int_s^T
    N_u(\lambda)^\alpha\Big(
      \dot A_u + \dot B_u(\lambda)
      + 2\big(A_u+B_u(\lambda)\big)^2
      - Q_u(\lambda)^2
    \Big)\,du.
  \]
  which is similar to $\Delta^{\Sigma,[1]}(\lambda)$ calculated above but with an integration between time $s$ and time $T$.
  Moreover, using the above notations:
  \begin{align*}
    H_{T-s} &= - U \diag{Q_s(\lambda)} U^\top \\
    \frac{d}{ds}H_{T-s} &= U \dot Q_s(\lambda) U^\top \\
    a_{T-s} &= \alpha B_s(\lambda) \lambda_s \\
    J_{T,T-s}^2\lambda_s &= U \diag{\lambda N_s(\lambda)^\alpha} U^\top,
  \end{align*}
  and differentiating $\lambda_s$ one more time gives:
  \begin{align*}
    \frac1{12} \lambda_s^{(3)}
    &=
    \frac16\Big(
      \ddot A_s + \ddot B_s(\lambda)
      + 6\big(A_s+B_s(\lambda)\big)\big(\dot A_s+\dot B_s(\lambda)\big)
      + 4\big(A_s+B_s(\lambda)\big)^3
    \Big)\lambda_s.
  \end{align*}
  Therefore, we get from~\eqref{eq:inter_snd_cov}:
  \begin{align}
    &\Delta^{\Sigma,[2]}(\lambda)
    =
    \int_0^T
    \Bigg[
      -\Big(\dot Q_s(\lambda)+Q_s(\lambda)^2\Big)\tilde \Delta^{\Sigma,[1]}_s(\lambda) \nonumber\\
      &
      + \lambda N_s(\lambda)^\alpha\Big(
        -\frac16\Big(
          \ddot A_s + \ddot B_s(\lambda)
          + 6\big(A_s+B_s(\lambda)\big)\big(\dot A_s+\dot B_s(\lambda)\big)
          + 4\big(A_s+B_s(\lambda)\big)^3
        \Big) \nonumber\\
      &
        - Q_s(\lambda)\Big(\dot A_s+\dot B_s(\lambda)+2\big(A_s+B_s(\lambda)\big)^2\Big)
        + Q_s(\lambda)\dot Q_s(\lambda)
        + 2Q_s(\lambda)^3
        - \alpha B_s(\lambda)Q_s(\lambda)^2
      \Big)
    \Bigg] ds.
    \label{eq:gaussian_second_order_cov}
  \end{align}
  % where we remind the dependence of the above notations with respect to $\eta_s, \sigma_s, \alpha$ and $\lambda$: 
  %   \begin{align*}
  %   A_s &= \frac{\dot\eta_s}{\eta_s} \\
  %   B_s(\lambda) &= \frac{\sigma_s\dot\sigma_s}{\lambda+\sigma_s^2} \\
  %   Q_s(\lambda) &= A_s + (1+\alpha)B_s(\lambda) \\
  %   N_s(\lambda) &= \left(\frac{\lambda}{\lambda+\sigma_s^2}\right)^{\alpha} \\
  %    \Delta^{\Sigma,[1]}(\lambda)
  %   &=
  %   -\lambda \int_0^T
  %   N_s(\lambda)\Big(
  %     \dot A_s + \dot B_s(\lambda)
  %     + 2\big(A_s+B_s(\lambda)\big)^2
  %     - Q_s(\lambda)^2
  %   \Big)\, ds.
  % \end{align*}

  \paragraph{Large-$\sigma_T$ simplification.}
  The residual terms obtained above are $\Big[ \Omega^\mu_s(\lambda)\Big]_0^T$ and $\Big[ \Omega^\Sigma_s(\lambda)\Big]_0^T$ with
  \[
    \Omega^\mu_s(\lambda)
    \eqdef 
    -\frac12
    \left(\frac{\lambda}{\lambda+\sigma_s^2}\right)^{\frac{1+\alpha}{2}}
    \frac{\dot\eta_s}{\eta_s},
  \]
  and
  \[
    \Omega^\Sigma_s(\lambda)
    \eqdef 
    -\lambda
    \left(\frac{\lambda}{\lambda+\sigma_s^2}\right)^\alpha
    \left(
      \frac{\dot\eta_s}{\eta_s}
      +
      \frac{\sigma_s\dot\sigma_s}{\lambda+\sigma_s^2}
    \right).
  \]
  First, for $s = T$, as $\sigma_T = \sigma_{\max} \to\infty$, under Assumption~\ref{ass:regularity_sched}(ii), we have
  \begin{align*}
    \Omega^\mu_s(\lambda)
    &=
    -\frac12 \lambda^{\frac{1+\alpha}{2}}
    \sigma_T^{-(1+\alpha)}
    \frac{\dot\eta_T}{\eta_T}\big(1+o(1)\big)
    =
    o_{\sigma_{\max}\to\infty}(1), \\
    \Omega^\Sigma_s(\lambda)
    &=
    -\lambda^{\alpha+1}
    \sigma_T^{-2\alpha}
    \left(
      \frac{\dot\eta_T}{\eta_T}
      +
      \frac{\dot\sigma_T}{\sigma_T}
    \right)\big(1+o(1)\big)
    =
    o_{\sigma_{\max}\to\infty}(1).
  \end{align*}
  Moreover, as $s\to0$, if $\dot \sigma_s \sigma_s \to 0$ and $\dot \eta_s \to 0$ (Assumption~\ref{ass:regularity_sched}(ii)), $R^\mu$ and  $R^\Sigma$ vanish in $t \to 0$.
  Thus: 
  \[
    \Big[ \Omega^\mu_s(\lambda)\Big]_0^T
    = o_{\sigma_{\max}\to\infty}(1),
    \qquad
   \Big[ \Omega^\Sigma_s(\lambda)\Big]_0^T
    = o_{\sigma_{\max}\to\infty}(1).
  \]
  %which is exactly the large-$\sigma_T$ simplification stated in Proposition~\ref{prop:mean_cov_Gaussian}.
  \end{proof}

  \subsection{First-order scheduler specializations}
  \label{sec:first_order_power_law}

  \paragraph{Variance Exploding \citep{song2020score}}
  It corresponds to $\eta_t = 1$. Then the first and second order mean errors from Proposition~\ref{prop:mean_cov_Gaussian} vanish: 
  \begin{align*}
    \Delta^{\mu,[1]}(\lambda) = \Delta^{\mu,[2]}(\lambda) = 0
  \end{align*}
  Moreover, if $\dot \sigma_t \sigma_t \to 0$ as $t\to 0$, Assumption~\ref{ass:regularity_sched}(ii) holds and the covariance error~\eqref{eq:cov_error_gauss} simplifies to:
  \begin{align*}
    \Delta^{\Sigma,[1]}(\lambda)
    &=
    (\alpha^2-1)\lambda^{\alpha+1}
    \int_0^T \frac{(\sigma_s\dot\sigma_s)^2}{(\lambda+\sigma_s^2)^{\alpha+2}}\,ds
    + o_{\sigma_{\max} \to \infty}(1).
  \end{align*}
Assume further that
  \begin{align*}
    \sigma_t = \sigma_{\max}\left(\frac{t}{T}\right)^\beta,
    \qquad
    \beta > \frac12,
    \qquad
    \alpha > 0.
  \end{align*}
  Then
  \begin{align*}
    \Delta^{\Sigma,[1]}(\lambda)
    &=
    (\alpha^2-1)\frac{\beta^2\sigma_{\max}^4}{T}\lambda^{\alpha+1}
    \int_0^1 \frac{x^{4\beta-2}}{(\lambda+\sigma_{\max}^2 x^{2\beta})^{\alpha+2}}\,dx
    + o_{\sigma_{\max} \to \infty}(1).
  \end{align*}
  With the change of variable $v=\frac{\sigma_{\max}^2}{\lambda} x^{2\beta}$, we obtain
  \begin{align*}
    \Delta^{\Sigma,[1]}(\lambda)
    &=
    (\alpha^2-1)\frac{\beta}{2T}
    \lambda^{1-\frac{1}{2\beta}}
    \sigma_{\max}^{1/\beta}
    \int_0^{\sigma_{\max}^2/\lambda}
    \frac{v^{1-\frac{1}{2\beta}}}{(1+v)^{\alpha+2}}\,dv
    + o_{\sigma_{\max} \to \infty}(1).
  \end{align*}
  Since $\beta>\frac12$ and $\alpha>0$, 
  \begin{align*}
    \int_0^{\sigma_{\max}^2/\lambda}
    \frac{v^{1-\frac{1}{2\beta}}}{(1+v)^{\alpha+2}}\,dv
    \to
    \operatorname{B}\!\left(2-\frac{1}{2\beta},\alpha+\frac{1}{2\beta}\right),
  \end{align*}
  hence
  \begin{align*}
    \Delta^{\Sigma,[1]}(\lambda)
    &=
    (\alpha^2-1)\frac{\beta \lambda}{2T}
    \operatorname{B}\!\left(
      2-\frac{1}{2\beta},
      \alpha+\frac{1}{2\beta}
    \right)
    \left( \frac{\sigma_{\max}^2}{\lambda}\right)^{\frac{1}{2\beta}}
   + o_{\sigma_{\max} \to \infty}(\sigma_{\max}^{1/\beta}).
  \end{align*}

  \paragraph{Variance Preserving~\citep{song2020score,ho2020denoising}}It corresponds to $\eta_t = (1+\sigma_t^2)^{-1/2}$. Then, if $\dot \sigma_t \sigma_t \to 0$ as $t\to 0$, Assumption~\ref{ass:regularity_sched}(ii) holds and the mean~\eqref{eq:mean_error_gauss} and covariance~\eqref{eq:cov_error_gauss} errors simplify to
  \begin{align} \label{eq:vp_error_mu}
    \Delta^{\mu,[1]}(\lambda)
    &=
    -\frac{1}{2\lambda}
    \int_0^T
    \frac{(\sigma_s\dot\sigma_s)^2}{1+\sigma_s^2}
    \left(\frac{\lambda}{\lambda+\sigma_s^2}\right)^{\frac{\alpha+3}{2}}
    \left(
      \frac{\lambda-1}{1+\sigma_s^2}
      -
      \alpha
    \right)\,ds
    + o_{\sigma_{\max}\to\infty}(1), \\
    \Delta^{\Sigma,[1]}(\lambda)
    &=
    -\frac{1}{\lambda}
    \int_0^T
    \left(\frac{\lambda}{\lambda+\sigma_s^2}\right)^{\alpha+2}
    (\sigma_s\dot\sigma_s)^2
    \left[
      \left(\frac{1-\lambda}{1+\sigma_s^2}\right)^2
      -
      \alpha^2
    \right]\,ds
    + o_{\sigma_{\max}\to\infty}(1)
    \label{eq:vp_error_sigma}
  \end{align}
  %\paragraph{Power-law variance preserving schedule in the ODE case.}
  We now further specialize when $\alpha=0$ for the polynomial schedule
  \begin{align*}
    \sigma_t = \sigma_{\max}\left(\frac{t}{T}\right)^\beta,
    \qquad
    \beta > \frac12.
  \end{align*}
  Setting $x=t/T$, we get
  \begin{align*}
    \Delta^{\mu,[1]}(\lambda)
    &=
    -\frac{\beta^2(\lambda-1)\sqrt{\lambda}\,\sigma_{\max}^4}{2T}
    \int_0^1
    \frac{x^{4\beta-2}}
    {(1+\sigma_{\max}^2x^{2\beta})^2(\lambda+\sigma_{\max}^2x^{2\beta})^{3/2}}
    \,dx
    + o(1), \\
    \Delta^{\Sigma,[1]}(\lambda)
    &=
    -\frac{\beta^2(1-\lambda)^2\lambda\,\sigma_{\max}^4}{T}
    \int_0^1
    \frac{x^{4\beta-2}}
    {(1+\sigma_{\max}^2x^{2\beta})^2(\lambda+\sigma_{\max}^2x^{2\beta})^2}
    \,dx
    + o(1),
  \end{align*}
  With the change of variable
  \begin{align*}
    u = \sigma_{\max}^2 x^{2\beta},
    \qquad
    dx = \frac{1}{2\beta}\sigma_{\max}^{-1/\beta}u^{\frac{1}{2\beta}-1}\,du, \quad  x^{4\beta-2}
    =
    \sigma_{\max}^{-4+\frac{2}{\beta}}u^{2-\frac{1}{\beta}},
  \end{align*}
  we obtain
  \begin{align*}
    \Delta^{\mu,[1]}(\lambda)
    &=
    -\frac{\beta(\lambda-1)\sqrt{\lambda}}{4T}\,
    \sigma_{\max}^{1/\beta}
    \int_0^{\sigma_{\max}^2}
    \frac{u^{\,1-\frac{1}{2\beta}}}{(1+u)^2(\lambda+u)^{3/2}}
    \,du
    + o\!\left(\sigma_{\max}^{1/\beta}\right), \\
    \Delta^{\Sigma,[1]}(\lambda)
    &=
    -\frac{\beta(1-\lambda)^2\lambda}{2T}\,
    \sigma_{\max}^{1/\beta}
    \int_0^{\sigma_{\max}^2}
    \frac{u^{\,1-\frac{1}{2\beta}}}{(1+u)^2(\lambda+u)^2}
    \,du
    + o\!\left(\sigma_{\max}^{1/\beta}\right).
  \end{align*}
  Since $\beta>\frac12$, both integrands are integrable on $(0,\infty)$, so dominated convergence yields
  \begin{align}
    \Delta^{\mu,[1]}(\lambda)
    &=
    -\frac{\beta(\lambda-1)\sqrt{\lambda}}{4T}\,
    \sigma_{\max}^{1/\beta}
    \int_0^\infty
    \frac{u^{\,1-\frac{1}{2\beta}}}{(1+u)^2(\lambda+u)^{3/2}}
    \,du
    + o\!\left(\sigma_{\max}^{1/\beta}\right), \label{eq:vp_mean_error_poly} \\
    \Delta^{\Sigma,[1]}(\lambda)
    &=
    -\frac{\beta(1-\lambda)^2\lambda}{2T}\,
    \sigma_{\max}^{1/\beta}
    \int_0^\infty
    \frac{u^{\,1-\frac{1}{2\beta}}}{(1+u)^2(\lambda+u)^2}
    \,du
    + o\!\left(\sigma_{\max}^{1/\beta}\right). \label{eq:vp_cov_error_poly}
  \end{align}

  \paragraph{Flow Matching~\citep{lipman2022flow}}
  For the standard deterministic linear-interpolation Flow Matching schedule (using $T = 1 - \varepsilon$ in order to have $\eta_T$ and $\sigma_T$ well defined):
  \begin{align*}
    \alpha = 0,
    \qquad
    \eta_t = 1-\frac{t}{T},
    \qquad
    \sigma_t = \frac{t}{T-t}.
  \end{align*}
  Then $A_t = \frac{\dot \eta_t}{\eta_t}$ verifies $ \dot A_t + A_t^2 =  0$ and thus the first-order mean formula from Proposition~\ref{prop:mean_cov_Gaussian} cancels:
  \begin{align*}
    \Delta^{\mu,[1]}(\lambda)
    &= 0.
  \end{align*}
  For the covariance error, for $Q_t(\lambda)
    \eqdef A_t + B_t(\lambda)$, since $\alpha=0$, 
  \begin{align*}
    \Delta^{\Sigma,[1]}(\lambda)
    &=
    -\lambda\int_0^T \big(\dot Q_t(\lambda) + Q_t(\lambda)^2\big)\,dt.
  \end{align*}
   We derive for the above choice of $\eta_t$ and $\sigma_t$:
  \begin{align*}
   \dot Q_t(\lambda) + Q_t(\lambda)^2 = \frac{T^2 \lambda}{(\lambda (T-t)^2 + t^2)^2}
  \end{align*}
  Hence, the first order error becomes fully explicit:
  \begin{align*}
    \Delta^{\Sigma,[1]}(\lambda)
    &=
    -\lambda^2 T^2 
    \int_0^T \frac{dt}{(\lambda (T-t)^2 + t^2)^2} = - \left(\frac{\lambda}{T}
    +
    \frac{(1+\lambda)\sqrt{\lambda}}{4T}\pi \right).
  \end{align*}

  \subsection{Second-order specialization for VE}
    \label{sec:snd_order_VE}

  For the analysis of the optimal diffusion-term parameter $\alpha$ of Proposition~\ref{prop:opt_alpha}, we need the second-order covariance coefficient $\Delta^{\Sigma,[2]}$ at \(\alpha=1\) for VE ($\eta_t=1$). In this case, using the notations from the proof of Proposition~\ref{prop:mean_cov_Gaussian} in Appendix~\ref{app:mean_cov_Gaussian}:
  \[
    A_s = 0,
    \quad
    B_s(\lambda) = \frac{\sigma_s\dot\sigma_s}{\lambda+\sigma_s^2},
    \quad
    Q_s(\lambda) = 2B_s(\lambda), \quad
    N_s(\lambda) = \frac{\lambda}{\lambda+\sigma_s^2}, \quad  \Delta^{\Sigma,[1]}_s(1,\lambda)
    =
    -\lambda \Big[N_r(\lambda)B_r(\lambda)\Big]_s^T
  \]
  Plugging this identity into~\eqref{eq:gaussian_second_order_cov} yields
  \begin{align*}
    \Delta^{\Sigma,[2]}(1,\lambda)
    &=
    \int_0^T
    \Big[
      -2\Big(\dot B_s(\lambda)+2B_s(\lambda)^2\Big)\Delta^{\Sigma,[1]}_s(1,\lambda)
      \\
      &+ \lambda N_s(\lambda)\Big(
        -\frac16 \ddot B_s(\lambda)
        - B_s(\lambda)\dot B_s(\lambda)
        + \frac{10}{3} B_s(\lambda)^3
      \Big)
    \Big] ds.
  \end{align*}
  We also use the polynomial schedule
  $
    \sigma_t = \sigma_{\max}\left(\frac{t}{T}\right)^\beta,
  $
  and assume \(\beta>1\), with large \(\sigma_{\max}\to\infty\). Set
  \[
    z \eqdef \frac{\sigma_{\max}^2}{\lambda},
    \qquad
    x \eqdef \frac{s}{T}.
  \]
  Then
  \[
    B_s(\lambda)
    =
    \frac{\beta z x^{2\beta-1}}{T(1+z x^{2\beta})},
  \]
  and
  \begin{align*}
    \dot B_s(\lambda)
    &=
    \frac{\beta z x^{2\beta-2}\big((2\beta-1)-z x^{2\beta}\big)}
         {T^2(1+z x^{2\beta})^2}, \\
    \ddot B_s(\lambda)
    &=
    \frac{2\beta z x^{2\beta-3}}{T^3(1+z x^{2\beta})^3}
    \Big(
      (\beta-1)(2\beta-1)
      - (\beta+2)(2\beta-1)z x^{2\beta}
      + z^2 x^{4\beta}
    \Big).
  \end{align*}
  Moreover,
  \[
    \lambda N_T(\lambda)B_T(\lambda)
    =
    \frac{\beta\lambda z}{T(1+z)^2},
  \]
  and
  \[
    \Delta^{\Sigma,[1]}_s(1,\lambda)
    =
    \lambda N_s(\lambda)B_s(\lambda)-\lambda N_T(\lambda)B_T(\lambda).
  \]
  Let \(R_z(\lambda)\) denote the induced change in \(\Delta^{\Sigma,[2]}(1,\lambda)\) when \(\Delta^{\Sigma,[1]}_s(1,\lambda)\) is replaced by \(\lambda N_s(\lambda)B_s(\lambda)\) i.e. removing the second $T$ term. Then
  \[
    R_z(\lambda)
    =
    2\lambda N_T(\lambda)B_T(\lambda)
    \int_0^T\big(\dot B_s(\lambda)+2B_s(\lambda)^2\big)\,ds.
  \]
  Since \(B_0(\lambda)=0\),
  \[
    \int_0^T \dot B_s(\lambda)\,ds
    =
    B_T(\lambda)-B_0(\lambda)
    =
    \frac{\beta z}{T(1+z)}.
  \]
  Also,
  \begin{align*}
    \int_0^T B_s(\lambda)^2\,ds
    &=
    \frac{\beta^2 z^2}{T}
    \int_0^1 \frac{x^{4\beta-2}}{(1+z x^{2\beta})^2}\,dx \\
    &=
    \frac{\beta}{2T}z^{1/(2\beta)}
    \int_0^z \frac{u^{1-\frac{1}{2\beta}}}{(1+u)^2}\,du
    \\
    &\le
    \frac{\beta}{2T}
    B\!\left(2-\frac{1}{2\beta},\,\frac{1}{2\beta}\right)
    z^{1/(2\beta)},
  \end{align*}
  Hence
  \begin{align*}
    |R_z(\lambda)|
    &\le
    2\frac{\beta\lambda z}{T(1+z)^2}
    \left(
      \frac{\beta z}{T(1+z)}
      +
      \frac{\beta}{T}
      B\!\left(2-\frac{1}{2\beta},\,\frac{1}{2\beta}\right)
      z^{1/(2\beta)}
    \right) \\
    &\le
    C_{\beta,\lambda,T}
    \left(
      z^{-1}
      +
      z^{1/(2\beta)-1}
    \right)
  \end{align*}
  for some constant \(C_{\beta,\lambda,T}>0\) independent of \(z\). And thus:
  \[
    R_z(\lambda)
    =
    o\!\left(z^{1/\beta}\right)
    =
    o\!\left(\sigma_{\max}^{2/\beta}\right),
  \]
  We now calculate $\Delta^{\Sigma,[2]}(1,\lambda)$ up to $O\!\left(\sigma_{\max}^{2/\beta}\right)$. Using the change of variables \(u=z x^{2\beta}\), we obtain
  \begin{align} \label{eq:before_beta}
    \Delta^{\Sigma,[2]}(1,\lambda)
    &=
    \frac{\lambda}{6T^2} z^{1/\beta}
    \int_0^{z}
    \frac{u^{-1/\beta}}{(1+u)^4}
    \Big(
      -(\beta-1)(2\beta-1)
      - 2(\beta-1)(2\beta-1)u \\
      &\qquad\qquad\qquad\qquad
      + (10\beta^2+3\beta-1)u^2
    \Big)\,du
    + o\!\left(z^{1/\beta}\right). \nonumber
  \end{align}
  Since \(\beta>1\), the integral converges at \(0\) and at \(+\infty\), so the upper integral limit \(z\) can be replaced by \(+\infty\) up to \(o(1)\). Now using $B$ the Beta function, we use
  \[
    I_k(\beta)
    \eqdef
    \int_0^\infty \frac{u^{k-\frac1\beta}}{(1+u)^4}\,du
    =
    B\!\left(k+1-\frac1\beta,\,3-k+\frac1\beta\right),
    \qquad k=0,1,2.
  \]
  and the identities:
  \begin{align*}
    I_0(\beta)
    &=
    \frac{(\beta+1)(2\beta+1)}{(\beta-1)(2\beta-1)}
    B\!\left(3-\frac1\beta,\,1+\frac1\beta\right), \\
    I_1(\beta)
    &=
    \frac{\beta+1}{2\beta-1}
    B\!\left(3-\frac1\beta,\,1+\frac1\beta\right), \\
    I_2(\beta)
    &=
    B\!\left(3-\frac1\beta,\,1+\frac1\beta\right).
  \end{align*}
  Therefore, the terms of~\eqref{eq:before_beta} can be simplified using $B\!\left(3-\frac1\beta,\,1+\frac1\beta\right)$:
  \begin{align*}
    &-(\beta-1)(2\beta-1)I_0(\beta)
    - 2(\beta-1)(2\beta-1)I_1(\beta)
    + (10\beta^2+3\beta-1)I_2(\beta) \\
    &\qquad=
    \Big(
      -(\beta+1)(2\beta+1)
      - 2(\beta^2-1)
      + 10\beta^2+3\beta-1
    \Big)
    B\!\left(3-\frac1\beta,\,1+\frac1\beta\right) \\
    &\qquad=
    6\beta^2
    B\!\left(3-\frac1\beta,\,1+\frac1\beta\right).
  \end{align*}
  Substituting this identity into~\eqref{eq:before_beta} gives
  \begin{align}
    \Delta^{\Sigma,[2]}(1,\lambda)
    &=
    C^{(2)}(\beta)\frac{\lambda}{T^2}
    \left(\frac{\sigma_{\max}^2}{\lambda}\right)^{\frac{1}{\beta}}
    + o\!\left(\sigma_{\max}^{2/\beta}\right),
    \label{eq:second_order_VE_powerlaw}
  \end{align}
  with
  \begin{align}
    C^{(2)}(\beta)
    &\eqdef
    \beta^2
    B\!\left(3-\frac1\beta,\,1+\frac1\beta\right) > 0.
    \label{eq:C2_beta}
  \end{align}

  \section{Optimization of diffusion parameters}
  
  \subsection{Proof of Proposition~\ref{prop:opt_alpha}}
  \label{app:second_order_alpha_VE}
    
\begin{proof}

  %\paragraph{General perturbative formula for the minimizer.}
  Consider first a general objective of the form
  \begin{align*}
    \mathcal E(\alpha,\gamma)
    =
    \mathcal E^{[0]}(\alpha)
    + \gamma \mathcal E^{[1]}(\alpha)
    + O(\gamma^2).
  \end{align*}
  Assume that $\alpha_*^{[0]}$ is a nondegenerate local minimizer of $\mathcal E^{[0]}$, namely
  \begin{align*}
    \partial_\alpha \mathcal E^{[0]}(\alpha_*^{[0]})=0,
    \qquad
    \partial_{\alpha\alpha}\mathcal E^{[0]}(\alpha_*^{[0]})>0.
   \end{align*}
  Then the implicit function theorem gives
  \begin{align*}
    \alpha^\star(\gamma)
    =
    \alpha_*^{[0]} + \gamma \alpha_*^{[1]} + O(\gamma^2)
   \end{align*}
  with
  \begin{align*}
    \partial_\alpha \mathcal E\big(\alpha_\star(\gamma),\gamma\big)=0.
   \end{align*}
  Expanding this identity gives
 \begin{align*}
    0
    =
    \partial_{\alpha\alpha}\mathcal E^{[0]}(\alpha_*^{[0]};\vartheta)\, \alpha_*^{[1]}\gamma
    + \partial_\alpha \mathcal E^{[1]}(\alpha_*^{[0]})\gamma
    + O(\gamma^2),
  \end{align*}
  hence
  \begin{align*}
    \alpha^\star(\gamma)
    =
    \alpha_*^{[0]}
    - \gamma
    \frac{\partial_\alpha \mathcal E^{[1]}(\alpha_*^{[0]})}
    {\partial_{\alpha\alpha}\mathcal E^{[0]}(\alpha_*^{[0]})}
    + O(\gamma^2).
  \end{align*}
  Let's derive the higher order error $E^{[1]}$ in our case. For VE, there is no error on the mean at first and second orders, thus 
  $$ \norm{\EE[Y_k] - \mu_\data}^2 = o(\gamma^4)$$
  Moreover, as detailed in Appendix~\ref{app:Fréchet_dist_exp_gaussian}, the Bures distance between covariance expands at order $\gamma^3$ as:
  \begin{align*}
    \mathcal E(\alpha,\gamma) \eqdef  \FD(p_\mathrm{data}, \operatorname{Law}(\hat Y_K))
    &=
    \frac{\gamma^2}{4}\sum_{i=1}^d \frac{\Delta^{\Sigma,[1]}(\alpha, \lambda_i)^2}{\lambda_i} \\
    &+ \gamma ^3 \sum_{i=1}^d\left( \frac{\Delta^{\Sigma,[1]}(\alpha, \lambda_i) \Delta^{\Sigma,[2]}(\alpha, \lambda_i)}{2\lambda_i} - \frac{\Delta^{\Sigma,[1]}(\alpha, \lambda_i)}{8\lambda_i^2}\right)
    + o(\gamma^4). \nonumber
  \end{align*}
  In correspondence with $\mathcal E(\alpha,\gamma)
    =
    \mathcal E^{[0]}(\alpha)
    + \gamma \mathcal E^{[1]}(\alpha)
    + O(\gamma^2)$, the leading-order is
  \begin{align*}
   \mathcal E^{[0]}(\alpha) 
    \eqdef
    \frac{\gamma^2}{4}\sum_{i=1}^d\frac{\Delta^{\Sigma,[1]}(\alpha, \lambda_i)^2}{\lambda_i}
  \end{align*}
  and, using~\eqref{eq:ve_cov_error_beta} its minimizer is $ \alpha_*^{[0]} = 1$. 
 The higher order error is 
  \begin{align*}
   \mathcal E^{[1]}(\alpha) 
    \eqdef
    \gamma^2 \sum_{i=1}^d\left( \frac{\Delta^{\Sigma,[1]}(\alpha, \lambda_i) \Delta^{\Sigma,[2]}(\alpha, \lambda_i)}{2\lambda_i} - \frac{\Delta^{\Sigma,[1]}(\alpha, \lambda_i)^3}{8\lambda_i^2}\right)
  \end{align*}
  Since $\Delta^{\Sigma,[1]}(\alpha_*^{[0]},\lambda_i)=0$, one has
   \begin{align*}
    \partial_{\alpha\alpha}\mathcal E^{[0]}(\alpha_*^{[0]}) 
    =
   \frac{\gamma^2}{2}\sum_{i=1}^d\frac{1}{\lambda_i} \big( \partial_\alpha\Delta^{\Sigma,[1]}(\alpha_*^{[0]}, \lambda_i) \big)^2
  \end{align*}
  which is positive.
  and
 \begin{align*}
    \partial_{\alpha}\mathcal E^{[1]}(\alpha_*^{[0]}) 
    =
   \frac{\gamma^2}{2}\sum_{i=1}^d\frac{1}{\lambda_i}  \partial_\alpha\Delta^{\Sigma,[1]}(\alpha_*^{[0]}, \lambda_i)  \Delta^{\Sigma,[2]}(\alpha_*^{[0]}, \lambda_i) 
  \end{align*}
  Overall we get:
  \begin{align}
    \alpha^\star(\gamma)
    =
    1
    - \gamma
    \frac{\sum_{i=1}^d\frac{1}{\lambda_i}  \partial_\alpha\Delta^{\Sigma,[1]}(\alpha_*^{[0]}, \lambda_i)  \Delta^{\Sigma,[2]}(\alpha_*^{[0]}, \lambda_i) }
    {\sum_{i=1}^d\frac{1}{\lambda_i} \big( \partial_\alpha\Delta^{\Sigma,[1]}(\alpha_*^{[0]}, \lambda_i) \big)^2}
    + O(\gamma^2).
    \label{eq:generic_alpha_star_expansion}
  \end{align}
  For the Variance Exploding case with power law noise schedule $\sigma_t = \sigma_{max}\left( \frac{t}{T}\right)^\beta$, we have seen in~\eqref{eq:ve_cov_error_ibp} that
    \begin{align*}
    \Delta^{\Sigma,[1]}(\alpha,\lambda)
    &=
    (\alpha^2-1)\frac{\beta \lambda}{2T}
    \operatorname{B}\!\left(
      2-\frac{1}{2\beta},
      \alpha+\frac{1}{2\beta}
    \right)
    \left( \frac{\sigma_{\max}^2}{\lambda}\right)^{\frac{1}{2\beta}}
   + o_{\sigma_{\max} \to \infty}(\sigma_{\max}^{1/\beta}).
  \end{align*}
  giving
  \begin{align*}
    \partial_\alpha \Delta^{\Sigma,[1]}(1,\lambda)
    =  \frac{1}{T} \lambda C^{(1)}(\beta)
      \left( \frac{\sigma_{\max}^2}{\lambda}\right)^{\frac{1}{2\beta}}
  + o_{\sigma_{\max} \to \infty}(\sigma_{\max}^{1/\beta}).
  \end{align*}
  with $ C^{(1)}(\beta) \eqdef \beta 
    B\!\left(2-\frac{1}{2\beta},\,1+\frac{1}{2\beta}\right) > 0$. Now, from the calculations of section~\ref{sec:snd_order_VE}:
   \begin{align*} 
    \Delta^{\Sigma,[2]}(1,\lambda)
    &=
    C^{(2)}(\beta)  \frac{\lambda}{T^2} \left( \frac{\sigma_{\max}^2}{\lambda}\right)^{\frac{1}{\beta}}
    + o\!\left(\sigma_{\max}^{2/\beta}\right) 
  \end{align*}
  where
  \[
    C^{(2)}(\beta)
    \eqdef
    \beta^2
    B\!\left(3-\frac1\beta,\,1+\frac1\beta\right) > 0.
  \]
We thus get 
  \begin{align*}
    \alpha^\star(\gamma)
    =
    1 - \gamma \left( \frac{C^{(2)}(\beta)}{C^{(1)}(\beta)} \frac{\sigma_{\max}^{\frac{1}{\beta}}}{T}
    \frac{\sum_{i=1}^d  \left( \lambda_i\right)^{1-\frac{3}{2\beta}}}
    {\sum_{i=1}^d  \left( \lambda_i\right)^{1-\frac{1}{\beta}}}  + o\!\left(\sigma_{\max}^{\frac{1}{\beta}}\right)  \right)
    + O(\gamma^2)
  \end{align*}
  Since both $C^{(1)}(\beta)$ and $C^{(2)}(\beta)$ are positive, the first-order correction is negative, so the perturbative optimum shifts below \(1\).

  \end{proof}

\subsection{Proof of Proposition~\ref{prop:opt_eta_powerlaw}}
\begin{proof}
\label{app:opt_rescaling}
First note that for polynomial $\sigma_t$ with $\beta >1$ and for 
\begin{align*}
    \eta_s(c) = \left(\frac{c}{c + \sigma_s^2}\right)^{1/2}
\end{align*}
Assumption~\ref{ass:regularity_sched}(ii) is satisfied.
Thus, starting from the large-$\sigma_{\max}$ form of \eqref{eq:cov_error_gauss}, the first-order covariance error in one eigendirection of variance $\lambda$ is
\begin{align*}
  \Delta^{\Sigma,[1]}(\lambda;c)
  &=
  -\lambda \int_0^T
  \left(
  \frac{\dot \eta_s(c)}{\eta_s(c)}
  +
  \frac{\sigma_s \dot \sigma_s}{\lambda + \sigma_s^2}
  \right)^2 ds
  + o_{\sigma_{\max}\to\infty}(1) \\
  &=
  -\lambda (c-\lambda)^2
  \int_0^T
  \left(
  \frac{\sigma_s \dot \sigma_s}{(\lambda + \sigma_s^2)(c + \sigma_s^2)}
  \right)^2 ds
  + o_{\sigma_{\max}\to\infty}(1).
\end{align*}
We therefore look for the parameter $c$ that minimizes the corresponding large-$\sigma_{\max}$ objective
\begin{align*}
\mathcal{E}(c) &= \sum_i \frac{1}{\lambda_i} \Delta^{\Sigma,[1]}(\lambda_i;c)^2 \\
&= \sum_i \frac{1}{\lambda_i}
\left[
\lambda_i (c - \lambda_i)^2 \int_0^T \left(\frac{\dot \sigma_s \sigma_s}{(\lambda_i + \sigma_s^2)(c + \sigma_s^2)}\right)^2 
 ds
 + o_{\sigma_{\max}\to\infty}(1)
\right]^2
\end{align*}

In the case $\sigma_t = \sigma_{\max}\left(\frac{t}{T}\right)^\beta$, with the change of variable $u = \sigma_s^2$, this expansion becomes
\begin{align*}
\Delta^{\Sigma,[1]}(\lambda_i;c)
\;=\; \frac{\beta \sigma_{\max}^{1/\beta}}{2 T}\lambda_i
\left[
h\left(\frac{c}{\lambda_i}\right)
+ O_{\beta\to\infty}\!\left(\frac{1}{\beta}\right)
\right]
+ o_{\sigma_{\max}\to\infty}\!\left(\sigma_{\max}^{1/\beta}\right)
\end{align*}
with
\begin{align*}
h(z) \eqdef
\frac{z + 1}{z-1}
\log\!\left(z\right)
-2,
\qquad
g(z) \eqdef h(z)^2.
\end{align*}
so that, at leading order in $\sigma_{\max}$,
\begin{align*}
\mathcal{E}(c) 
&= \frac{\beta^2 \sigma_{\max}^{2/\beta}}{4 T^2}
\sum_i\lambda_i
\left[
g\left(\frac{c}{\lambda_i}\right)
+ O_{\beta\to\infty}\!\left(\frac{1}{\beta}\right)
\right]
+ o_{\sigma_{\max}\to\infty}\!\left(\sigma_{\max}^{2/\beta}\right).
\end{align*}
We now assume a finite power-law spectrum
\[
  \lambda_i = \lambda_{\max} i^{-p},
  \qquad i=1,\dots,d,
  \qquad p>0.
\]
Setting $\tau \eqdef c/\lambda_{\max}$, the objective becomes
\begin{align*}
\mathcal{E}(c) 
&= \frac{\beta^2 \sigma_{\max}^{2/\beta}}{4 T^2}\lambda_{\max}
\sum_{i=1}^d i^{-p}
\left[
g\left(\tau i^p\right)
+ O_{\beta\to\infty}\!\left(\frac{1}{\beta}\right)
\right]
+ o_{\sigma_{\max}\to\infty}\!\left(\sigma_{\max}^{2/\beta}\right).
\end{align*}
\begin{rem} \label{rem:small_li}
The above formulation shows that very small eigenvalues $\lambda_i$ have little effect on the objective $\mathcal{E}(c)$. Indeed, for $\lambda_i \ll c$, we have
$g(c/\lambda_i) \sim \log^2(c/\lambda_i)$, so the weighted contribution
$\lambda_i g(c/\lambda_i)$ scales as
$\lambda_i \log^2(c/\lambda_i)$ and therefore vanishes as
$\lambda_i \to 0$.
  
\end{rem}

Ignoring the lower-order $O_{\beta\to\infty}(\beta^{-1})$ and $o_{\sigma_{\max}\to\infty}(\sigma_{\max}^{2/\beta})$ terms, minimizing $\mathcal E(c)$ is equivalent to minimizing
\begin{align} \label{eq:Psi_p}
  \Psi_p(\tau) \eqdef \sum_{i=1}^d i^{-p}g(\tau i^p),
  \qquad \tau>0.
\end{align}
Hence any minimizer satisfies $ c^\star = \lambda_{\max}\tau_p^\star$
for some minimizer $\tau_p^\star$ of~$\Psi_p$.
% Moreover, differentiating $h$ gives
% \begin{align*}
%   h'(z)
%   =
%   \frac{z^2-1-2z\log z}{z(z-1)^2}.
% \end{align*}
% This shows that $h'(z)<0$ on $(0,1)$ and $h'(z)>0$ on $(1,\infty)$. Since $h(z)\geq 0$ with equality only at $z=1$, it follows that $g'(z)=2h(z)h'(z)$ is negative on $(0,1)$ and positive on $(1,\infty)$. Moreover, as $z\to 0$, we have $h(z)\sim -\log z$, hence $g(z)\to+\infty$. We get
% \begin{align*}
%   \Psi_p(\tau)\to +\infty
%   \qquad\text{as }\tau\to 0.
% \end{align*}
% On the other hand, for every $\tau\geq 1$,
% \begin{align*}
%   \Psi_p'(\tau)
%   =
%   \sum_{i=1}^d g'(\tau i^p)
%   >0,
% \end{align*}
% Therefore $\Psi_p$ is strictly increasing on $[1,\infty)$, and every minimizer $\tau_p^\star$ satisfies
% \begin{align*}
%   0<\tau_p^\star<1.
% \end{align*}
We now prove that the minimizer $\tau_p^\star$ is unique and belongs in $(0,1)$, before characterizing its evolution with $p$. Recall that
\[
  \Psi_p(\tau)
  \eqdef
  \sum_{i=1}^d i^{-p} g(\tau i^p),
  \qquad
  g(z)
  =
  \left[
  \frac{z+1}{z-1}\log z -2
  \right]^2,
\]
where $g(1)$ is defined by continuity as $g(1)=0$. We use the logarithmic change of variables
\[
  \tau = e^s,
  \qquad
  z=e^x.
\]
and define 
\[
  \phi(x) \eqdef g(e^x) =  
  \left[
  x\coth\left(\frac{x}{2}\right)-2
  \right]^2.
\]
Let
\[
  r(y) \eqdef y\coth y -1
\]
such that
\[
  \phi(x)=4r\left(\frac{x}{2}\right)^2.
\]
We now show that $\phi$ is strictly convex. 
\[
  r'(y)
  =
  \frac{\sinh y\cosh y-y}{\sinh^2 y},
\]
Let $ N(y) \eqdef \sinh y\cosh y-y$. \(N(0)=0\), and for \(y>0\),
\[
  N'(y)
  =
  \cosh^2 y+\sinh^2 y-1
  =
  2\sinh^2 y>0.
\]
Thus \(N(y)>0\) for every \(y>0\). Since also \(\sinh^2 y>0\), we conclude that $ r'(y)>0$ for every \(y>0\).
By symmetry, $r$ is even, so $r(y)\ge 0$ for all $y$, with equality only at $y=0$. Moreover,
$$
r''(y)
=
\frac{2}{\sinh^2(y)} r(y)
$$
Hence, for $y\neq 0$,
\[
  \frac{d^2}{dy^2} r(y)^2
  =
  2r'(y)^2+2r(y)r''(y)
  =
  2r'(y)^2+\frac{4r(y)^2}{\sinh^2(y)}
  >0.
\]
Thus $y\mapsto r(y)^2$ is strictly convex, and therefore $\phi$ is strictly convex. Now define
\[
  \widetilde \Psi_p(s)
  \eqdef
  \Psi_p(e^s)
  =
  \sum_{i=1}^d i^{-p}\phi(s+p\log i).
\]
Each function
\[
  s\mapsto \phi(s+p\log i)
\]
is strictly convex, and all weights $i^{-p}$ are positive. Therefore $\widetilde \Psi_p$
is strictly convex on $\mathbb R$.
Furthermore, we have $ \widetilde \Psi_p(s)\to+\infty$ when $|s|\to\infty$.
Consequently, $\widetilde \Psi_p$ admits a unique minimizer $s_p^\star \in \RR$ and thus $\Psi_p$ has a unique minimizer $\tau_p^\star = e^{s_p^\star}$ in $(0,\infty)$.

Finally, since $\phi$ is even and strictly convex, we have $\phi'(x)>0$ for $x>0$ and therefore,
\[
  \widetilde \Psi_p'(0)
  =
  \sum_{i=1}^d i^{-p}\phi'(p\log i)
  >0,
\]
Finally,
$\widetilde \Psi_p$ is strictly convex, thus its unique minimizer satisfies $s_p^\star<0$ and $0<\tau_p^\star<1$ thus 
\[
  0<\tau_p^\star<1.
\]
\end{proof}
We now give insights on the evolution of $p \to \tau^*_p$.  The first-order optimality condition is
\begin{align*}
  \Psi_p'(\tau)
  =
  \sum_{i=1}^d g'(\tau i^p)
  =0.
\end{align*}
Differentiating the identity $\Phi(\tau_p^\star,p)=0$ with respect to $p$ gives
\begin{align*}
  \frac{d\tau_p^\star}{dp}
  =
  -\tau_p^\star
  \frac{\sum_{i=1}^d i^p(\log i)\,g''(\tau_p^\star i^p)}
  {\Psi_p''(\tau_p^\star)}.
\end{align*}
By strict convexity of $\phi$ (see the above proof), we get that $\Psi_p''(\tau_p^\star) > 0$ and since $\tau_p^\star>0$:
\begin{align*}
  \operatorname{sign}\!\left(\frac{d\tau_p^\star}{dp}\right)
  =
  -\operatorname{sign}\!\left(\sum_{i=1}^d i^p(\log i)\,g''(\tau_p^\star i^p)\right).
\end{align*}
The sign of the above sum is however difficult to follow analytically. Figure~\ref{fig:tau_p} plots the evolution of $\tau^*_p$ and its derivative $\frac{d\tau_p^\star}{dp}$ with respect to $p$, for different values of $d$. Note that, across very large $d$, we get $\frac{d\tau_p^\star}{dp} < 0$ and thus $\tau_p^\star$ is decreasing for $p \lessapprox 0.9$.

\begin{figure}[h]
    \centering
    \begin{subfigure}[t]{0.45\linewidth}
    \centering
    \includegraphics[width=\linewidth]{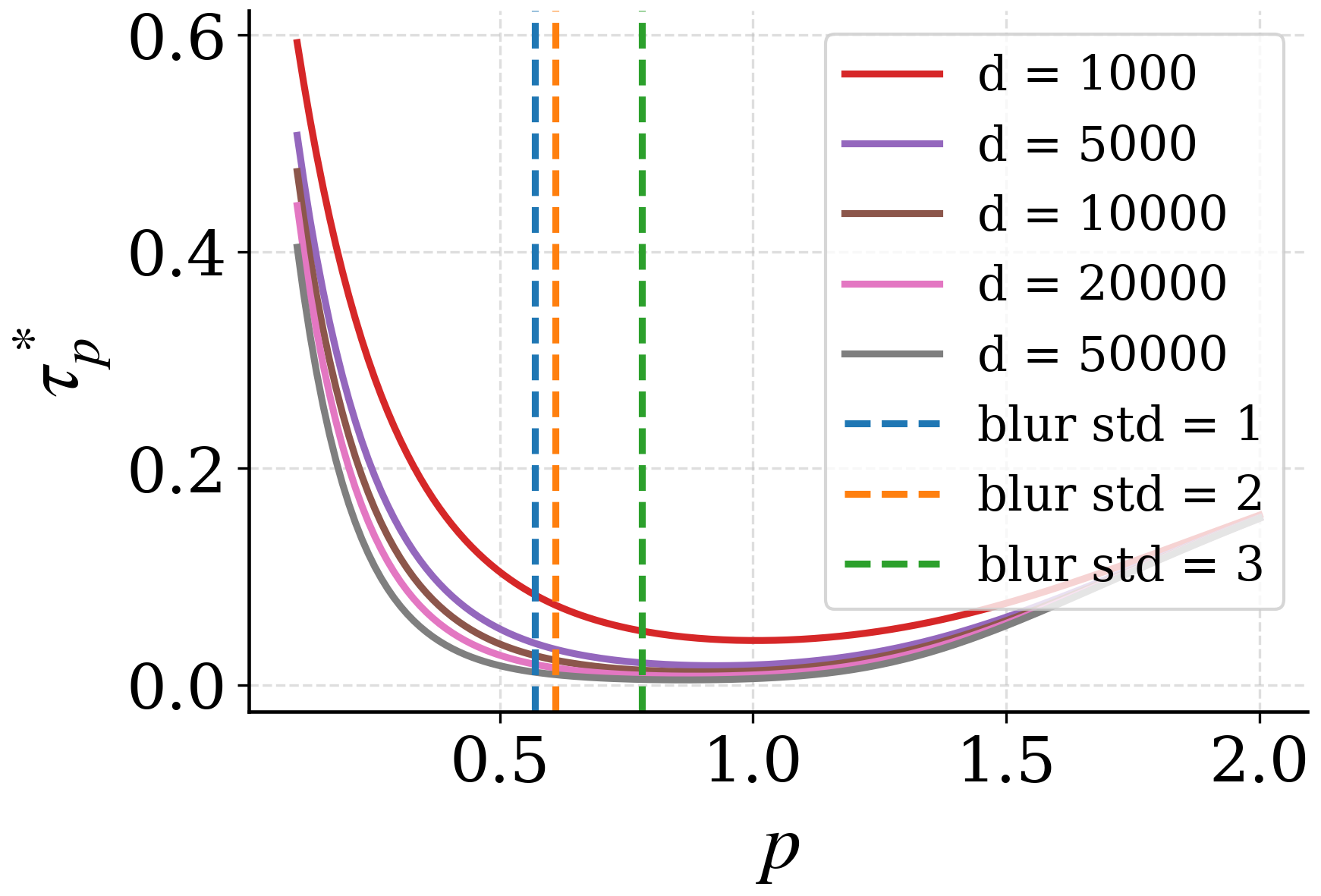}
     \end{subfigure}
     \hfill
    \begin{subfigure}[t]{0.45\linewidth}
    \centering
    \includegraphics[width=\linewidth]{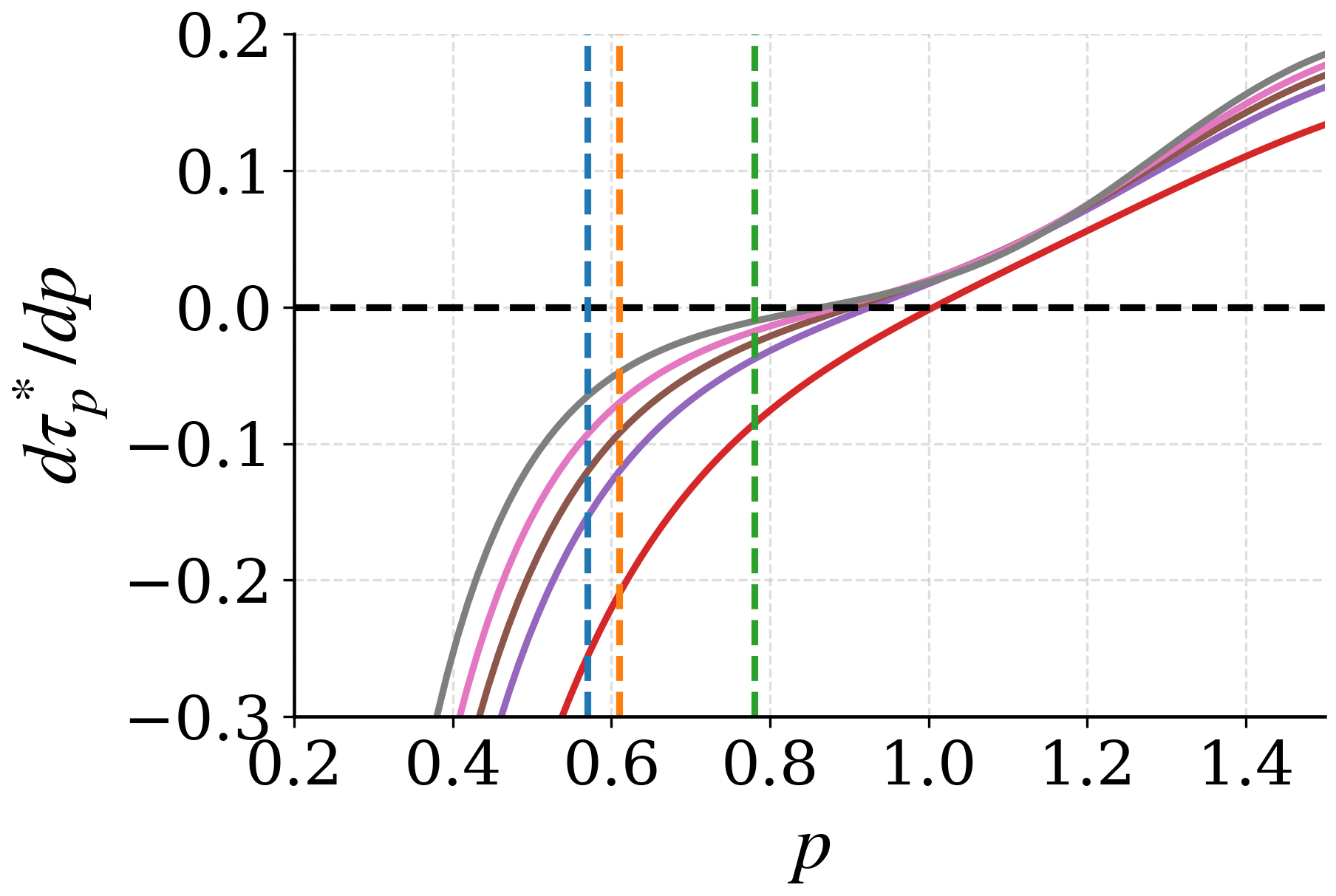}
     \end{subfigure}
    \caption{Evolution of the minimizer $\tau^*_p$ of $\Psi_p(\tau)$~\eqref{eq:Psi_p} w.r.t $p$ (left) and of its derivative $\frac{\tau^*_p}{dp}$ w.r.t $p$  (right). The plot is done for different values of the dimension $d$ appearing on the sum. We also draw vertical lines for the values of $p$ we got for the posterior sampling experiment detailed in Section~\ref{sec:opt_params} (for this experiment $d = 12288$).}
    \label{fig:tau_p}
\end{figure}

\paragraph{Relation with the posterior sampling experiment}
In the posterior sampling experiment of Figure~\ref{fig:opt_c_datasets}, the fitted power-law exponents are $p\approx 0.57$, $0.61$, and $0.83$ for the three blur levels (see the table in Figure~\ref{fig:opt_c_datasets}). Figure~\ref{fig:tau_p} shows that, for $d=12288$ and throughout this range of $p$, the minimizer $\tau_p^\star$ decreases as $p$ increases, equivalently $\frac{d\tau_p^\star}{dp}<0$. Since Proposition~\ref{prop:opt_eta_powerlaw} gives $c_\eta^\star=\lambda_{\max}\tau_p^\star$, this implies that the normalized optimum $c_\eta^\star/\lambda_{\max}=\tau_p^\star$ also decreases with $p$ and, for fixed $\lambda_{\max}$, so does $c_\eta^\star$. This is consistent with Figure~\ref{fig:opt_c_datasets} (bottom), where the minimizer of the curves plotted against $c_\eta/\lambda_{\max}$ shifts to the left as the posterior spectrum becomes steeper.

\subsection{Proof of Proposition~\ref{prop:opt_sigma_ve}}
\label{app:main_prop_proofs}
  \begin{proof}
    In the variance exploding case, the first-order mean error vanishes. Starting from~\eqref{eq:cov_error_gauss}, the first-order covariance error in one eigendirection of variance $\lambda$ becomes
    \begin{align}
      \Delta^{\Sigma,[1]}(\lambda)
      = -\lambda \int_0^{T}
      \left(\frac{\lambda}{\lambda+\sigma_s^2}\right)^\alpha
      \left[
      \frac{d}{ds}\left(\frac{\sigma_s \dot \sigma_s}{\lambda+\sigma_s^2}\right)
      +
      (1-2\alpha-\alpha^2)
      \left(\frac{\sigma_s \dot \sigma_s}{\lambda+\sigma_s^2}\right)^2
      \right] ds. \label{eq:ve_cov_error_full}
    \end{align}
    The one-dimensional leading order Fr\'echet error is proportional to $\Delta^{\Sigma,[1]}(\lambda)^2$, so it is minimized as soon as $\Delta^{\Sigma,[1]}(\lambda)=0$. We therefore look for a schedule that cancels the full integrand in~\eqref{eq:ve_cov_error_full}. Define
    \[
      B_t(\lambda) \eqdef \frac{\sigma_t \dot \sigma_t}{\lambda+\sigma_t^2},
      \qquad
      q_\alpha \eqdef 1-2\alpha-\alpha^2.
    \]
    Then~\eqref{eq:ve_cov_error_full} rewrites as
    \begin{align*}
      \Delta^{\Sigma,[1]}(\lambda)
      = -\lambda \int_0^{T} \left(\frac{\lambda}{\lambda+\sigma_s^2}\right)^\alpha \Big( \dot B_s(\lambda) + q_\alpha B_s(\lambda)^2 \Big)\, ds.
    \end{align*}
    Since $\left(\frac{\lambda}{\lambda+\sigma_t^2}\right)^\alpha>0$, cancelling the full integrand in~\eqref{eq:ve_cov_error_full} is equivalent to
    \begin{align} \label{eq:B_t_eq}
      \dot B_t(\lambda) + q_\alpha B_t(\lambda)^2 = 0.
    \end{align}
    If $q_\alpha=0$, then $B_t(\lambda)$ is constant. Since
    \[
      \frac{d}{dt}\log\!\left(1+\frac{\sigma_t^2}{\lambda}\right)
      = 2B_t(\lambda),
    \]
    the function $\log\!\left(1+\sigma_t^2/\lambda\right)$ is affine, and the boundary condition $\sigma_T = \sigma_{\max}$ gives
    \begin{align*}
      1+\frac{(\sigma_t^\star)^2}{\lambda}
      =
      \left(1+\frac{\sigma_{\max}^2}{\lambda}\right)^{t/T}.
    \end{align*}
    Hence
    \begin{align*}
      (\sigma_t^*)^2
      =
      \lambda\left[
      \left(1+\frac{\sigma_{\max}^2}{\lambda}\right)^{t/T}
      -1
      \right].
    \end{align*}
    Now, if $q_\alpha \neq 0$, define
    \[
      P_t(\lambda)
      \eqdef
      \left(1+\frac{\sigma_t^2}{\lambda}\right)^{q_\alpha/2}.
    \]
    Since $\dot P_t(\lambda)=q_\alpha B_t(\lambda)P_t(\lambda)$,  condition~\eqref{eq:B_t_eq} rewrites as
    \begin{align*}
      \ddot P_t(\lambda)=0,
    \end{align*}
    i.e. $P_t(\lambda)$ affine. Using the boundary conditions:
    \[
      P_0(\lambda)=1,
      \qquad
      P_T(\lambda)=\left(1+\frac{\sigma_{\max}^2}{\lambda}\right)^{q_\alpha/2},
    \]
    we get
    \begin{align*}
      P_t^\star(\lambda)
      &=
      1
      +
      \frac{t}{T}
      \left(
      \left(1+\frac{\sigma_{\max}^2}{\lambda}\right)^{q_\alpha/2}
      - 1
      \right).
    \end{align*}
    and in terms of $\sigma_t$:
    \begin{align} \label{eq:sigma_t_star_proof}
      (\sigma_t^*)^2
      &=
      \lambda\left[
      \left(
      1
      +
      \frac{t}{T}
      \left(
      \left(1+\frac{\sigma_{\max}^2}{\lambda}\right)^{q_\alpha/2}
      - 1
      \right)
      \right)^{2/q_\alpha}
      -1
      \right].
    \end{align}
  \end{proof}

  \begin{rem}
   Note that the above $\sigma_t^*$ is obtained by canceling the integrand from the first-order covariance error~\eqref{eq:cov_error_gauss} and not using the simplification~\eqref{eq:ve_cov_error_ibp} obtained under $\sigma_{\max} \to \infty$ using Assumption~\ref{ass:regularity_sched}(ii). Indeed, as $t \to 0$, the above solution~\eqref{eq:sigma_t_star_proof} behaves as $\sigma_t = O(\sqrt{t})$ and thus does not satisfy the assumption $\xi_t = \dot \sigma_t \sigma_t \to 0$. 
  \end{rem}

\subsection{Optimal noise schedule for VP}
\label{app:optimal_noise_schedule_VP}
\begin{prop}[Single-eigendirection optimal noise schedule for VP]
  \label{prop:opt_sigma_vp_ode}
  With the rescaling $\eta_t = \left(\frac{c}{c+\sigma_t^2}\right)^{1/2}$, $\mu_\data = 0$ and $\alpha = 0$, the following schedule cancels the first-order one-dimensional Fréchet error:
  \begin{align*}
    \sigma_t^*(\lambda)^2
    =
    \frac{c\lambda(1-z_t^2)}{\lambda z_t^2 - c},
    \qquad
    z_t
    \eqdef
    1
    +
    \frac{t}{T}
    \left(
    \sqrt{\frac{c(\lambda+\sigma_{\max}^2)}{\lambda(c+\sigma_{\max}^2)}}
    -1
    \right).
  \end{align*}
\end{prop}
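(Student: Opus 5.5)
Since $\mu_\data=0$, the mean contribution to the one-dimensional Fréchet error \eqref{eq:gaussian_expansion_Fréchet} vanishes. The plan is therefore to make $\Delta^{\Sigma,[1]}(\lambda)$ vanish by cancelling the integrand of its unintegrated form, exactly as in the proof of Proposition~\ref{prop:opt_sigma_ve}.

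\textbf{Reduce the integrand to a Riccati condition.} I start from the form derived in Appendix~\ref{app:mean_cov_Gaussian} before the integration by parts:
\[
\Delta^{\Sigma,[1]}(\lambda)=-\lambda\int_0^T N_s(\lambda)^\alpha\Big(\dot A_s+\dot B_s(\lambda)+2\big(A_s+B_s(\lambda)\big)^2-Q_s(\lambda)^2\Big)\,ds .
\]
With $\alpha=0$ we have $N^\alpha=1$ and $Q_s=A_s+B_s(\lambda)$. The integrand then reduces to $\dot Q_s+Q_s^2$, so it suffices to impose the Riccati equation $\dot Q_t+Q_t^2=0$ on $[0,T]$. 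Working with this form, rather than the large-$\sigma_{\max}$ simplification, sidesteps the boundary-term issue flagged in the remark after Proposition~\ref{prop:opt_sigma_ve}.

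\textbf{Linearize.} I use the standard substitution $Z_t\eqdef \exp\big(\int_0^t Q_u\,du\big)$, for which $\dot Q+Q^2=0$ is equivalent to $\ddot Z_t=0$. I then identify $Z_t$ explicitly. Since
\[
A_t=\frac{\dot\eta_t}{\eta_t}=-\frac{\sigma_t\dot\sigma_t}{c+\sigma_t^2},\qquad B_t=\frac{\sigma_t\dot\sigma_t}{\lambda+\sigma_t^2},
\]
we get $Q_t=\frac{d}{dt}\log\big(\eta_t\sqrt{\lambda+\sigma_t^2}\big)$. Normalizing by the value at $t=0$, where $\sigma_0=0$ and $\eta_0=1$, gives
\[
Z_t=z_t\eqdef\sqrt{\frac{c(\lambda+\sigma_t^2)}{\lambda(c+\sigma_t^2)}} .
\]
The condition $\ddot z_t=0$ means $z_t$ is affine. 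The boundary values $z_0=1$ and $z_T=\sqrt{c(\lambda+\sigma_{\max}^2)/(\lambda(c+\sigma_{\max}^2))}$ then give exactly the affine $z_t$ in the statement.

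\textbf{Invert for $\sigma_t$.} Squaring the definition of $z_t$ gives $z_t^2\lambda(c+\sigma_t^2)=c(\lambda+\sigma_t^2)$, that is, $\sigma_t^2(\lambda z_t^2-c)=c\lambda(1-z_t^2)$. This yields the claimed formula.

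\textbf{Main obstacle: well-posedness of the inversion.} I must check that $\lambda z_t^2-c\neq0$ and that $\sigma_t^2\ge0$ for $t>0$. Since $z_t$ moves monotonically from $1$ toward $z_T$, and $z_T^2$ lies strictly between $1$ and $c/\lambda$, both $1-z_t^2$ and $\lambda z_t^2-c$ keep constant, compatible signs. The argument splits into the two cases $\lambda<c$ and $\lambda>c$. The degenerate case $\lambda=c$ is trivial: there $Q\equiv0$, and any schedule works.

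I would also verify that the resulting $\sigma_t$ has enough regularity for Assumption~\ref{ass:regularity_sched}(i). As in the VE remark, $\sigma_t$ behaves like $\sqrt t$ near $0$, so only part (i) is needed and not part (ii).
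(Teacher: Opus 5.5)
Your proposal is correct and follows essentially the same route as the paper: with $\alpha=0$ the integrand reduces to $\dot Q_s+Q_s^2=\ddot z_s/z_s$ for $z_t=\sqrt{c(\lambda+\sigma_t^2)/(\lambda(c+\sigma_t^2))}$. Cancelling it therefore forces $z_t$ to be affine, and inverting gives the stated $\sigma_t^*$. Your additional check that $1-z_t^2$ and $\lambda z_t^2-c$ have constant, compatible signs (so that $\sigma_t^2\ge 0$ and the inversion is well defined) is a welcome detail that the paper omits.
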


  \begin{proof}
    Choose a constant $c>0$ and set
    \[
      \eta_t = \left(\frac{c}{c+\sigma_t^2}\right)^{1/2}.
    \]
    Assume $\alpha=0$. In one dimension, the centered first-order Fr\'echet error is proportional to $\Delta^{\Sigma,[1]}(\lambda)^2$, so it is minimized when $\Delta^{\Sigma,[1]}(\lambda)=0$. For this rescaling family,
    \[
      A_t = -\frac{\sigma_t\dot\sigma_t}{c+\sigma_t^2},
      \qquad
      B_t(\lambda) = \frac{\sigma_t\dot\sigma_t}{\lambda+\sigma_t^2}.
    \]
    Proposition~\ref{prop:mean_cov_Gaussian} then gives
    \begin{align*}
      \Delta^{\Sigma,[1]}(\lambda)
      = -\lambda \int_0^T \Big( \dot A_s + \dot B_s(\lambda) + \big(A_s+B_s(\lambda)\big)^2 \Big)\, ds.
    \end{align*}
    Define
    \[
      z_t
      \eqdef
      \sqrt{\frac{c(\lambda+\sigma_t^2)}{\lambda(c+\sigma_t^2)}}.
    \]
    Then
    \[
      \frac{\dot z_t}{z_t} = A_t + B_t(\lambda).
    \]
    Therefore
    \begin{align*}
      \Delta^{\Sigma,[1]}(\lambda)
      = -\lambda \int_0^T \frac{\ddot z_s}{z_s}\, ds.
    \end{align*}
    Hence cancelling the full integrand is equivalent to
    \begin{align*}
      \ddot z_t = 0.
    \end{align*}
    Therefore $z_t$ is affine. Using $\sigma_0=0$ and $\sigma_T=\sigma_{\max}$ yields
    \begin{align*}
      z_t^\star
      =
      1
      +
      \frac{t}{T}
      \left(
      \sqrt{\frac{c(\lambda+\sigma_{\max}^2)}{\lambda(c+\sigma_{\max}^2)}}
      -1
      \right)
      = z_t.
    \end{align*}
    Since
    \[
      z_t^2 = \frac{c(\lambda+\sigma_t^2)}{\lambda(c+\sigma_t^2)},
    \]
    solving for $\sigma_t^2$ gives
    \begin{align*}
      (\sigma_t^*)^2
      =
      \frac{c\lambda(1-z_t^2)}{\lambda z_t^2 - c}.
    \end{align*}
    When $\sigma_{\max}=\infty$, this reduces to
    \begin{align*}
      z_t
      =
      1
      +
      \frac{t}{T}
      \left(
      \sqrt{\frac{c}{\lambda}}
      -1
      \right).
    \end{align*}
    Finally, when $\lambda=c$, one has $z_t\equiv 1$, hence $\Delta^{\Sigma,[1]}(c)=0$ for every schedule.
  \end{proof}

    \begin{rem}
   Note again that the above $\sigma_t^*$ is obtained by canceling the integrand from the first-order covariance error~\eqref{eq:cov_error_gauss} and not using the simplification~\eqref{eq:ve_cov_error_ibp} obtained under $\sigma_{\max} \to \infty$ using Assumption~\ref{ass:regularity_sched}(ii). As for VE, as $t \to 0$, the above solution~\eqref{eq:sigma_t_star_proof} behaves as $\sigma_t = O(\sqrt{t})$ and thus does not satisfy the assumption $\xi_t = \dot \sigma_t \sigma_t \to 0$. 
  \end{rem}

\section{On the relation with Lipschitz constant optimization~\cite{chen2025lipschitz}}
\label{app:Lipschitz}

To control the discretization error of ODE diffusion sampling ($\alpha = 0$), \cite{chen2025lipschitz} proposed optimizing parameters by minimizing an average squared local Lipschitzness of the drift along the trajectory:
\begin{align}
\label{eq:Lip_obj}
    \min \int_0^T \EE\!\left[\norm{\nabla v_s(Y_s)}^2\right] ds .
\end{align}

\paragraph{Connection with the general weak error formula.}
The objective~\eqref{eq:Lip_obj} is a natural surrogate for our weak error expansion~\eqref{eq:weak_ODE}. Indeed, defining
\[
L_s(Y_s) \coloneqq \|\nabla v_s(Y_s)\|,
\]
we have the pointwise bounds
\[
\|(v_s\cdot\nabla)v_s(Y_s)\|
\leq
L_s(Y_s)\,\|v_s(Y_s)\|,
\qquad
\|J_{t,s}(Y)\|
\leq
\exp\!\left(\int_s^t L_\tau(Y_\tau)\,d\tau\right).
\]
Thus, decreasing the local Jacobian norm can reduce both the local acceleration term and its amplification by the flow.

However, this Lipschitz-type quantity does not capture the time-derivative contribution \(\partial_s v_s\) appearing in the material derivative of the velocity. Moreover, the above inequalities are norm-based upper bounds: they replace the multi-dimensional geometric action of \(\nabla v_s\) and \(J_{t,s}\) by operator-norm estimates. Consequently, the Lipschitz objective may be conservative. In particular, the first-order weak error can vanish even when the local Lipschitz factor is not small.

\paragraph{Comparison in the Gaussian case.}
This distinction becomes even more explicit in the Gaussian setting. For \(\alpha=0\), the first-order covariance error in Proposition~\ref{prop:mean_cov_Gaussian} reads
%, with
% \[
% A_s = \frac{\dot\eta_s}{\eta_s},
% \qquad
% B_s(\lambda)=\frac{\sigma_s\dot\sigma_s}{\lambda+\sigma_s^2},
% \]
% as
\begin{align*}
    \Delta^{\Sigma,[1]}(\lambda)
    &=
    -\lambda \big[A_s+B_s(\lambda)\big]_0^T
    -\lambda \int_0^T
    \big(A_s+B_s(\lambda)\big)^2 ds .
\end{align*}
For \(\alpha=0\), the linear drift~\eqref{eq:vt_linear} has the form
\[
v_t(x)=H_t x+r_t,
\qquad
H_t = U \diag{\Delta^H_t(\lambda)} U^\top,
\]
where
\begin{align*}
\Delta^H_{T-t}(\lambda)
&=
-\frac{\dot\eta_t}{\eta_t}
-\frac{\dot\sigma_t\sigma_t}{\lambda+\sigma_t^2}  = 
-\big(A_t+B_t(\lambda)\big).
\end{align*}
Therefore,
\begin{align*}
    \Delta^{\Sigma,[1]}(\lambda)
    &=
    -\lambda \big[\Delta^H_s(\lambda)\big]_0^T
    -\lambda \int_0^T
    \big(\Delta^H_s(\lambda)\big)^2 ds .
\end{align*}
Under the endpoint conditions in Assumption~\ref{ass:regularity_sched}(ii), and in the large terminal-noise regime \(\sigma_T\to\infty\), the boundary term vanishes, yielding
\begin{align*}
    \Delta^{\Sigma,[1]}(\lambda)
    &=
    -\lambda \int_0^T
    \big(\Delta^H_s(\lambda)\big)^2 ds
    + o(1).
\end{align*}
Hence the covariance contribution to the target Fréchet objective~\eqref{eq:gaussian_expansion_Fréchet} becomes
\begin{align*}
    \min
    \sum_i
    \lambda_i
    \left(
    \int_0^T
    \big(\Delta^H_s(\lambda_i)\big)^2 ds
    \right)^2 .
\end{align*}

By contrast, in the Gaussian case the averaged Lipschitz objective~\eqref{eq:Lip_obj} reduces to
\begin{align*}
    \min
    \int_0^T \lambda_{\max}(H_s)^2 ds
    =
    \min
    \int_0^T
    \big(\Delta^H_s(\lambda_{\min})\big)^2 ds .
\end{align*}
Thus, the Lipschitz objective is governed only by the most contractive spectral direction, corresponding to \(\lambda_{\min}\). Our Fréchet objective, instead, depends on the whole spectrum through the weights \(\{\lambda_i\}\). In this sense, our criterion is more geometry-aware: it accounts for the full covariance structure rather than only the extremal eigenvalue controlling the operator norm.

% Note that this does not hold for more general $\alpha \geq 0$, for which denoting $Q_t(\lambda) = A_t + (1+\alpha) B_t(\lambda)$ 
%   \begin{align*}
%     \Delta^{\Sigma,[1]}(\lambda)
%     &=
%     -\lambda \int_0^T
%     \left( \frac{\lambda}{\lambda + \sigma_s}\right)^\alpha \Big(
%       \dot A_s + \dot B_s(\lambda)
%       + 2\big(A_s+B_s(\lambda)\big)^2
%       - Q_s(\lambda)^2
%     \Big)\, ds.
%   \end{align*}
%   and
%    \begin{align*}
% \Delta^H_{T-t} &= - \frac{\dot \eta_{t}}{\eta_t} - (1+\alpha) \eta_t^2 \dot \sigma_t \sigma_t \frac{1}{\lambda_t} \\
%   &= - \frac{\dot \eta_{t}}{\eta_t}- (1+\alpha)\frac{\dot \sigma_t \sigma_t}{\lambda + \sigma_t^2} \\
%   &= - Q_t(\lambda)
%  \end{align*}

\section{Experiments}

\subsection{Experimental setup}
\label{app:details_expe}

For empirical image sampling, we use the DeepInverse library~\citep{tachella2025deepinverse} and 
the diffusion models from~\citep{karras2022elucidating}, trained on CIFAR-10, FFHQ, and ImageNet. The CIFAR-10 model generates $32 \times 32$ images, whereas the FFHQ and ImageNet models generate $64 \times 64$ images. The CIFAR-10 and FFHQ models use the NCSN architecture from~\citep{song2019generative}, while the ImageNet model uses the ADM architecture~\citep{dhariwal2021diffusion}. 

Each model was trained with a specific rescaling schedule $\eta_t$---VE for CIFAR-10 and FFHQ, and VP for ImageNet. However, at each model evaluation, we re-normalize the input and output to match our target rescaling $\eta_t$. We assume that the choice of $\sigma_t$ used during training does not affect the quality of the score estimate. We use $\sigma_{\max} = 80$ during sampling (which is smaller or equal to the maximum training value).

To estimate FID, we use $30{,}000$ generated images and $30{,}000$ reference images. Uncertainty is reported as $95\%$ bootstrap confidence intervals, computed using $256$ resamples of the generated and reference features.

Figure~\ref{fig:empiricalspectrum} (left) shows the spectra of the empirical covariance estimated on each dataset. 

\begin{figure}[h]
    \centering
    \begin{subfigure}[t]{0.45\linewidth}
    \centering
    \includegraphics[width=\linewidth]{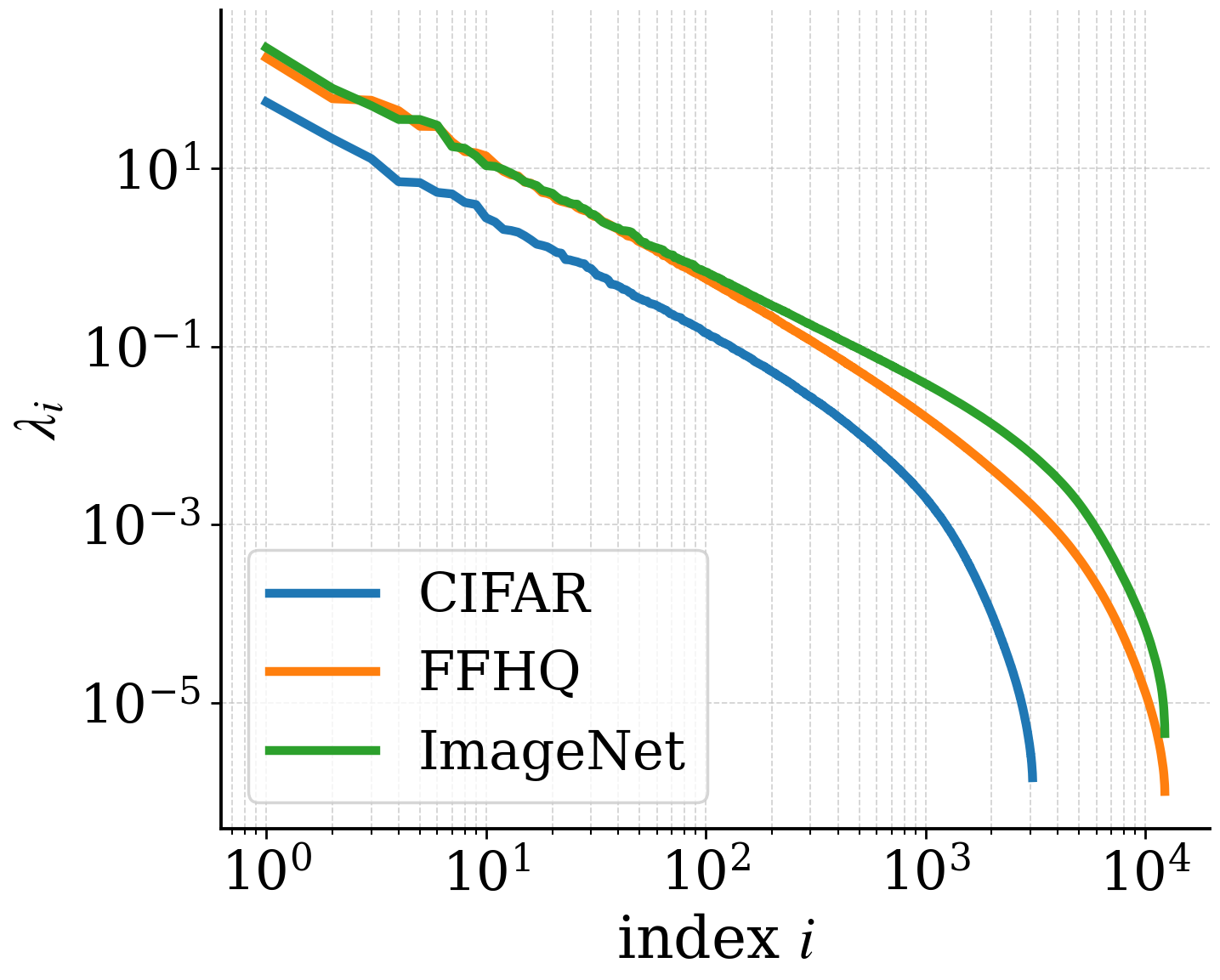}
     \caption{raw datasets}
     \end{subfigure}
     \hfill
    \begin{subfigure}[t]{0.45\linewidth}
    \centering
    \includegraphics[width=\linewidth]{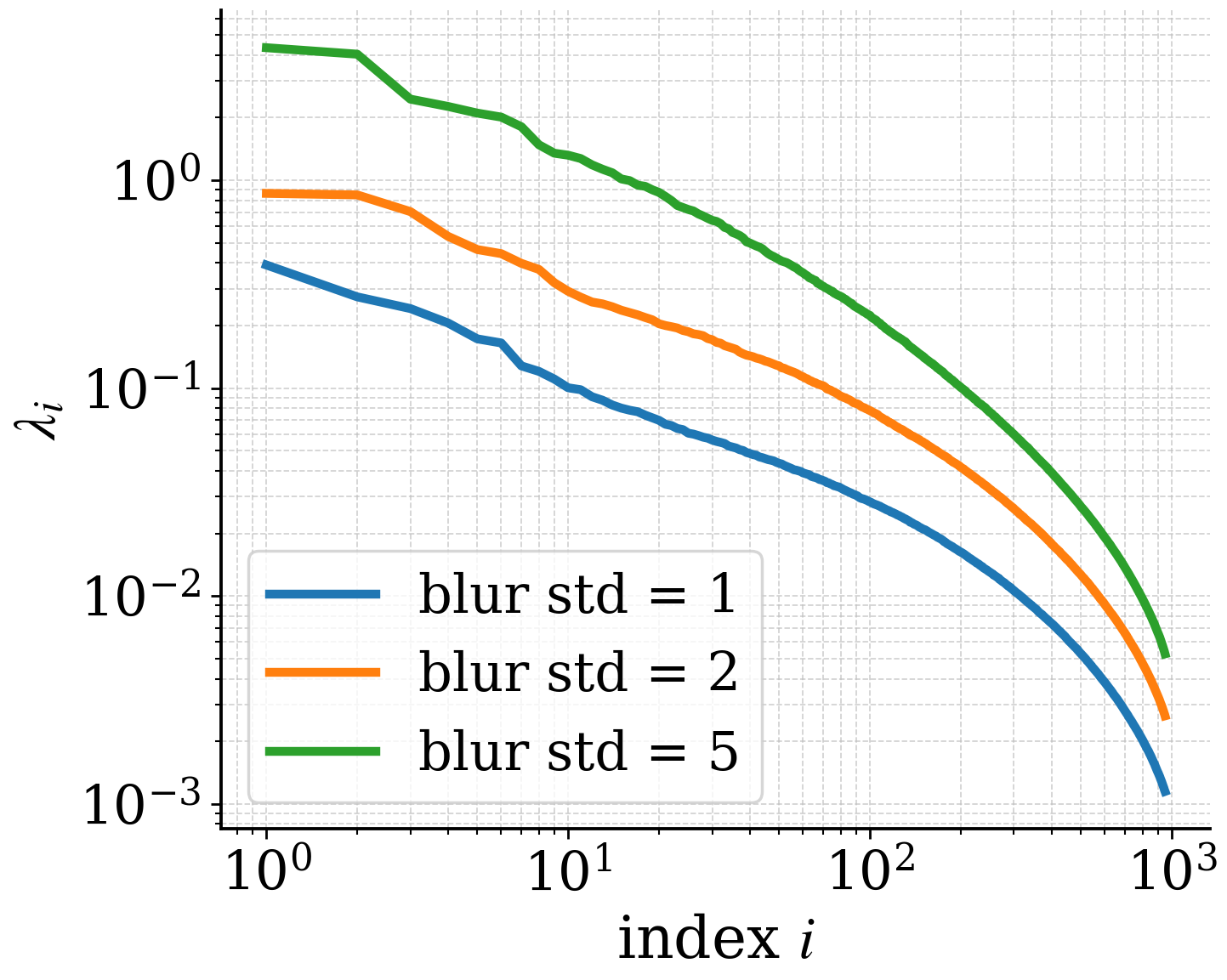}
     \caption{posterior distributions}
     \end{subfigure}
     \caption{Empirical covariance spectra used in the experiments: raw image datasets for unconditional sampling (a), and FFHQ posterior distributions obtained with Gaussian blur of varying standard deviation (b).}
    \label{fig:empiricalspectrum}
\end{figure}

All experiments are performed on single NVIDIA H100 or A100 GPUs.

\subsection{Posterior sampling experiment}
\label{app:posterior_sampling}

In Section~\ref{sec:opt_params}, we also consider posterior sampling problems of the form $p(x \mid y)$, where $y = k * x + w$ is a blurred and noisy observation of an FFHQ image $x$ (see Figure~\ref{fig:alpha_posterior_images}); here $k$ is a Gaussian blur kernel with varying standard deviation and $w$ is Gaussian noise with standard deviation $0.05$.

To sample from $p(x \mid y)$ without retraining a dedicated posterior model, we use the moment-matching method of~\citet{rozet2024learning}. At each step of the sampling algorithm~\eqref{eq:disc}, in order to compute the score of the posterior
$$ \nabla_{x_t} \log p(x_t |y) = \nabla_{x_t} \log p(x_t) + \nabla_{x_t} \log p(y | x_t) $$ using a single pretrained score model for $\nabla \log p(x_t)$, this method proposes an approximation of the term 
\begin{align*}
\nabla_{x_t} \log p(y | x_t) &= \nabla_{x_t} \log \int p(y |x_0) p(x_0 | x_t) dx_0 \\
&= \frac{\int p(y |x_0) \nabla_{x_t} p(x_0 | x_t) dx_0}{p(y | x_t) }
\end{align*}
They propose to approximate $p(x_0 | x_t)$ in the above identity by a Gaussian $\mathcal{N}(\EE[x_0 | x_t], \cov[x_0| x_t])$ with mean and covariance given by the first and second-order Tweedie formulas. See~\citet{rozet2024learning} for more details. Note that this approximation is exact when the data distribution $p_\data$ is Gaussian. In practice, computing
$\nabla_{x_t}\log p(y\mid x_t)$ requires at each sampling step to compute the Jacobian of the score model and to compute the inverse of the
moment-matched measurement covariance. This covariance inversion is handled only approximately with a single conjugate-gradient step. 

Because posterior sampling is substantially more expensive, we estimate the empirical Fr\'echet distance from $1{,}000$ generated samples. For each compared parameter value ($c_\eta$ in Section~\ref{sec:opt_params}), the reference statistics are computed from posterior samples generated with the same parameter $c_\eta$, using $500$ discretization steps. For the theoretical curves, we use a single reference mean and covariance estimated by aggregating posterior reference samples across the entire $c_\eta$ range. The corresponding empirical spectra are plotted in Figure~\ref{fig:empiricalspectrum} (right).

% How FID is computed : 1000 samples 

\subsection{More details on the optimal diffusion-term parameter $\alpha$}
\label{app:alpha_per_eig}

We show in Figure~\ref{fig:FID_alpha_FFHQ_eig} the empirical (a,b) and the theoretical (c) Fréchet errors, decomposed along each eigendirection of the data covariance $\Sigma_\data$. By Proposition~\ref{prop:mean_cov_Gaussian}, this decomposition is exact in the Gaussian case.
For the empirical errors, we project the empirical covariance of the generated samples onto the eigenbasis of the dataset (empirical) covariance. We report this per-eigendirection error using spectra extracted from Inception space in (a) and image space in (b). The theoretical curve in (c) is computed using the empirical dataset spectrum extracted in image space.

The spectral trend predicted by Proposition~\ref{prop:opt_alpha} and illustrated in (c) remains clearly visible in both empirical curves: directions associated with smaller eigenvalues are minimized at smaller values of~$\alpha$, whereas high-variance directions favor larger values of $\alpha$. As expected, the image-space Fréchet error agrees more closely with the theory, since both use the same reference spectrum. Although the Inception-space transformation is not represented in the theory, the same qualitative trend is still observed.

\begin{figure}[h]
    \centering
    \begin{subfigure}{0.32\linewidth}
    \centering
    \includegraphics[width=\linewidth]{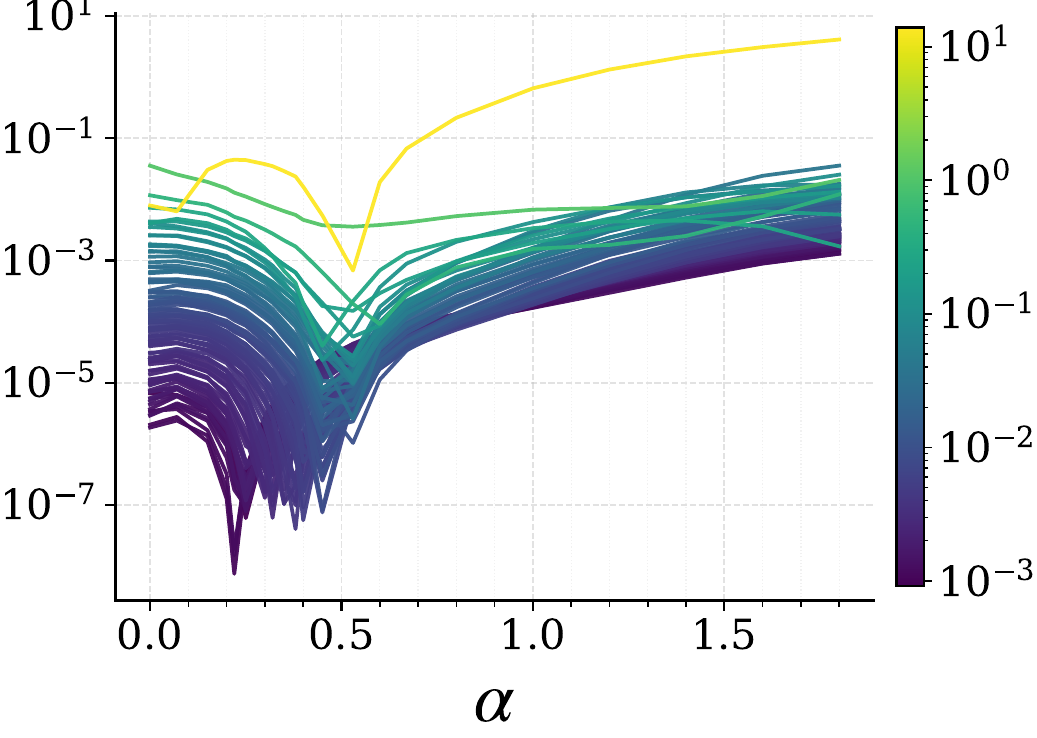}
    \caption{ \centering FFHQ sampling \\(inception space)}
     \end{subfigure}
     \begin{subfigure}{0.32\linewidth}
    \centering
    \includegraphics[width=\linewidth]{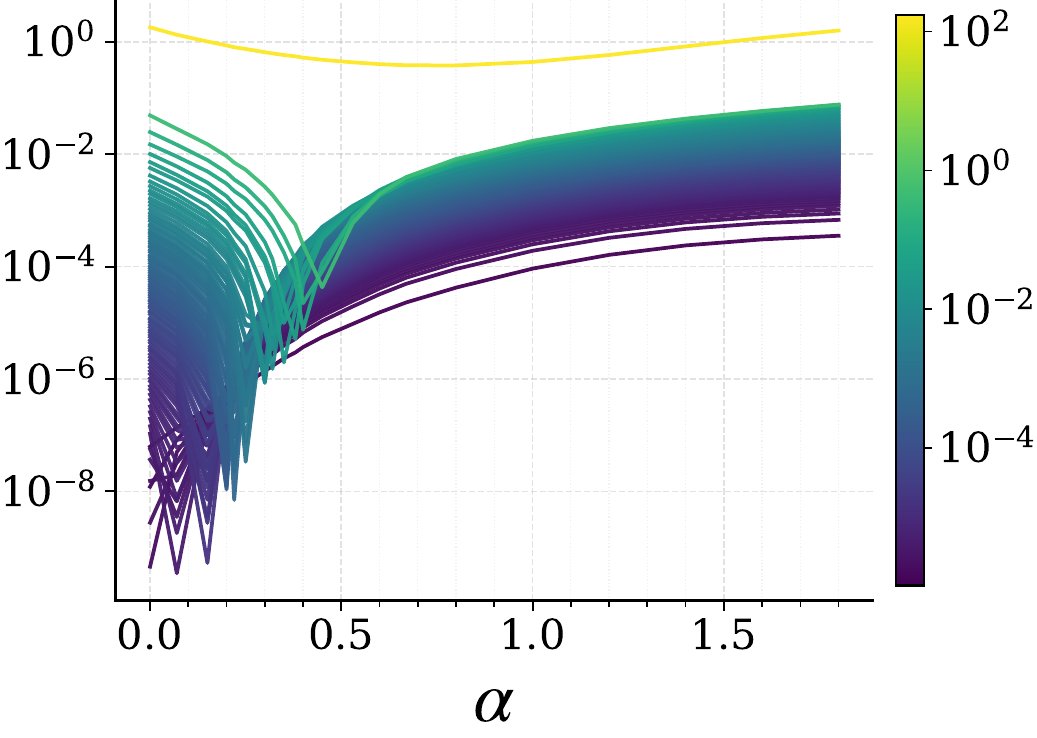}
    \caption{\centering FFHQ sampling \\ (image space)}
     \end{subfigure}
    \begin{subfigure}{0.32\linewidth}
    \centering
    \includegraphics[width=\linewidth]{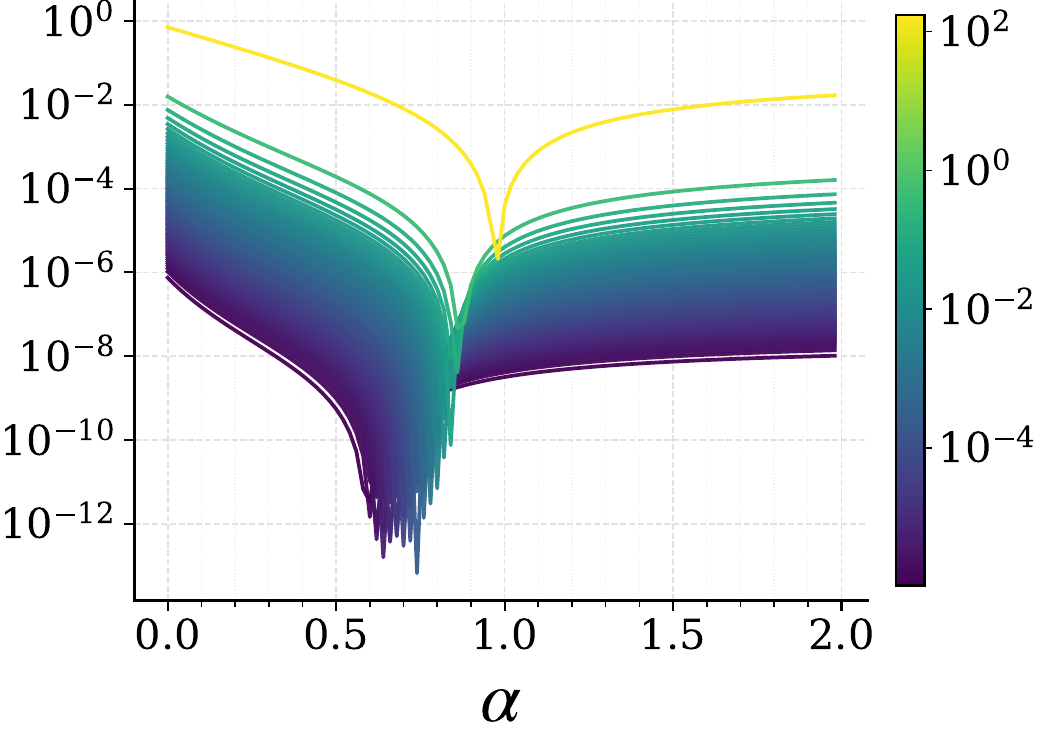}
    \caption{ \centering Gaussian theory \\ (image space)}
     \end{subfigure}
    \caption{Per-eigendirection Fréchet error w.r.t $\alpha$ for FFHQ sampling with VE and $K=100$. Empirical errors are projected onto the data covariance eigenbasis in Inception space (a) and image space (b); Gaussian theory is shown in image space (c).
    Colors indicate eigenvalues $\lambda_i$.}
    \label{fig:FID_alpha_FFHQ_eig}
    \vspace{-0.2cm}
\end{figure}

Figure~\ref{fig:alpha_images} shows examples of FFHQ samples generated with $K=100$ steps. Each column uses a different $\alpha$ and each row uses the same random seed. Note that $\alpha$ gives over-smoothed images that look close to the mean of the FFHQ dataset. Increasing $\alpha$ increases variability and eventually creates visual artifacts for large $\alpha$. As explained in Section~\ref{sec:opt_params}, this visual behavior can be explained by our first-order Gaussian theoretical error: using Proposition~\ref{prop:mean_cov_Gaussian} the generated variance along an eigendirection $\lambda$ is:
$$\operatorname{Var}_{\lambda}(Y_K)  = \lambda + \gamma \Delta^{\Sigma, [1]}(\lambda, \alpha) + o(\gamma)$$
For VE, using~\eqref{eq:ve_cov_error_beta}, $\Delta^{\Sigma, [1]}(\lambda, \alpha)$ is non-decreasing in $\alpha$, negative for $0 \leq \alpha < 1$ and positive for $\alpha > 1$. Thus, we get that $\alpha = 0$ minimizes the generated variance $\operatorname{Var}_{\lambda}(Y_K)$ in every direction $\lambda$, and thus produces samples that are closer to the mean.

\begin{figure}[h]
\centering
\setlength{\tabcolsep}{1pt}
\renewcommand{\arraystretch}{0.9}
\begin{tabular}{@{}ccccc@{}}
$\alpha=0$ & $\alpha=0.1$ & $\alpha=0.25$ & $\alpha=0.75$ &$\alpha=1.$ \\
\includegraphics[width=0.18\linewidth]{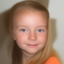} &
\includegraphics[width=0.18\linewidth]{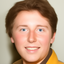} &
\includegraphics[width=0.18\linewidth]{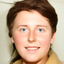} &
\includegraphics[width=0.18\linewidth]{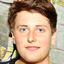} &
\includegraphics[width=0.18\linewidth]{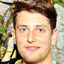} 
\\
\includegraphics[width=0.18\linewidth]{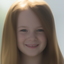} &
\includegraphics[width=0.18\linewidth]{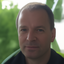} &
\includegraphics[width=0.18\linewidth]{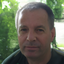} &
\includegraphics[width=0.18\linewidth]{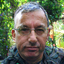} &
\includegraphics[width=0.18\linewidth]{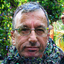} \\
\includegraphics[width=0.18\linewidth]{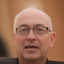} &
\includegraphics[width=0.18\linewidth]{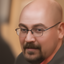} &
\includegraphics[width=0.18\linewidth]{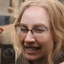} &
\includegraphics[width=0.18\linewidth]{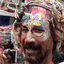} &
\includegraphics[width=0.18\linewidth]{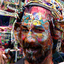}
\end{tabular}
\caption{FFHQ samples for increasing $\alpha$ with VE and $K=100$ steps. Small $\alpha$ yields smoother, mean-seeking samples, while larger $\alpha$ increases variability.}
\label{fig:alpha_images}
\vspace{-0.25cm}
\end{figure}

\subsection{Variance Exploding with $\alpha = 0$ is adequate for (posterior) mean estimation}
\label{app:mean_estim}

The above observation can be particularly insightful for posterior mean estimation. In the context of image inverse problems, diffusion models can be used to sample from conditional distributions $p(x|y)$ where $y$ is the observed degraded image. In this situation, the goal is often to estimate $\EE[x|y]$ rather than sampling from the full posterior. This mean is better estimated by averaging when samples concentrate as much as possible around it. In this section, we show that one can exploit the discretization error to make the samples collapse toward $\mu_{\data}$, i.e., to minimize
\begin{align*}
\mathcal V_K(\alpha)
\eqdef
\EE \|\hat Y_K-\mu_{\data}\|^2 =
\tr{\hat \Sigma_K} + \norm{\hat\mu_K-\mu_{\data}}^2.
\end{align*}
Note that $\mathcal V_K$ combines the effect of the residual spread of the sampled law around its own mean 
and the bias of the sampled mean relative to the target mean.
% Let
% \[
% m_i := u_i^\top \mu_{\data},
% \qquad 1 \le i \le d.
% \]
Under the Gaussian data assumption, using the notations of Proposition~\ref{prop:mean_cov_Gaussian}, we get the following expansion:
% \begin{align}
% \tr{\hat\Sigma_K}
% =
% \tr{\Sigma_{\data}}
% -
% \gamma \sum_{i=1}^d \big(\lambda^{cov, (1)}_{T}\big)_i
% -
% \gamma^2 \sum_{i=1}^d \big(\lambda^{cov, (2)}_{T}\big)_i
% + O(\gamma^3),
% \end{align}
% \begin{align}
% \|\hat\mu_K-\mu_{\data}\|^2
% =
% \gamma^2 \sum_{i=1}^d \big(\lambda^{mean, (1)}_{T}\big)_i^2 (U^\top \mu_\data)_i^2
% + O(\gamma^3).
% \end{align}
% Therefore
\begin{equation}
\label{eq:mmse_expansion}
\mathcal V_K
=
\tr{\Sigma_{\data}}
+
\gamma \sum_{i=1}^d \Delta^{\Sigma,[1]}(\lambda_i)
+
\gamma^2
\left[
\sum_{i=1}^d \Delta^{\mu,[1]}(\lambda_i)^2 (U^\top \mu_\data)_i^2
+
\sum_{i=1}^d \Delta^{\Sigma,[2]}(\lambda_i)
\right]
+ O(\gamma^3).
\end{equation}
Equation~\eqref{eq:mmse_expansion} shows that, when the above first and second order coefficients are negative, discretization error (i.e. using a positive stepsize $\gamma$) can reduce the distance to the mean.
At first order in $\gamma$, this distance is only controlled by the covariance error, and the mean error contributes only at order
$\gamma^2$. At first order, the parameters that minimize $\mathcal V_K$ are thus those that minimize the total first-order covariance correction $\sum_{i=1}^d \Delta^{\Sigma,[1]}(\lambda_i).$

In the Variance Exploding case, \eqref{eq:ve_cov_error_beta} shows that the first-order covariance correction is minimized at $\alpha=0$. Hence, within the VE family, the deterministic reverse ODE is the best first-order choice for concentrating samples around the data mean. Compared to Variance Preserving (VP), for a fixed noise schedule, if $\lambda_i \leq 1$ for all $i$, then direct comparison of \eqref{eq:ve_cov_error_ibp} and \eqref{eq:vp_error_sigma} shows that VE with $\alpha=0$ yields a larger first-order covariance correction than any VP choice $\alpha \geq 0$, and therefore a smaller first-order value of $\mathcal{V}_K$. Moreover, for VE (and not for VP), the mean error $\Delta^{\mu,[1]} = 0$ which also cancels the second-order mean effect. 

To illustrate this, we perform deblurring posterior sampling using the Moment Matching method described in Section~\ref{app:posterior_sampling}. Figure~\ref{fig:alpha_posterior_images} shows posterior samples obtained with VE, using $K=50$ discretization steps and different values of $\alpha$. For small values of $\alpha$, the variability across samples is low: the samples are concentrated around the estimated posterior mean. In contrast, larger values of $\alpha$ produce substantially more variability across samples. Thus, $\alpha=0$ makes it possible to estimate the posterior mean with very few samples. However, this posterior mean tends to be smoother than the original image, which is a classical limitation of posterior-mean estimation in image restoration~\citep{blau2018perception,kawar2021stochastic}.
\vspace{-0.25cm}

\begin{figure}[h]
\centering
\setlength{\tabcolsep}{1pt}
\renewcommand{\arraystretch}{0.9}
\begin{tabular}{@{}ccccc@{}}
\multicolumn{1}{c}{Initial $x$} &
\multicolumn{1}{c}{Observation $y$} & && \\
\multicolumn{1}{c}{
\includegraphics[width=0.18\linewidth]{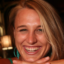}
} &
\multicolumn{1}{c}{
\includegraphics[width=0.18\linewidth]{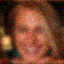}
} & & & \\[4pt]
$\alpha=0$ & $\alpha=0.1$ & $\alpha=0.2$ & $\alpha=0.3$ &$\alpha=0.4$ \\
\includegraphics[width=0.18\linewidth]{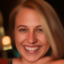} &
\includegraphics[width=0.18\linewidth]{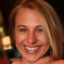}  &
\includegraphics[width=0.18\linewidth]{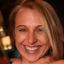} &
\includegraphics[width=0.18\linewidth]{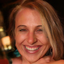}  &
\includegraphics[width=0.18\linewidth]{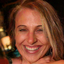} 
\\
\includegraphics[width=0.18\linewidth]{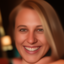} &
\includegraphics[width=0.18\linewidth]{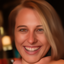}  &
\includegraphics[width=0.18\linewidth]{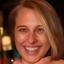} &
\includegraphics[width=0.18\linewidth]{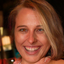}  &
\includegraphics[width=0.18\linewidth]{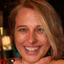} \\
\includegraphics[width=0.18\linewidth]{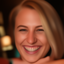} &
\includegraphics[width=0.18\linewidth]{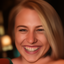}  &
\includegraphics[width=0.18\linewidth]{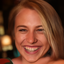} &
\includegraphics[width=0.18\linewidth]{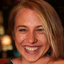}  &
\includegraphics[width=0.18\linewidth]{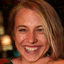} \\
\includegraphics[width=0.18\linewidth]{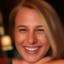} &
\includegraphics[width=0.18\linewidth]{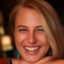}  &
\includegraphics[width=0.18\linewidth]{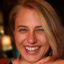} &
\includegraphics[width=0.18\linewidth]{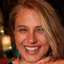}  &
\includegraphics[width=0.18\linewidth]{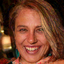} 
\end{tabular}
\caption{Deblurring posterior samples for increasing $\alpha$ with VE and $K=50$ steps. Each row corresponds to a different seed. The observation has been obtained by $y = k*x + w$ with $k$ a Gaussian blur kernel with standard deviation $1.5$ and $w$ a Gaussian noise with standard deviation $0.05$. See Section~\ref{app:posterior_sampling} for more details on the posterior sampling algorithm.}
\label{fig:alpha_posterior_images}
\vspace{-0.6cm}
\end{figure}

\subsection{More details on the optimal rescaling schedule $\eta_t$}
\label{app:c_per_eig}

Figure~\ref{fig:opt_c_per_eig} shows the empirical (a,b) and the theoretical (c) Fréchet errors from Figure~\ref{fig:opt_c_datasets} (top), but here decomposed along each eigendirection of the data covariance $\Sigma_\data$. See Section~\ref{app:alpha_per_eig} for more details on the per-$\lambda_i$ computations.

On the theoretical Figure (c), for each eigendirection, the curve is minimized near $c_\eta \approx \lambda_i$, consistently with the first-order cancellation condition~\eqref{eq:opt_rescaling} at $\alpha=0$. Thus, large-variance directions favor larger values of $c_\eta$, while low-variance directions favor smaller ones. This spectral ordering is also visible in the empirical decompositions (a,b), although the curves are noisier in Inception space. 

This non-uniform behavior across eigendirections also explains the evolution of the optimal $c^*$ observed Figure~\ref{fig:opt_c_datasets} for an anisotropic spectrum: datasets with larger effective scale favor larger optimal $c_\eta$, while a steeper spectrum shifts the normalized optimum $c_\eta/\lambda_{\max}$ toward smaller values.

\begin{figure}[h]
    \centering
    \begin{subfigure}{0.32\linewidth}
    \centering
    \includegraphics[width=\linewidth]{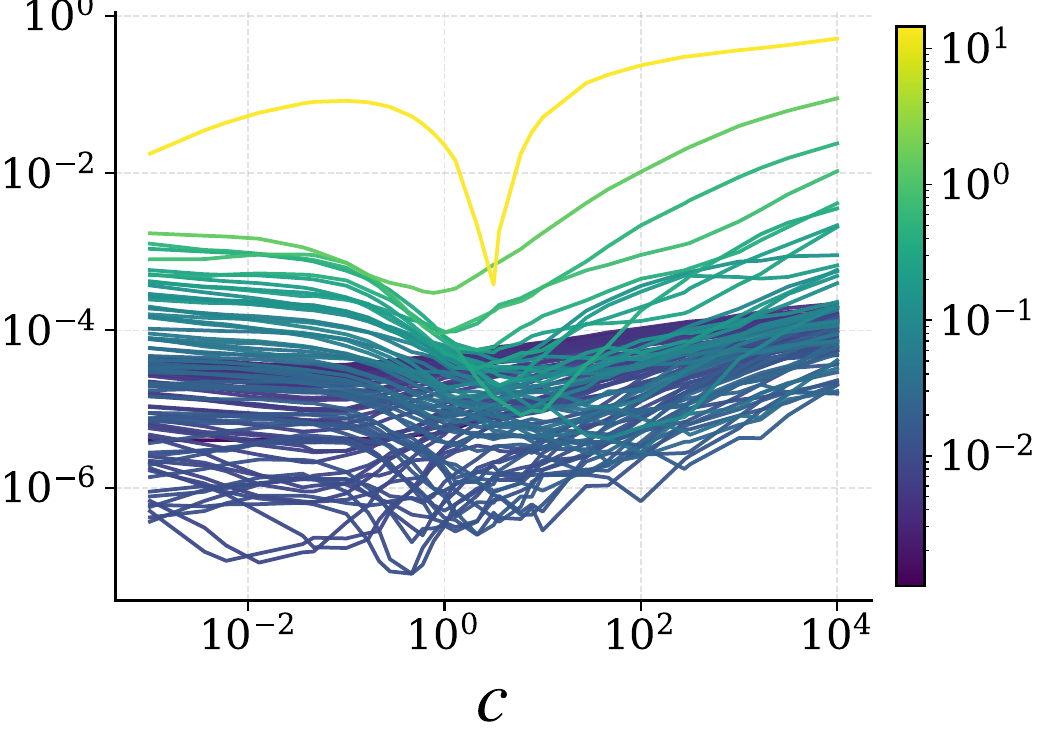}
    \caption{ \centering FFHQ sampling \\(inception space)}
     \end{subfigure}
     \begin{subfigure}{0.32\linewidth}
    \centering
    \includegraphics[width=\linewidth]{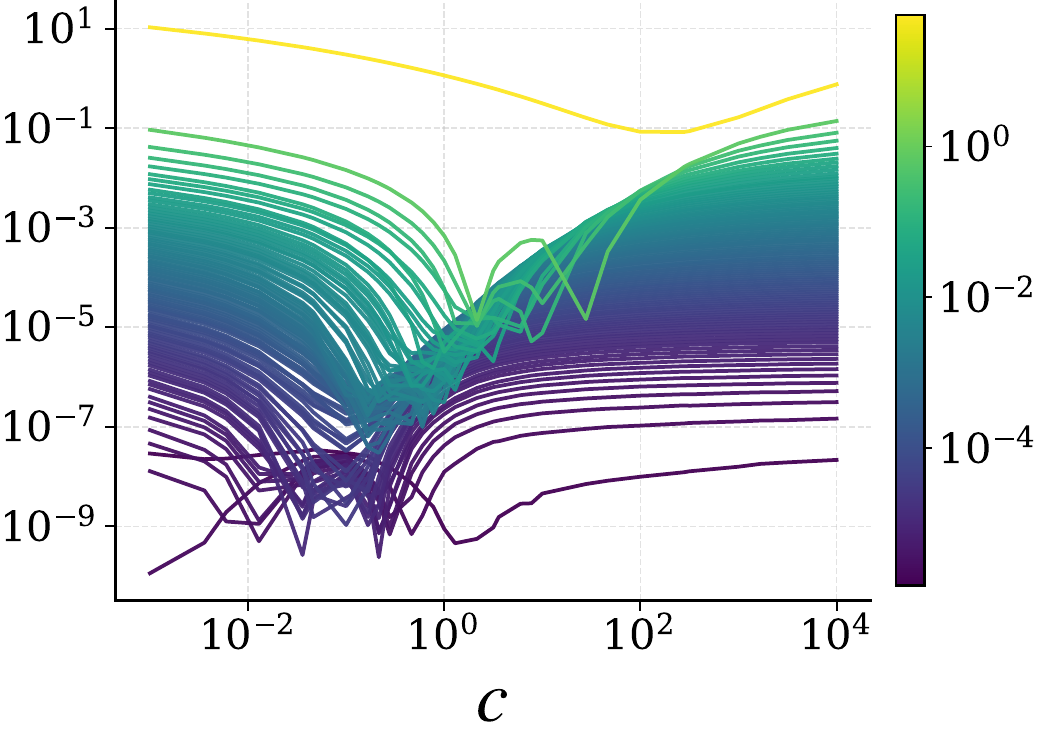}
    \caption{\centering FFHQ sampling \\ (image space)}
     \end{subfigure}
    \begin{subfigure}{0.32\linewidth}
    \centering
    \includegraphics[width=\linewidth]{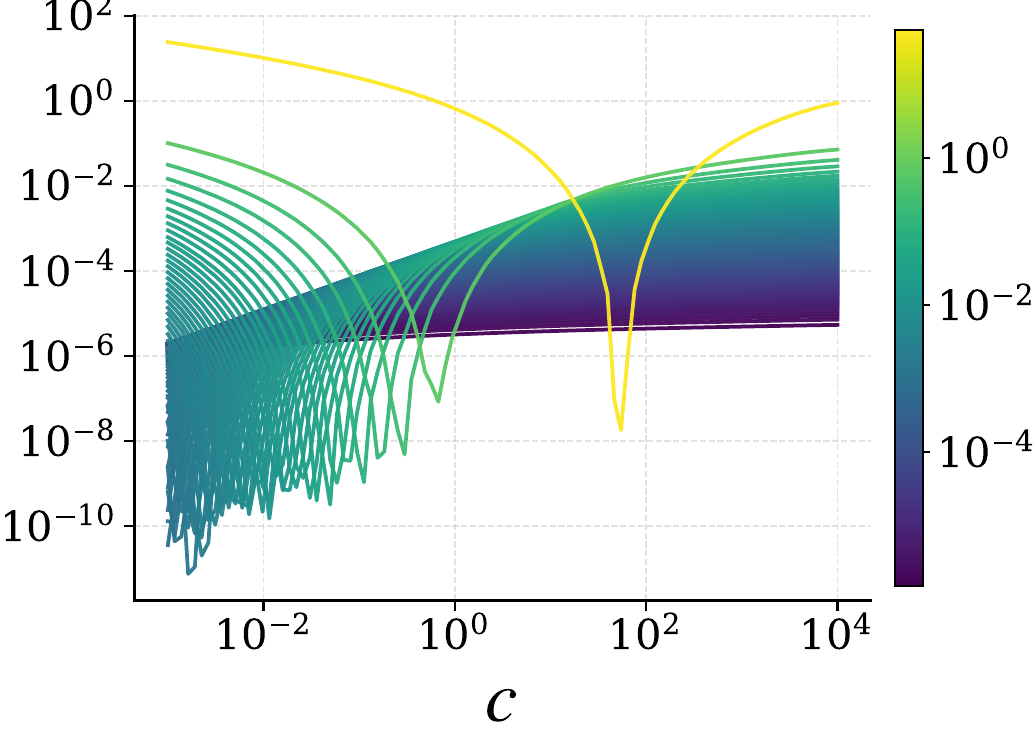}
    \caption{ \centering Gaussian theory \\ (image space)}
     \end{subfigure}
    \caption{Per-eigendirection Fréchet error w.r.t $c_\eta$ for FFHQ sampling with VE and $K=100$. Empirical errors are projected onto the data covariance eigenbasis in Inception space (a) and image space (b); Gaussian theory is shown in image space (c).
Colors indicate eigenvalues $\lambda_i$.}
    \label{fig:opt_c_per_eig}
\end{figure} \vspace{-0.25cm}

\subsection{More details on the optimal noise schedule $\sigma_t$}
\label{app:beta_fig}

Figure~\ref{fig:error_beta} complements the discussion in Section~\ref{sec:opt_params} by showing the dependence on the polynomial exponent $\beta$ together with its spectral decomposition. First, note the strong resemblance between the empirical and theoretical curves. In particular, both empirical and theoretical errors decrease sharply when moving away from the nearly linear schedule $\beta=1$, and then exhibit a broad minimum at moderate exponents. The location of this minimum changes only mildly across CIFAR-10, FFHQ, and ImageNet, matching the prediction of the Gaussian theoretical curves. Figures (c) and (d) also illustrate the discussion in Section~\ref{sec:opt_params} at the eigendirection level. For moderately large $\beta$, the per-eigendirection error varies only mildly across different $\lambda_i$ values. %This explains why the choice of $\beta$ remains robust across different spectra.
\vspace{-0.25cm}
\begin{figure}[h]
    \centering
    \begin{subfigure}{0.45\linewidth}
    \centering
\captionsetup{width=0.85\linewidth,justification=centering}
    \includegraphics[width=0.85\linewidth]{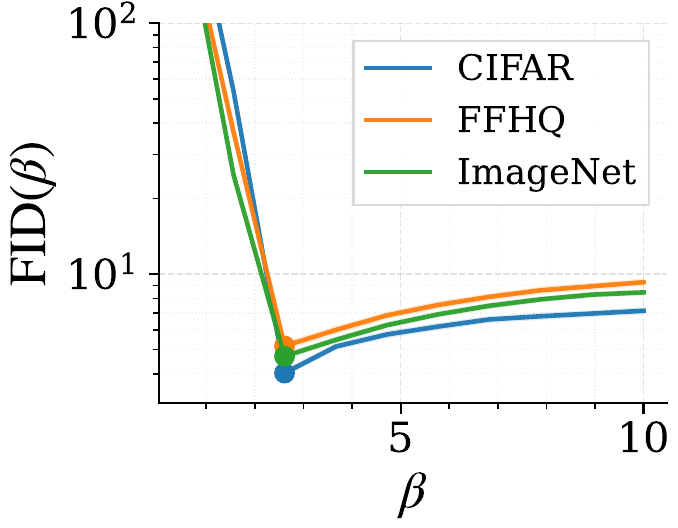}
    \caption{Per-dataset empirical FID}
    \end{subfigure}
    \begin{subfigure}{0.45\linewidth}
    \centering
\captionsetup{width=0.85\linewidth,justification=centering}
    \includegraphics[width=0.85\linewidth]{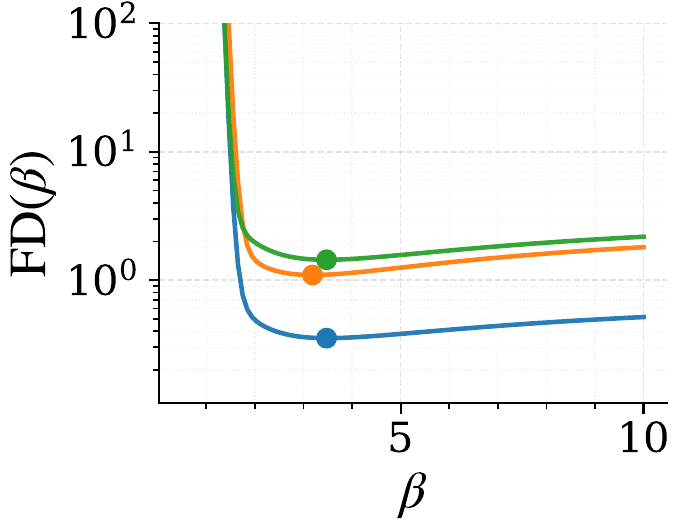}
    \caption{Per-dataset theoretical Gaussian FD}
     \end{subfigure} \\
     \vspace{0.5cm}
     \hspace{0.7cm}
     \begin{subfigure}{0.45\linewidth}
    \centering
    \includegraphics[width=\linewidth]{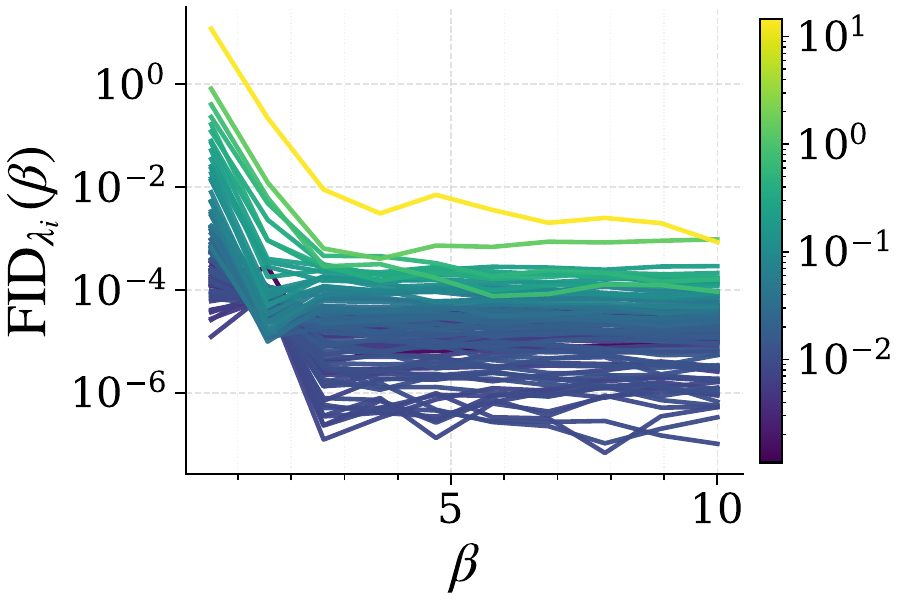}
     \caption{Per-eigendirection empirical FID}
     \end{subfigure}
     \begin{subfigure}{0.45\linewidth}
    \centering
    \includegraphics[width=\linewidth]{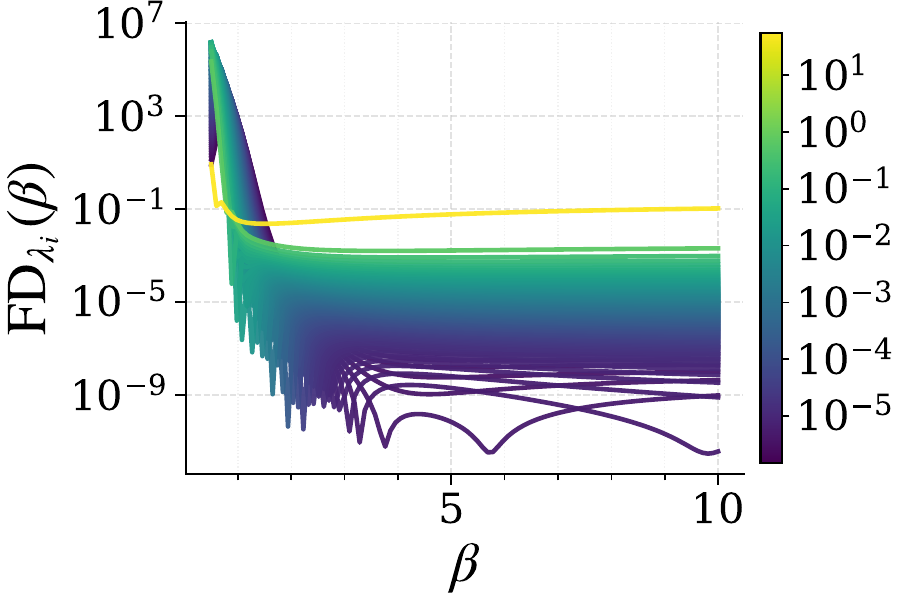}
     \caption{Per-eigendirection theoretical Gaussian FD}
     \end{subfigure}
	     \caption{Discretization error as a function of the polynomial noise exponent $\beta$ for image sampling across CIFAR-10, FFHQ, and ImageNet, using the variance exploding schedule with $\alpha=0.25$ and $K=100$ discretization steps: empirical FID in (a) and the corresponding Gaussian theory Fréchet Distance (FD) in (b). We show below the corresponding errors split per eigendirection $\lambda_i$.}
	     \label{fig:error_beta}
\end{figure}

\end{document}

%% file: macro.tex
\usepackage{hyperref}       % hyperlinks
\usepackage{url}            % simple URL typesetting
\usepackage{booktabs}       % professional-quality tables
\usepackage{amsfonts}       % blackboard math symbols
\usepackage{amsmath}
\usepackage[utf8]{inputenc} % allow utf-8 input
\usepackage[T1]{fontenc}    % use 8-bit T1 fonts
\usepackage{amssymb}
\usepackage{nicefrac}       % compact symbols for 1/2, 
\usepackage{microtype}      % microtypography
\usepackage{xcolor}         % colors
\usepackage{caption}
\usepackage{subcaption}
\usepackage{amsfonts}
\usepackage{url}
\usepackage{amsthm}
\usepackage{mathtools}
\usepackage{xcolor}
\usepackage{stmaryrd}
\usepackage{hyperref}
\usepackage{empheq}
\usepackage{wrapfig}
\usepackage{graphicx}
\usepackage{enumitem}
\newtheorem{defn}{Definition}
\newtheorem{prop}{Proposition}
\newtheorem{lem}{Lemma}
\newtheorem{cor}{Corollary}
\newtheorem{thm}{Theorem}
\newtheorem{ass}{Assumption}
\newtheorem{rem}{Remark}

\newcommand{\norm}[1]{\left\|{#1}\right\|}
\newcommand{\tr}[1]{\operatorname{Tr}\left({#1}\right)}
\newcommand{\eqdef}{\coloneqq}
\newcommand{\id}{\operatorname{Id}}

\newcommand{\RR}{\mathbb{R}}
\newcommand{\EE}{\mathbb{E}}
\newcommand{\data}{\mathrm{data}}
\newcommand{\FD}{\operatorname{FD}}
\newcommand{\cov}{\operatorname{Cov}}

\newcommand{\diag}[1]{\operatorname{Diag}\left({#1}\right)}

\DeclareMathOperator*{\argmin}{arg\,min}
\definecolor{linkcolor}{RGB}{82, 82, 192}
\hypersetup{
    colorlinks=true,
    linkcolor=linkcolor,
    citecolor=linkcolor
}